\newtheorem{theorem}{Theorem}
\newtheorem{proposition}[theorem]{Proposition}
\newtheorem{lemma}[theorem]{Lemma}
\newtheorem{definition}[theorem]{Definition}
\journal{Neural Networks}
\begin{document}

\begin{frontmatter}



\title{Hypothesis Spaces for Deep Learning}


\author[1]{Rui Wang}
\ead{rwang11@jlu.edu.cn}
\affiliation[1]{organization={School of Mathematics, Jilin University},
            city={Changchun},
            postcode={130012}, 
            country={P. R. China}}

\author[2]{Yuesheng Xu\corref{cor1}}
\cortext[cor1]{This author is also a Professor Emeritus of Mathematics, Syracuse University, Syracuse, NY 13244, USA. All correspondence should be sent to this author.}
\ead{y1xu@odu.edu}  

\affiliation[2]{organization={Department of Mathematics and Statistics, Old Dominion University},
            city={Norfolk},
            postcode={23529}, 
            state={VA},
            country={USA}}

\author[2]{Mingsong Yan}
\ead{myan007@odu.edu}

\begin{abstract}
This paper introduces a hypothesis space for deep learning based on deep neural networks (DNNs). By treating a DNN as a function of two variables—the input variable and the parameter variable—we consider the set of DNNs where the parameter variable belongs to a space of weight matrices and biases determined by a prescribed depth and layer widths. To construct a Banach space of functions of the input variable, we take the weak* closure of the linear span of this DNN set. We prove that the resulting Banach space is a reproducing kernel Banach space (RKBS) and explicitly construct its reproducing kernel. Furthermore, we investigate two learning models—regularized learning and the minimum norm interpolation (MNI) problem—within the RKBS framework by establishing representer theorems. These theorems reveal that the solutions to these learning problems can be expressed as a finite sum of kernel expansions based on training data.
\end{abstract}



\begin{keyword}
Reproducing kernel Banach space \sep deep learning \sep deep neural network \sep representer theorem for deep learning



\end{keyword}

\end{frontmatter}


\section{Introduction}
Deep learning has achieved remarkable success across various applications. Mathematically, this success is underpinned by deep neural networks (DNNs)—multi-layered architectures that efficiently approximate complex functions, capture hierarchical structures, leverage depth for enhanced expressiveness, and benefit from effective optimization techniques.
The approximation capabilities of DNNs have been extensively studied in recent works \cite{daubechies2022nonlinear, huang2022error,li2024two, mhaskar2016deep, shen2022optimal,Xu_Multigrade2023,xu2022convergence, xu2024convergence, zhou2020universality}.
As pointed out in \cite{cucker2002mathematical}, learning processes do not take place in a vacuum. Classical learning methods are formulated within a reproducing kernel Hilbert space (RKHS) \cite{aronszajn1950theory}, where learning solutions are represented as a finite sum of kernel expansions in terms of training data \cite{scholkopf2001generalized}, utilizing universal kernels \cite{micchelli2006universal}. RKHSs serve as natural hypothesis spaces for classical learning, providing a rigorous mathematical  foundation for theoretical analysis. A fundamental and pressing question arises: what are the appropriate hypothesis spaces for deep learning? While recent studies have explored hypothesis spaces for shallow neural networks (networks of a single hidden layer) \cite{bartolucci2023understanding, chung2023barron, parhi2021banach, shenouda2024variation}, an analogous space for deep learning remains elusive.
This study aims to address this critical theoretical gap.

The construction of an appropriate hypothesis space for deep learning follows a systematic approach. We treat a DNN as a function of two variables: the input variable and the parameter variable. The set of DNNs is then considered as a collection of functions of the input variable, with the parameter variable ranging over the space of weight matrices and biases determined by the depth and layer widths of the network. By completing the linear span of this function set in the weak* topology, we construct a Banach space of functions of the input variable. We further establish that this resulting Banach space is a reproducing kernel Banach space (RKBS), where point evaluation functionals are continuous, and we construct an asymmetric reproducing kernel for the space, which depends on both the input and parameter variables. This RKBS is then proposed as the hypothesis space for deep learning. Notably, when deep neural networks reduce to shallow networks (i.e., with only one hidden layer), our hypothesis space coincides with the space for shallow learning studied in \cite{bartolucci2023understanding}.

In most existing approaches to  deep learning, the hypothesis space is implicitly defined as compositions of function spaces used to model shallow neural networks. However, the resulting space is not linear, lacking well-defined algebraic and topological structures. In contrast, we rigorously establish an RKBS as a hypothesis space for deep learning. This space not only possesses the necessary algebraic structure but also has a well-defined topology, making it suitable for analyzing learning methods. Moreover, the RKBS framework enables the representation of learning solutions through its reproducing kernel, which is identified with the DNN weighted by an appropriate function.

Upon introducing the hypothesis space for deep learning, we investigate two learning models, the  MNI problem and regularized learning within the resulting RKBS. We establish representer theorems for these learning models by leveraging the theory of the RKBS developed in \cite{xu2022sparse, xu2019generalized,  zhang2009reproducing} and the representer theorems for learning in general RKBSs established in \cite{ChengWangXu2023, wang2021representer, wang2023sparse}. 
Similar to classical representer theorems in RKHS-based learning, 
our results reveal that although these deep learning models operate in an infinite-dimensional setting, their solutions lie in finite-dimensional manifolds. More specifically, they can be expressed as a finite sum of kernel expansions based on training data. The representer theorems established in this paper are both kernel-based and data-dependent. Even when deep neural networks reduce to a shallow network, the corresponding representer theorem remains noval to the best of our knowledge. 
The proposed hypothesis space and the derived representer theorems provide valuable insights into deep learning, establishing a rigorous mathematical foundation that bridges the gap between empirical success and theoretical understanding. This foundation not only deepens our comprehension of deep learning mechanisms but also facilitates further advancements in model design, optimization, and generalization theory.

Representer theorems for deep learning have garnered significant interest in recent years; see, for example, \cite{bartolucci2023understanding,parhi2021banach,shenouda2024variation,bartolucci2024neural, bohn2019representer, parhi2022what,unser2019representer}. In \cite{bohn2019representer}, Bohn et al.\ considered the interpolation and regression problems within compositions of RKHSs and derived a concatenated representer theorem. This theorem characterizes learning solutions as a finite sum of kernel expansions based on training data.  Bartolucci et al.\ introduced an integral RKBS in \cite{bartolucci2023understanding} that 
models single-hidden-layer neural networks with scalar-valued output and established a representer theorem for learning in this RKBS framework. In \cite{bartolucci2024neural}, they extended their approach to deep learning by defining a deep integral RKBS—a ``nonlinear space" constructed via compositions of functions from shallow integral RKBSs—and derived a representer theorem using vector-valued measures. This theorem yields solutions in the form of DNNs with infinite width.
Parhi et al.\ explored representer theorems for single-hidden-layer neural networks of finite width with scalar-valued outputs in \cite{parhi2021banach}. Their results demonstrate that a shallow neural network provides the optimal solution to the learning problem. This work was later extended to deep neural networks in \cite{parhi2022what}, where it was shown that deep networks serve as solutions to regularization problems in a nonlinear function space formed by compositions of functions from a Banach space.
Shenouda et al.\ \cite{shenouda2024variation} established a representer theorem for shallow vector-valued neural networks by leveraging the composition of vector-valued variation spaces. Meanwhile, in \cite{unser2019representer}, Unser proposed optimizing activation functions in deep neural networks through second-order total-variation regularization. The resulting representer theorem ensures that the optimal activation function for each node in the network is a linear spline with adaptive knots.
These studies highlight the diverse approaches to representer theorems in deep learning, spanning compositions of RKHSs, integral RKBSs, and variation spaces.

This paper is organized into six sections and includes two appendices. We describe in Section \ref{section: Learning with Deep Neural Networks} a deep learning model with DNNs. In Section \ref{section: VV-RKBS}, we lay the groundwork for formulating RKBSs as hypothesis spaces for deep learning by elucidating the concept of vector-valued RKBSs. Section \ref{section: hypothesis space} is dedicated to the development of the hypothesis space for deep learning, where we demonstrate that the completion of the linear span of the DNN set, associated with the learning model, in a weak* topology forms an RKBS, serving as the hypothesis space for deep learning. In Section \ref{section: representer theorems}, we explore learning models within this RKBS framework, establishing representer theorems for the solutions of two learning models: regularized learning and the MNI problem. Finally, in Section \ref{section: concluding remarks}, we conclude with remarks on the advantages of learning within the proposed hypothesis space and the contributions of this paper. Appendix A provides necessary notions of convex analysis, while Appendix B reviews the explicit, data-dependent representer theorem for the MNI problem in a general Banach space setting, as established in \cite{wang2023sparse}.

\section{Learning with Deep Neural Networks}\label{section: Learning with Deep Neural Networks}
We present in this section a commonly used learning model based on DNNs, widely studied in the machine learning community.  

We begin by recalling the notation for DNNs.
Let $s$ and $t$ be positive integers. A DNN is a vector-valued function from $\mathbb{R}^s$ to $\mathbb{R}^t$, 
constructed through compositions of functions, each defined by an activation function applied to an affine transformation. Specifically, given a univariate function $\sigma: \mathbb{R}\to\mathbb{R}$,
we define the corresponding vector-valued function as
\begin{equation*}\label{activationF}
\sigma({x}):=[\sigma(x_1),\dots,\sigma(x_s)]^\top, \ \ \mbox{for}\ \ {x}:=[x_1, x_2,\dots, x_s]^\top\in\mathbb{R}^s.
\end{equation*}
For each $n\in\mathbb{N}$, let $\mathbb{N}_n:=\{1,2,\ldots,n\}$. For $k$ vector-valued functions $f_j$, $j\in\mathbb{N}_k$, where the range of $f_j$ is contained in the domain of $f_{j+1}$, for $j\in\mathbb{N}_{k-1}$, we denote their consecutive composition by
\begin{equation*}\label{consecutive_composition}
    \bigodot_{j=1}^k f_j:=f_k\circ f_{k-1}\circ\cdots\circ f_2\circ f_1,
\end{equation*}
which is defined on the domain of $f_1$. 
Fixing a prescribed $D\in \mathbb{N}$, we set $m_0:=s$ and $m_D:=t$, and specify  positive integers $m_j$, for $j\in \mathbb{N}_{D-1}$.
Given weight matrices $\mathbf{W}_j\in\mathbb{R}^{m_j\times m_{j-1}}$ and bias vectors $\mathbf{b}_j\in\mathbb{R}^{m_j}$ for $j\in\mathbb{N}_D$, 
a DNN is defined as
\begin{equation*}\label{DNN}
\mathcal{N}^D({x}):=\left(\mathbf{W}_D\bigodot_{j=1}^{D-1} \sigma(\mathbf{W}_j \cdot+\mathbf{b}_j)+\mathbf{b}_D\right)({x}),\ \ {x}\in\mathbb{R}^s.
\end{equation*}
Here, $x$ is the input vector, and $\mathcal{N}^D$ consists of $D-1$ hidden layers followed by an output layer at depth $D$.

A DNN can also be expressed recursively. From the definition above, we have the base case
\begin{equation*}\label{Step1}
    \mathcal{N}^1({x}):=\mathbf{W}_1 {x}+\mathbf{b}_1, \ \ {x}\in \mathbb{R}^s
\end{equation*}
and for all $j\in \mathbb{N}_{D-1}$, the recursive formula 
\begin{equation*}\label{Recursion}
    \mathcal{N}^{j+1}({x}):=\mathbf{W}_{j+1}\sigma(\mathcal{N}^j({x}))+\mathbf{b}_{j+1}, \ \ {x}\in \mathbb{R}^s.
\end{equation*}
To emphasize the dependence of $\mathcal{N}^D$ on its parameters, we write $\mathcal{N}^D$ as $\mathcal{N}^D(\cdot,\{\mathbf{W}_j,\mathbf{b}_j\}_{j=1}^D)$. In this paper, the set $\{\mathbf{W}_j,\mathbf{b}_j\}_{j=1}^D$ associated with the neural network $\mathcal{N}^D$ is always ordered according to its natural sequence as defined in $\mathcal{N}^D$. Throughout this paper, we assume that the activation function $\sigma$ is continuous.

It is beneficial to view the DNN $\mathcal{N}^D$ defined above as a function of two variables: the input variable $x\in \mathbb{R}^s$ and the parameter variable $\theta:=\{\mathbf{W}_j,\mathbf{b}_j\}_{j=1}^D$.
Given positive integers $m_j$, $j\in \mathbb{N}_{D-1}$, we define the width set as
\begin{equation}\label{width-set}
    \mathbb{W}:=\{m_j: j\in\mathbb{N}_{D-1}\}
\end{equation} 
and 
define the set of DNNs with $D$ layers by
\begin{equation}\label{Set_A}
    \mathcal{A}_{\mathbb{W}}:=\left\{ \mathcal{N}^D(\cdot,\{\mathbf{W}_j,\mathbf{b}_j\}_{j=1}^D): \mathbf{W}_j\in \mathbb{R}^{m_j\times m_{j-1}},  \mathbf{b}_j\in\mathbb{R}^{m_j}, j\in\mathbb{N}_D\right\}.
\end{equation}
Clearly, the set $\mathcal{A}_{\mathbb{W}}$ depends not only on $\mathbb{W}$ but also on $D$. For simplicity, we will omit the explicit dependence on $D$ in our notation whenever there is no risk of ambiguity. For example, we use $\mathcal{N}$ to denote $\mathcal{N}^D$.
Moreover, each element of $\mathcal{A}_{\mathbb{W}}$ is a vector-valued function mapping from $\mathbb{R}^s$ to  $\mathbb{R}^t$. 
To further reinterpret $\mathcal{A}_{\mathbb{W}}$,
we define the parameter space $\Theta$ as
\begin{equation}\label{Def:Theta}
\Theta=\Theta_{\mathbb{W}}:= \bigotimes_{j\in\mathbb{N}_D}(\mathbb{R}^{m_j\times m_{j-1}}\otimes \mathbb{R}^{m_j}).
\end{equation}
Note that $\Theta$ is measurable. 
For ${x}\in \mathbb{R}^s$ and $\theta\in \Theta$, we define
\begin{equation}\label{Def:kernel}
    \mathcal{N}({x},\theta):=\mathcal{N}^D({x},\{\mathbf{W}_j,\mathbf{b}_j\}_{j=1}^D).
\end{equation}
Since $\mathcal{N}({x},\theta)\in \mathbb{R}^t$, we can now express $\mathcal{A}_{\mathbb{W}}$ as
\begin{equation*}\label{Set_A*}
    \mathcal{A}_{\mathbb{W}}=\left\{ \mathcal{N}(\cdot,\theta): \theta\in \Theta_{\mathbb{W}} \right\}.
\end{equation*}


We now describe the  learning model with DNNs.
Suppose that a training dataset 
\begin{equation}\label{Dataset}
\mathbb{D}_m:=  \{(x_j,y_j) \in\mathbb{R}^s\times\mathbb{R}^t:j\in\mathbb{N}_m\}
\end{equation}
is given and we would like to train a neural network from the dataset. 
We denote by $\mathcal{L}(f,\mathbb{D}_m)$, for all functions $f$ from $\mathbb{R}^s$ to $\mathbb{R}^t$, a loss function determined by the dataset $\mathbb{D}_m$. For example, a loss function may take the form
\begin{equation*}\label{loss function}
    \mathcal{L}(f,\mathbb{D}_m):=\sum_{j\in\mathbb{N}_m}\|f(x_j)-y_j\|,
\end{equation*}
where $\|\cdot\|$ is a norm of $\mathbb{R}^t$. Given a loss function, a typical deep learning problem is to train the parameters $\theta\in \Theta_\mathbb{W}$ from the training dataset $\mathbb{D}_m$ by solving the optimization problem 
\begin{equation}\label{BasicLearningMethod}
    \min\{\mathcal{L}(\mathcal{N}(\cdot,\theta),\mathbb{D}_m): \theta\in\Theta_\mathbb{W}\},
\end{equation}
where $\mathcal{N}$ has the form in equation \eqref{Def:kernel}. Equivalently, optimization problem \eqref{BasicLearningMethod} may be written as
\begin{equation}\label{BasicLearningMethod-equivalent}
    \min\{\mathcal{L}(f,\mathbb{D}_m): f\in\mathcal{A}_{\mathbb{W}}\}.
\end{equation}

The optimization problem \eqref{BasicLearningMethod-equivalent} is a widely studied learning model in the machine learning community.
However, the set $\mathcal{A}_{\mathbb{W}}$ lacks an algebraic structure that guarantees closure under linear combinations and does not possess a topological structure that provides a norm. These limitations make the mathematical analysis of learning model \eqref{BasicLearningMethod-equivalent} particularly challenging.
As recently noted in \cite{Jentzen2022existence}, ``there is almost no result in the scientific
literature which actually establishes the existence of global minimizers of risk
functions'' in DNN training. 
The few existing results are limited to shallow networks with specific activation functions. For instance, the existence of a global minimizer was established for shallow networks with Heaviside activation in \cite{Kainen2000Best}. Similarly, under certain strong assumptions, \cite{Jentzen2022existence} proved the existence of a global minimizer for the loss function in the training of shallow networks with ReLU activation.
In general, the existence of a solution to problem \eqref{BasicLearningMethod-equivalent} might be analyzed through the following observations: If the set $\mathcal{A}_{\mathbb{W}}$ has sufficient capacity to contain a function $f^*\in\mathcal{A}_{\mathbb{W}}$ that perfectly interpolates the data,  then $f^*$ serves as a solution to problem \eqref{BasicLearningMethod-equivalent}, as it satisfies $\mathcal{L}(f^*,\mathbb{D}_m)=0.$ Additionally, if the continuous loss function $\mathcal{L}(\mathcal{N}(\cdot,\theta),\mathbb{D}_m)$ tends to infinity as the norm $\|\theta\|$ approaches infinity, then the existence of a solution to problem \eqref{BasicLearningMethod-equivalent} is guaranteed. Yet, these conditions are quite strong and often challenging to verify in practice.

We introduce a vector space that contains  $\mathcal{A}_{\mathbb{W}}$ and consider learning in the vector space. 
For this purpose, given a set $\mathbb{W}$ of layer widths defined by \eqref{width-set}, we define the set
\begin{equation}\label{space B Delta}
\mathcal{B}_{\mathbb{W}}:=\left\{ \sum_{l=1}^n c_l\mathcal{N}(\cdot,\theta_l): c_l\in\mathbb{R}, \theta_l\in \Theta_{\mathbb{W}}, l\in \mathbb{N}_n, n\in \mathbb{N}\right\}.
\end{equation}
In the next proposition, we present properties of $\mathcal{B}_{\mathbb{W}}$.

\begin{proposition}\label{prop: BM is a linear space}
If $\mathbb{W}$ is the width set defined by \eqref{width-set}, then 

(i) $\mathcal{B}_{\mathbb{W}}$ defined by \eqref{space B Delta} is the smallest vector space on $\mathbb{R}$ that contains the set $\mathcal{A}_{\mathbb{W}}$.

(ii) $\mathcal{B}_{\mathbb{W}}\subset\bigcup_{n\in\mathbb{N}}\mathcal{A}_{n\mathbb{W}}$. 
\end{proposition}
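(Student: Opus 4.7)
The plan is to prove the two claims separately, with part (i) being a routine vector-space verification and part (ii) requiring an explicit construction that "parallel-stacks" several DNNs into a single wider one.

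For part (i), I would first check directly from the definition in \eqref{space B Delta} that $\mathcal{B}_{\mathbb{W}}$ is closed under addition (concatenation of two finite linear combinations is again a finite linear combination) and under scalar multiplication (multiplying all coefficients $c_l$ by a scalar). Thus $\mathcal{B}_{\mathbb{W}}$ is a vector space over $\mathbb{R}$. Containment $\mathcal{A}_{\mathbb{W}}\subset \mathcal{B}_{\mathbb{W}}$ follows by taking $n=1$ and $c_1=1$. For minimality, any vector space containing $\mathcal{A}_{\mathbb{W}}$ must contain every finite linear combination of its elements, which by definition is precisely every element of $\mathcal{B}_{\mathbb{W}}$.

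For part (ii), fix $f=\sum_{l=1}^n c_l\mathcal{N}(\cdot,\theta_l)\in\mathcal{B}_{\mathbb{W}}$ with $\theta_l=\{\mathbf{W}^{(l)}_j,\mathbf{b}^{(l)}_j\}_{j=1}^D$. I would construct a single DNN $\widetilde{\mathcal{N}}^D$ of depth $D$ whose hidden layer widths are $n m_j$ (that is, whose width set is $n\mathbb{W}$) and whose output equals $f$. The first layer weight/bias is obtained by vertically concatenating the $\mathbf{W}^{(l)}_1$'s into an $(nm_1)\times m_0$ matrix and the $\mathbf{b}^{(l)}_1$'s into an $nm_1$-vector. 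For $j\in\mathbb{N}_{D-1}\setminus\{1\}$, the combined weight matrix is taken to be block-diagonal with blocks $\mathbf{W}^{(1)}_j,\dots,\mathbf{W}^{(n)}_j$, and the bias is the concatenation of $\mathbf{b}^{(l)}_j$. Since the activation $\sigma$ acts componentwise, this block-diagonal structure keeps the $n$ parallel subnetworks from interacting during the hidden layers. Finally, the output layer uses the horizontal concatenation $[c_1\mathbf{W}^{(1)}_D,\dots,c_n\mathbf{W}^{(n)}_D]$ of size $m_D\times nm_{D-1}$ and bias $\sum_{l=1}^n c_l\mathbf{b}^{(l)}_D$, which produces exactly the desired linear combination at the output.

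I would then verify correctness by a short induction on the depth $j\in\mathbb{N}_{D-1}$: the pre-activation at hidden layer $j$ of $\widetilde{\mathcal{N}}$ is the vertical stack of the pre-activations at layer $j$ of the individual $\mathcal{N}(\cdot,\theta_l)$, so after applying $\sigma$ componentwise the post-activations stack in the same way, and the base case $j=1$ holds by construction. Combining with the output layer yields $\widetilde{\mathcal{N}}^D(x)=\sum_{l=1}^n c_l\mathcal{N}(x,\theta_l)=f(x)$, so $f\in\mathcal{A}_{n\mathbb{W}}$. The only subtle point, and the main thing to be careful about, is the bookkeeping on how widths enter: $m_0=s$ and $m_D=t$ are fixed by the input/output dimensions and are not scaled, so one must check that the width set $n\mathbb{W}=\{n m_j:j\in\mathbb{N}_{D-1}\}$ only refers to the hidden layers and that the block-diagonal/concatenation construction respects this.
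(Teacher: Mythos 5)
Your proposal is correct and follows essentially the same route as the paper: part (i) via the linear-span characterization, and part (ii) via the same parallel-stacking construction (vertical concatenation at the first layer, block-diagonal hidden layers, coefficient-weighted horizontal concatenation at the output). The inductive verification you sketch is exactly the ``direct computation'' the paper leaves to the reader, and your remark that only the hidden widths are scaled matches the paper's dimension bookkeeping.
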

\begin{proof}
It is clear that $\mathcal{B}_{\mathbb{W}}$ may be identified as the linear span of $\mathcal{A}_{\mathbb{W}}$, that is, 
$$
\mathcal{B}_{\mathbb{W}}=\text{span} \left\{\mathcal{N}(\cdot,\theta): \theta\in \Theta_{\mathbb{W}}\right\}.
$$
Thus, $\mathcal{B}_{\mathbb{W}}$ is the smallest vector space containing $\mathcal{A}_{\mathbb{W}}$.

It remains to prove Item (ii).
To this end, we let $f\in\mathcal{B}_{\mathbb{W}}$. By the definition \eqref{space B Delta} of $\mathcal{B}_{\mathbb{W}}$, there exist $n'\in \mathbb{N}$, $c_l\in\mathbb{R}$, $\theta_l\in \Theta_\mathbb{W}$, for $l\in \mathbb{N}_{n'}$ such that 
$$
f(\cdot)=\sum_{l=1}^{n'} c_l\mathcal{N}(\cdot,\theta_l).
$$
It suffices to show that $f\in\mathcal{A}_{n'\mathbb{W}}$.
Noting that $\theta_l:=\{\mathbf{W}_j^l,\mathbf{b}_j^l\}_{j=1}^D$, for  $l\in\mathbb{N}_{n'}$,
we set 
\begin{equation*}
\widetilde{\mathbf{W}}_{1}:=\left[\begin{array}{c}
             \mathbf{W}_1^1  \\
              \mathbf{W}_1^2\\
              \vdots\\
              \mathbf{W}_1^{n'}
\end{array}\right],\ \
\widetilde{\mathbf{W}}_j:=\left[\begin{array}{cccc}
\mathbf{W}_j^1 & \mathbf{0} & \cdots & \mathbf{0} \\
\mathbf{0} & \mathbf{W}_j^2 & \cdots & \mathbf{0} \\
\vdots & \vdots & \ddots & \vdots \\
\mathbf{0} & \mathbf{0} & \cdots & \mathbf{W}_j^{n'}
\end{array}\right],\ j\in\mathbb{N}_{D-1}\backslash\{1\}, 
\end{equation*}
\begin{equation*}
\widetilde{\mathbf{W}}_D:=\left[\begin{array}{cccc}
c_1\mathbf{W}_D^1 & c_2\mathbf{W}_D^2 & \cdots & c_{n'}\mathbf{W}_D^{n'}
\end{array}\right],\ \
\widetilde{\mathbf{b}}_{j}:=\left[\begin{array}{c}
             \mathbf{b}_j^1  \\
              \mathbf{b}_j^2\\
              \vdots\\
              \mathbf{b}_j^{n'}
\end{array}\right],\ j\in\mathbb{N}_{D-1},
\ \ \widetilde{\mathbf{b}}_{D}:=\sum\limits_{j=1}^{n'} c_j \mathbf{b}_D^j. 
\end{equation*}
Clearly, we have that
$
\widetilde{\mathbf{W}}_{1}\in\mathbb{R}^{(n'm_1)\times {m_0}}$, 
$\widetilde{\mathbf{W}}_j\in\mathbb{R}^{(n'm_j)\times (n'm_{j-1})}$, $j\in\mathbb{N}_{D-1}\backslash\{1\}$, 
$\widetilde{\mathbf{W}}_D\in\mathbb{R}^{m_D\times (n'm_{D-1})}$, $\widetilde{\mathbf{b}}_{j}\in\mathbb{R}^{n'm_j}$, $j\in\mathbb{N}_{D-1}$, and $\widetilde{\mathbf{b}}_{D}\in\mathbb{R}^{m_D}$. 
Direct computation confirms that $f(\cdot)=\mathcal{N}(\cdot,\widetilde{\theta})$
with $\widetilde{\theta}:=\{\widetilde{\mathbf{W}}_j,\widetilde{\mathbf{b}}_j\}_{j=1}^D$. By definition \eqref{Set_A}, $f\in\mathcal{A}_{n'\mathbb{W}}$. 
\end{proof}

Proposition \ref{prop: BM is a linear space} reveals that $\mathcal{B}_{\mathbb{W}}$ is the smallest vector space that contains $\mathcal{A}_{\mathbb{W}}$. Hence, it serves as a reasonable substitute for  $\mathcal{A}_{\mathbb{W}}$.
Motivated by this result, we propose the following alternative learning model
\begin{equation}\label{GeneralLearningMethod}
    \inf\{\mathcal{L}(f,\mathbb{D}_m): f\in\mathcal{B}_{\mathbb{W}}\}.
\end{equation}
For a given width set $\mathbb{W}$, unlike the learning model \eqref{BasicLearningMethod-equivalent}, which searches for a minimizer in the set $\mathcal{A}_\mathbb{W}$, model \eqref{GeneralLearningMethod} seeks a minimizer in the vector space $\mathcal{B}_\mathbb{W}$. This space contains  $\mathcal{A}_\mathbb{W}$ and is itself contained in $\mathcal{A}:=\bigcup_{n\in\mathbb{N}}\mathcal{A}_{n\mathbb{W}}$. 
According to Proposition \ref{prop: BM is a linear space}, the learning model \eqref{GeneralLearningMethod} is ``semi-equivalent" to the learning model \eqref{BasicLearningMethod-equivalent} in the sense that 
\begin{equation}\label{Comparison-of-three-models}
    \mathcal{L}(\mathcal{N}_{\mathcal{A}},\mathbb{D}_m)\leq \mathcal{L}(\mathcal{N}_{\mathcal{B}_{\mathbb{W}}},\mathbb{D}_m)\leq \mathcal{L}(\mathcal{N}_{\mathcal{A}_{\mathbb{W}}},\mathbb{D}_m),
\end{equation}
where $\mathcal{N}_{\mathcal{B}_{\mathbb{W}}}$ is a minimizer of  model \eqref{GeneralLearningMethod}, $\mathcal{N}_{\mathcal{A}_{\mathbb{W}}}$ and $\mathcal{N}_{\mathcal{A}}$ are the minimizers of model \eqref{BasicLearningMethod-equivalent} and model \eqref{BasicLearningMethod-equivalent} with the set $\mathcal{A}_{\mathbb{W}}$ replaced by $\mathcal{A}$, respectively. One might argue that since model \eqref{BasicLearningMethod-equivalent} is a finite-dimensional optimization problem, while model \eqref{GeneralLearningMethod} is infinite-dimensional, the alternative model  \eqref{GeneralLearningMethod} may introduce unnecessary complexity. However, despite being infinite-dimensional, the algebraic structure of the vector space $\mathcal{B}_{\mathbb{W}}$ and the topological structure we will later equip it with offer significant advantages for mathematical analysis of learning on the space.
In fact, the vector-valued RKBS that we obtain by completing $\mathcal{B}_{\mathbb{W}}$ in a weak* topology will lead to a remarkable representer theorem,  which reduces the infinite-dimensional optimization problem to a finite-dimensional one. This provides a solution to the challenges posed by the infinite dimensionality of the space $\mathcal{B}_{\mathbb{W}}$.



\section{Vector-Valued Reproducing Kernel Banach Space}\label{section: VV-RKBS}
It was proved in the previous section that for a given width set $\mathbb{W}$, the set $\mathcal{B}_{\mathbb{W}}$, defined by \eqref{space B Delta}, is the smallest vector space containing  $\mathcal{A}_\mathbb{W}$. A key objective of this paper is to establish that the vector space $\mathcal{B}_{\mathbb{W}}$ is dense in a weak* topology within a vector-valued RKBS. To achieve this, we first introduce the notion of vector-valued RKBSs in this section. 

A Banach space $\mathcal{B}$ with norm $\|\cdot\|_{\mathcal{B}}$ is said to be a space of vector-valued functions on a given set $X$ if $\mathcal{B}$ consists of vector-valued functions defined on $X$ and satisfies the property that for every $f\in\mathcal{B}$, $\|f\|_{\mathcal{B}}=0$ implies $f({x}) = \mathbf{0}$ for all ${x}\in X$. 
For a Banach space $\mathcal{B}$ of vector-valued functions mapping from $X$ to $\mathbb{R}^n$, we define the point evaluation operator $\delta_{{x}}:\mathcal{B}\to\mathbb{R}^n$ for each ${x}\in X$  as 
\begin{equation*}
    \delta_{{x}}(f):=f({x}), \quad \mbox{for all}\ \ f\in\mathcal{B}. 
\end{equation*}
It is important to note that not all Banach spaces are function spaces. For instance,
when $X$ is a measure space with a positive measure $\mu$, the Banach spaces $L^p(X)$, $1\leq p\leq+\infty$, do not consist of functions but rather equivalence classes of functions with respect to
$\mu$.

We now formally define vector-valued RKBSs.

\begin{definition}\label{def: vector-valued RKBS} 
A Banach space $\mathcal{B}$ of vector-valued functions from $X$ to $\mathbb{R}^n$ is called a vector-valued RKBS if there exists a norm $\|\cdot\|$ on $\mathbb{R}^n$ such that for each $x\in X$, the point evaluation operator $\delta_x$ is continuous with respect to this norm on $\mathcal{B}$. That is, for each $x\in X$, there exists a constant $C_x>0$ 
such that 
\[
\|\delta_x(f)\|\leq C_x\|f\|_{\mathcal{B}}, \quad\text{ for all }\ f\in\mathcal{B}.
\]
\end{definition}

Since all norms on  $\mathbb{R}^n$ are equivalent, if a Banach space $\mathcal{B}$ of vector-valued functions from $X$ to $\mathbb{R}^n$ is a vector-valued RKBS with respect to one norm on $\mathbb{R}^n$, then it must also be a vector-valued RKBS with respect to any other norm  on  $\mathbb{R}^n$. Consequently, the continuity of point evaluation operators on $\mathcal{B}$ is independent of the specific choice of norm of the output space $\mathbb{R}^n$. 

The concept of RKBSs was originally introduced in \cite{zhang2009reproducing}, to ensure the stability of sampling process and to serve as a hypothesis space for sparse machine learning. Vector-valued RKBSs were further studied in \cite{lin2021multi, zhang2013vector}, where their definition involves an abstract Banach space, equipped with a specific norm, as the output space of functions. In Definition \ref{def: vector-valued RKBS}, we restrict the output space to the Euclidean space $\mathbb{R}^n$ without specifying a particular norm, leveraging the fundamental property that all norms on $\mathbb{R}^n$ are equivalent.

The following proposition establishes that point evaluation operators are continuous if and only if their component-wise point evaluation functionals are continuous. While this result can be derived from a general argument concerning $\mathbb{R}^n$-valued functions, we provide a complete proof for the convenience of the reader. To proceed, for a vector-valued function $f:X\to\mathbb{R}^n$,  we denote its $j$-th component by $f_j: X\to\mathbb{R}$, for each $j\in\mathbb{N}_n$, such that 
\begin{equation*}
    f({x}):=[f_j({x}):j\in\mathbb{N}_n]^\top,\quad {x}\in X. 
\end{equation*}
Moreover, for each $x\in X$, $k\in\mathbb{N}_n$, we define the linear functional $\delta_{{x},k}:\mathcal{B}\to\mathbb{R}$ by 
\begin{equation*}
    \delta_{{x},k}(f):=f_k({x}),\ \mbox{for}\  f:=[f_k:k\in\mathbb{N}_n]^{\top}\in\mathcal{B}. 
\end{equation*}

\begin{proposition}\label{prop: component wise for vector-valued RKBS}
    A Banach space $\mathcal{B}$ of vector-valued functions from $X$ to $\mathbb{R}^n$ is a vector-valued RKBS if and only if, for each ${x}\in X$, $k\in \mathbb{N}_n$, there exists a constant $C_{{x},k}>0$ such that 
    \begin{equation}\label{elementwise functionals}
        |\delta_{{x},k}(f)|\leq C_{{x},k}\|f\|_{\mathcal{B}}, \ \mbox{for all} \ f\in \mathcal{B}.
    \end{equation}
\end{proposition}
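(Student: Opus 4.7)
The plan is to exploit the equivalence of all norms on the finite-dimensional output space $\mathbb{R}^n$, which reduces the equivalence between the vector-valued bound in Definition \ref{def: vector-valued RKBS} and the component-wise bounds in \eqref{elementwise functionals} to a routine passage between the max norm $\|v\|_{\infty}:=\max_{k\in\mathbb{N}_n}|v_k|$ and an arbitrary reference norm on $\mathbb{R}^n$. The remark following Definition \ref{def: vector-valued RKBS} (that the RKBS property does not depend on the specific norm chosen on $\mathbb{R}^n$) ensures we are free to work with whichever norm is most convenient.

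For the forward direction, I would assume $\mathcal{B}$ is a vector-valued RKBS and fix a norm $\|\cdot\|$ on $\mathbb{R}^n$ together with constants $C_x>0$ such that $\|\delta_x(f)\|\leq C_x\|f\|_{\mathcal{B}}$ for all $f\in\mathcal{B}$. By norm equivalence there is $\alpha>0$ with $\|v\|_{\infty}\leq \alpha\|v\|$ for every $v\in\mathbb{R}^n$. Applying this to $v=\delta_x(f)$ yields
$$|\delta_{x,k}(f)|=|f_k(x)|\leq \|\delta_x(f)\|_{\infty}\leq \alpha C_x\|f\|_{\mathcal{B}}$$
for every $k\in\mathbb{N}_n$, so it suffices to set $C_{x,k}:=\alpha C_x$.

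For the converse, I would take the max norm as the reference norm on $\mathbb{R}^n$ and note that, under the hypothesis \eqref{elementwise functionals},
$$\|\delta_x(f)\|_{\infty}=\max_{k\in\mathbb{N}_n}|\delta_{x,k}(f)|\leq \Bigl(\max_{k\in\mathbb{N}_n}C_{x,k}\Bigr)\|f\|_{\mathcal{B}},$$
so $\delta_x$ is continuous with constant $C_x:=\max_{k\in\mathbb{N}_n}C_{x,k}$, verifying that $\mathcal{B}$ is a vector-valued RKBS in the sense of Definition \ref{def: vector-valued RKBS}. There is essentially no obstacle in this argument; the only point that deserves a line of commentary is the interplay between the existential quantifier over the norm on $\mathbb{R}^n$ in Definition \ref{def: vector-valued RKBS} and the norm-free statement in \eqref{elementwise functionals}, which is resolved precisely by the equivalence of norms in finite dimensions.
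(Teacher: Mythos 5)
Your proof is correct and follows essentially the same route as the paper: the authors likewise reduce Definition \ref{def: vector-valued RKBS} to the bound $\|f(x)\|_{\infty}\leq C_x\|f\|_{\mathcal{B}}$ via the equivalence of norms on $\mathbb{R}^n$, and then pass between that and \eqref{elementwise functionals} by taking $C_x:=\max_{k\in\mathbb{N}_n}C_{x,k}$ in one direction and $C_{x,k}:=C_x$ in the other. Your explicit constant $\alpha$ in the forward direction is just a slightly more detailed rendering of the same norm-equivalence step.
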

\begin{proof}
By Definition \ref{def: vector-valued RKBS} and the equivalence of norms in $\mathbb{R}^n$, the Banach space $\mathcal{B}$ is a vector-valued RKBS if and only if, for each ${x}\in X$, there exists a constant $C_x>0$ such that \begin{equation}\label{point-operator}
\|f(x)\|_{\infty}\leq C_{{x}}\|f\|_{\mathcal{B}},\ \mbox{for all} \  f\in\mathcal{B}.
\end{equation}
To establish the equivalence between inequalities \eqref{elementwise functionals} and \eqref{point-operator}, we proceed as follows. If \eqref{elementwise functionals} holds, then taking $C_x:=\max_{k\in\mathbb{N}_n}C_{x,k}$ yields \eqref{point-operator}. Conversely, if \eqref{point-operator} holds, then \eqref{elementwise functionals} follows with $C_{{x},k}:=C_x$.
\end{proof}

We now proceed to identify a reproducing kernel for a vector-valued RKBS, which requires introducing the notion of the $\delta$-dual space.  For a Banach space $B$ with norm $\|\cdot\|_{B}$, let $B^*$ denote its dual space, consisting of all continuous linear functionals on $B$, equipped with the norm 
$$
\|\nu\|_{B^*}:=\sup_{\|f\|_{B}\leq1}|\nu(f)|,\ \mbox{for all}\ \nu\in B^*.
$$
The dual bilinear form $\langle\cdot,\cdot\rangle_{B}$ on $B^*\times B$ is given by 
$$
\langle\nu,f\rangle_{ B}:=\nu(f),\ \mbox{for all}\ \nu\in B^* \  \mbox{and}\ f\in B.
$$
Now, let $\mathcal{B}$ be a vector-valued RKBS of functions from $X$ to $\mathbb{R}^n$, with dual space $\mathcal{B}^*$. Define 
\begin{equation}\label{Def:Delta}
    \Delta:=\mathrm{span}\{\delta_{x,j}:x\in X, j\in\mathbb{N}_n\}.
\end{equation}
By Proposition \ref{prop: component wise for vector-valued RKBS}, each $\delta_{x,j}$ is a continuous linear functional on $\mathcal{B}$, implying that $\Delta\subseteq\mathcal{B}^*$.
Let $\mathcal{B}'$ be the closure of $\Delta$ in the norm topology on $\mathcal{B}^*$, which we refer to as the $\delta$-dual space of $\mathcal{B}$. Clearly,  $\mathcal{B}'\subseteq\mathcal{B}^*$, and $\mathcal{B}'$ is the smallest Banach space containing all component-wise point evaluation functionals on $\mathcal{B}$. The $\delta$-dual space was originally introduced for scalar-valued RKBSs in \cite{xu2022sparse}.

For any Banach space $B$, one can construct a Banach space $B^{\sharp}$ of functions, defined on some domain $X'$, that is isometrically isomorphic to $B$. Specifically, each element $f\in B$ can be naturally interpreted as a
function on $X':=B^*$ via the mapping $f\rightarrow\phi_f$, where 
$$
\phi_f(\nu):=\nu(f),\  \mbox{for all}\  \nu\in B^*.
$$
Let $B^{\sharp}$ denote the space  of all such functions $\phi_f$ on $B^*$, endowed with the supremum norm 
$$
\|\phi_f\|_{B^{\sharp}}:=\sup_{\|\nu\|_{B^*}\leq1}|\phi_f(\nu)|,\ \mbox{for all}\ f\in B.
$$
Since
$$
\|\phi_f\|_{B^{\sharp}}=\sup_{\|\nu\|_{B^*}\leq1}|\nu(f)|=\|f\|_{B},
$$
it follows that $B^{\sharp}$, as a Banach space of functions on $B^*$, is isometrically isomorphic to $B$. As a direct consequence, the $\delta$-dual space $\mathcal{B}'$ is isometrically isomorphic to a Banach space of functions defined on domain $X':=(\mathcal{B}')^*$. Thus, there always exists a Banach space $\mathcal{B}^{\sharp}$ of functions that is isometrically isomorphic to the $\delta$-dual space $\mathcal{B}'$.  However, such a function space is not necessarily unique. 

The following proposition establishes that any given Banach space $\mathcal{B}^{\sharp}$ of functions, which is isometrically isomorphic to the $\delta$-dual space $\mathcal{B}'$, can be uniquely associated with a reproducing kernel $K$ for the RKBS $\mathcal{B}$. Let $\kappa: \mathcal{B}'\to \mathcal{B}^{\sharp}$ be the isometric isomorphism. We define a bilinear form $\langle \cdot,\cdot\rangle_{\mathcal{B}^{\sharp}\times\mathcal{B}}$ on $\mathcal{B}^{\sharp}\times\mathcal{B}$ as
\begin{equation*}\label{bilinear-form}
\langle g,f\rangle_{\mathcal{B}^{\sharp}\times\mathcal{B}}:=\langle \kappa^{-1}(g),f\rangle_{\mathcal{B}},\ \mbox{for all}\ g\in\mathcal{B}^{\sharp}, f\in\mathcal{B}.  
\end{equation*}

\begin{proposition}\label{existence of reproducing kernel}

Suppose that  $\mathcal{B}$ is a vector-valued RKBS of functions from $X$ to $\mathbb{R}^n$, and let $\mathcal{B}'$ be its $\delta$-dual space. For each Banach space $\mathcal{B}^{\sharp}$ of functions from $X'$ to $\mathbb{R}$ that is isometrically isomorphic to $\mathcal{B}'$, there exists a unique vector-valued function $K:X\times X'\to\mathbb{R}^n$ satisfying the following properties: 

(i) $K_j(x,\cdot)\in \mathcal{B}^{\sharp}$ for all $x\in X$, $j\in\mathbb{N}_n$;

(ii) the reproducing property holds
\begin{equation}\label{def: reproducing property}
    f_j(x)=\langle K_j(x,\cdot),f\rangle_{\mathcal{B}^{\sharp}\times\mathcal{B}},\ \mbox{for all}\ f=[f_j:j\in\mathbb{N}_n]\in\mathcal{B} \ \mbox{and all}\  x\in X, \ j\in\mathbb{N}_n.
\end{equation}
\end{proposition}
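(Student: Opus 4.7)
The plan is to build $K$ directly from the isometric isomorphism $\kappa \colon \mathcal{B}' \to \mathcal{B}^{\sharp}$ by transporting the component-wise point evaluation functionals. Since Proposition \ref{prop: component wise for vector-valued RKBS} guarantees that $\delta_{x,j}$ is continuous on $\mathcal{B}$ for every $x \in X$ and every $j \in \mathbb{N}_n$, each $\delta_{x,j}$ lies in $\Delta \subseteq \mathcal{B}'$ by the definition \eqref{Def:Delta} of $\Delta$. Applying the isomorphism $\kappa$ to each such functional produces an element of $\mathcal{B}^{\sharp}$, which by construction of $\mathcal{B}^{\sharp}$ is a real-valued function on $X'$. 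I will therefore define
\begin{equation*}
K_j(x, x') := \bigl[\kappa(\delta_{x,j})\bigr](x'), \qquad x \in X,\ x' \in X',\ j \in \mathbb{N}_n,
\end{equation*}
and assemble these components into the vector-valued function $K(x, \cdot) := [K_j(x, \cdot) : j \in \mathbb{N}_n]^{\top}$ on $X \times X'$.

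With this candidate in hand, property (i) is immediate: for every $x \in X$ and $j \in \mathbb{N}_n$, the function $K_j(x, \cdot) = \kappa(\delta_{x,j})$ is in $\mathcal{B}^{\sharp}$ because $\kappa$ maps $\mathcal{B}'$ into $\mathcal{B}^{\sharp}$. For property (ii), the reproducing identity will be read off directly from the definition of the bilinear form $\langle\cdot,\cdot\rangle_{\mathcal{B}^{\sharp}\times\mathcal{B}}$ given just before the proposition. Namely, for any $f \in \mathcal{B}$,
\begin{equation*}
\langle K_j(x, \cdot), f\rangle_{\mathcal{B}^{\sharp}\times\mathcal{B}}
= \langle \kappa^{-1}(K_j(x,\cdot)), f\rangle_{\mathcal{B}}
= \langle \delta_{x,j}, f\rangle_{\mathcal{B}}
= \delta_{x,j}(f)
= f_j(x),
\end{equation*}
which is exactly \eqref{def: reproducing property}.

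For uniqueness, I will suppose that $\widetilde{K}$ is another function from $X \times X'$ to $\mathbb{R}^n$ satisfying (i) and (ii). Fixing $x \in X$ and $j \in \mathbb{N}_n$, subtracting the two reproducing identities yields $\langle K_j(x,\cdot) - \widetilde{K}_j(x,\cdot), f\rangle_{\mathcal{B}^{\sharp}\times\mathcal{B}} = 0$ for every $f \in \mathcal{B}$, so applying $\kappa^{-1}$ and using the definition of the bilinear form gives $\kappa^{-1}(K_j(x,\cdot) - \widetilde{K}_j(x,\cdot)) = 0$ as a functional on $\mathcal{B}$; since $\kappa$ is an isometric isomorphism, this forces $K_j(x,\cdot) = \widetilde{K}_j(x,\cdot)$ as elements of the Banach space $\mathcal{B}^{\sharp}$. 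The only subtlety I anticipate is to pass from equality in the norm of $\mathcal{B}^{\sharp}$ to pointwise equality on $X'$; this is precisely the content of $\mathcal{B}^{\sharp}$ being a Banach space of functions in the sense recalled at the beginning of Section \ref{section: VV-RKBS}, which ensures that a zero-norm element is the zero function. This is the only delicate point; everything else is bookkeeping with $\kappa$ and the bilinear form.
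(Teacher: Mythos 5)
Your construction $K_j(x,x'):=\kappa(\delta_{x,j})(x')$ and the verification of (i) and (ii) via the bilinear form are exactly the paper's argument. Your proof is correct and in fact goes slightly further by writing out the uniqueness step (which the paper leaves to the reader), correctly reducing it to the injectivity of $\kappa$ together with the convention that a zero-norm element of a Banach space of functions is the zero function.
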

\begin{proof}
We construct a vector-valued function $K: X\times X'\to\mathbb{R}^n$ that satisfies the desired properties. Let $\kappa$ be the isometric isomorphism from $\mathcal{B}'$ to $\mathcal{B}^{\sharp}$. Since $\delta_{x,j}\in\mathcal{B}'$ for each $x\in X$ and $j\in\mathbb{N}_n$, it follows that $\kappa(\delta_{x,j})\in\mathcal{B}^{\sharp}$, and we obtain 
\begin{equation}\label{K_xj}
    f_j(x)=\langle\delta_{x,j},f\rangle_{\mathcal{B}}=\langle \kappa(\delta_{x,j}),f\rangle_{\mathcal{B}^{\sharp}\times\mathcal{B}}, \ \mbox{for all}\ f\in\mathcal{B}.
    \end{equation}
We define the $j$-th component of the vector-valued function $K: X\times X'\to\mathbb{R}^n$ by
\begin{equation}\label{def-K}
   K_j(x,x'):=\kappa(\delta_{x,j})(x'),\ x\in X, \ x'\in X',\ j\in\mathbb{N}_n. 
\end{equation}
By construction, we have that $K_j(x,\cdot)\in\mathcal{B}^{\sharp}$ for all $x\in X$ and $j\in\mathbb{N}_n$. Substituting \eqref{def-K} into the right-hand side of \eqref{K_xj} yields the reproducing property in \eqref{def: reproducing property}. 

The uniqueness of the vector-valued function $K$ follows directly from the properties of the isometric isomorphism $\kappa$. The details are left to the interested reader.
%
\end{proof}

We refer to the vector-valued function $K:X\times X'\rightarrow\mathbb{R}^n$, which satisfies $K_j(x,\cdot)\in\mathcal{B}^{\sharp}$ for all $x\in X$, $j\in\mathbb{N}_n$, and the reproducing property given by equation \eqref{def: reproducing property}, as the reproducing kernel for the vector-valued RKBS $\mathcal{B}$. Furthermore, equation \eqref{def: reproducing property} is known as the reproducing property. Clearly, it follows that $K(x,\cdot)\in(\mathcal{B}^{\sharp})^n$ for all $x\in X$. 

As established in Proposition \ref{existence of reproducing kernel}, each Banach space $\mathcal{B}^{\sharp}$ of functions that is isometrically isomorphic to $\mathcal{B}'$ corresponds to a unique reproducing kernel $K$ for the RKBS $\mathcal{B}$. 
The choice
of $\mathcal{B}^{\sharp}$ affects the specific form of 
$K$, meaning that different selections of $\mathcal{B}^{\sharp}$ yield different reproducing kernels. The pair $(\mathcal{B}^{\sharp},K)$ enables  the reproduction of the point evaluation operator as expressed in equation \eqref{def: reproducing property}. The reproducing kernel plays a fundamental role in both theoretical analysis and practical applications. While a vector-valued RKBS may admit multiple reproducing kernels, selecting an appropriate one is essential. Beyond merely reproducing point evaluation functionals or operators, the reproducing kernel $K(\cdot, x')$, for $x'\in X'$, should also effectively represent solutions to learning problems. The concept of the vector-valued RKBS and its reproducing kernel will provide a foundation for understanding the hypothesis space in deep learning, as discussed in the next section.

It is important to note that although $\mathcal{B}$ is a space of vector-valued functions, the $\delta$-dual space  $\mathcal{B}'$ defined here consists of scalar-valued functions.  This follows from the form of the point evaluation functionals in set $\Delta$ defined in \eqref{Def:Delta}. The definition of the $\delta$-dual space for the vector-valued RKBS $\mathcal{B}$ is not unique—one could alternatively define it as a space of vector-valued functions. In this paper, we adopt the current form of $\mathcal{B}'$ due to its simplicity and sufficiency for our purposes. Other possible formulations of the $\delta$-dual space will be explored in future work.

\section{Hypothesis Space}\label{section: hypothesis space}

In this section, we return to understanding the vector space $\mathcal{B}_\mathbb{W}$ introduced in Section \ref{section: Learning with Deep Neural Networks} from the RKBS viewpoint. Specifically, our goal is to introduce a vector-valued RKBS in which the vector space $\mathcal{B}_\mathbb{W}$ is weakly$^*$ dense. The resulting vector-valued RKBS will serve as the hypothesis space for deep learning.

We first construct the vector-valued RKBS. Recalling the parameter space $\Theta$ defined by equation \eqref{Def:Theta}, we use $C_0(\Theta)$ to denote the space of the continuous {\it scalar-valued} functions vanishing at infinity on $\Theta$. We equip the supremum norm on $C_0(\Theta)$, namely, $\|f\|_{\infty}:=\sup_{\theta\in\Theta}|f(\theta)|$, for all $f\in C_0(\Theta)$. For the function $\mathcal{N}(x,\theta)$, $x\in\mathbb{R}^s$, $\theta\in\Theta$, defined by equation \eqref{Def:kernel}, we denote by $\mathcal{N}_k({x},\theta)$ the $k$-th component of $\mathcal{N}({x},\theta)$, for $k\in\mathbb{N}_t$.  
We require that all components $\mathcal{N}_k({x},\cdot)$ with a weight belong to $C_0(\Theta)$ for all $x\in\mathbb{R}^s$. Specifically, we assume that there exists
a continuous weight function $\rho:\Theta\to\mathbb{R}^+:=(0,\infty)$ such that the functions
\begin{equation}\label{requirement}
\mathcal{N}_k({x},\cdot)\rho(\cdot)\in C_0(\Theta), \ \ \mbox{for all} \ \ x\in \mathbb{R}^s, \ k\in\mathbb{N}_t.
\end{equation}
An example of such a weight function is given by the rapidly decreasing function
\begin{equation}\label{gaussian weight function}
    \rho(\theta):=\exp(-\|\theta\|_2^2), \ \ \theta\in\Theta.
\end{equation}
Requirement \eqref{requirement} in fact imposes a hypothesis to the activation function $\sigma$: (i) $\sigma$ is continuous and (ii) when the weight function $\rho$ is chosen as \eqref{gaussian weight function}, we need to select the activation function $\sigma$ having a growth rate no greater than polynomials. We remark that many commonly used activation functions satisfy this requirement. They include the ReLU function 
$$
\sigma(x):=\max\{0,x\},\ \ x\in\mathbb{R},
$$ 
and the sigmoid function 
$$
\sigma(x):=\frac{1}{1+e^{-x}},\ \ x\in\mathbb{R}.
$$

We need a measure on the set $\Theta$.
A Radon measure \cite{folland1999real} on $\Theta$ is a Borel measure on $\Theta$ that is finite on all compact sets of $\Theta$, outer regular on all Borel sets of $\Theta$, and inner regular on all open sets of $\Theta$. Let $\mathcal{M}(\Theta)$ denote the space of finite Radon measures, equipped with the total variation norm 
\begin{equation}\label{TVnorm}
    \|\mu\|_{\mathrm{TV}}:=\sup\left\{\sum_{k=1}^\infty\left|\mu(E_k)\right|:\Theta=\bigcup_{k=1}^\infty E_k,\ E_i\cap E_j=\emptyset\text{ whenever }i\neq j \right\},\ \  \mu\in\mathcal{M}(\Theta),
\end{equation}
where $E_k$ are required to be measurable.
Note that  $\mathcal{M}(\Theta)$ is the dual space of  $C_0(\Theta)$ (see, for example, \cite{conway2019course}) and is therefore a Banach space. Moreover, the dual bilinear form on $\mathcal{M}(\Theta)\times C_0(\Theta)$ is given by 
\begin{equation}\label{DualBilinearForm}
    \langle \mu,g\rangle_{C_0(\Theta)}:=\int_{\Theta} g(\theta)d\mu(\theta),\ \text{for }\mu\in\mathcal{M}(\Theta),\ g\in C_0(\Theta). 
\end{equation}
%
For $\mu \in \mathcal{M}(\Theta)$, we let 
\begin{equation}\label{Def:f_mu^k}
    f_{\mu}^k(\cdot):=\int_\Theta \mathcal{N}_k(\cdot,\theta)\rho(\theta)d\mu(\theta), \ \  k\in \mathbb{N}_t, 
\end{equation}
and 
\begin{equation*}\label{vector-valued fmu}
f_\mu(\cdot):=\left[f_{\mu}^k(\cdot): k\in\mathbb{N}_t\right]^\top.
\end{equation*}
We introduce the vector space
\begin{equation}\label{banach space DNN}
    \mathcal{B}_{\mathcal{N}}:=\left\{f_\mu:\mu \in \mathcal{M}(\Theta)\right\},
\end{equation}
with norm 
\begin{equation}\label{banach space norm DNN}
    \|f_\mu\|_{\mathcal{B}_{\mathcal{N}}}:=\inf\left\{\|\nu\|_{\mathrm{TV}}:f_\nu=f_\mu,\ \nu\in\mathcal{M}(\Theta)\right\},
\end{equation}
where $f_\mu^k$, $k\in\mathbb{N}_t$, are defined by equation \eqref{Def:f_mu^k} and $\|\cdot\|_\mathrm{TV}$ is defined as \eqref{TVnorm}. Note that in definition \eqref{banach space norm DNN} of the norm $\|f_\mu\|_{\mathcal{B}_{\mathcal{N}}}$, the infimum is taken over all the measures $\nu\in\mathcal{M}(\Theta)$ that satisfy $t$ equality constraints
\begin{equation*}
    \int_\Theta \mathcal{N}_k(\cdot,\theta)\rho(\theta)d\mu(\theta)=\int_\Theta \mathcal{N}_k(\cdot,\theta)\rho(\theta)d\nu(\theta),\quad k\in\mathbb{N}_t. 
\end{equation*}
In particular, in the case $t=1$, where $f_\mu$ reduces to a neural network of a scalar-valued output, the norm $ \|f_\mu\|_{\mathcal{B}_{\mathcal{N}}}$ is taken over the measures $\nu\in\mathcal{M}(\Theta)$ that satisfies only a single equality constraint. 
The construction  of the Banach space of functions described above follows a standard recipe that can be traced back to the variation spaces \cite{barron1993universal,DeVore1998nonlinear,siegel2023Characterization}. Banach spaces so constructed were widely utilized to understand the deep neural networks \cite{bartolucci2023understanding,parhi2021banach,shenouda2024variation,bartolucci2024neural,parhi2022what,Bach2017breaking,spek2023duality}. Especially, the special case of $\mathcal{B}_{\mathcal{N}}$ with $\mathcal{N}$ being a neural network of a {\it single} hidden layer was recently studied in \cite{bartolucci2023understanding}. 

Elements of $\mathcal{B}_{\mathcal{N}}$ are vector-valued functions of the input variable. 
Since $\Theta=\Theta_\mathbb{W}$, these elements depend on the layer width set $\mathbb{W}$. Consider two different sets of layer widths, $\mathbb{W}$ and  $\widetilde{\mathbb{W}}$. The corresponding function spaces $\mathcal{B}_{\mathcal{N}}$ and $\mathcal{B}_{\widetilde{\mathcal{N}}}$, are generally distinct. In particular, if $\mathbb{W}\preccurlyeq\widetilde{\mathbb{W}}$, meaning that $m_j\leq \widetilde{m}_j$ for all $j\in\mathbb{N}_{D-1}$, then - using zero-padding for weight matrices and bias vectors - it can be shown, as in \cite{XuZhang2024}, that $\mathcal{N}(\cdot,\theta)=\widetilde{\mathcal{N}}(\cdot,\widetilde{\theta})$ for $\theta\in\Theta_\mathbb{W}$ and $\widetilde{\theta}:=[\theta,\mathbf{0}]\in\Theta_{\widetilde{\mathbb{W}}}$. In this case, if $\widetilde{\rho}([\theta,\mathbf{0}])=\rho(\theta)$ for all $\theta\in\Theta_{\mathbb{W}}$, it can be shown that $\mathcal{B}_{\mathcal{N}}\subseteq\mathcal{B}_{\widetilde{\mathcal{N}}}$. Further discussion on this topic will be provided after we establish that $\mathcal{B}_{\mathcal{N}}$
is an RKBS later in this section.

Next, we establish that the space $\mathcal{B}_{\mathcal{N}}$, defined by \eqref{banach space DNN} with norm \eqref{banach space norm DNN}, is a Banach space that possesses a pre-dual space. This is achived by demonstrating that   $\mathcal{B}_{\mathcal{N}}$ is isometrically isomorphic to a quotient space. We begin by recalling the concept of a quotient space. Let $B$ be a Banach space with its dual space $B^*$ and let $M$ be a closed subspace of $B$. For each $f\in B$, the set $f+M$, which contains $f$, is called the coset of $M$.  The quotient space $B/M$ is then defined as  $B/M:=\{f+M: f\in B\}$, with the quotient norm given by 
\begin{equation*}
    \|f+M\|_{B/M}:=\inf\left\{\|f+g\|:g\in M\right\},\quad f\in B.
\end{equation*}
It is well-known \cite{megginson2012introduction} that $B/M$ is itself a Banach space. A Banach space $B$ is said to have a {\it pre-dual space} if there exists a Banach space $B_*$ such that $(B_*)^*=B$, in which case $B_*$ is referred to as a pre-dual space of $B$. 
We also introduce the notion of annihilators. Given subsets $M \subset B$ and $M' \subset B^*$, the annihilator of $M$ in $B^*$ is defined as 
$$
M^\perp:=\{\nu\in B^*:\langle\nu,f\rangle_{B}=0, \ \text{for all }f\in M\}.
$$
Similarly, the annihilator of $M'$ in $B$ is given by
$$
^{\perp}M':=\{f\in B: \langle\nu,f\rangle_{B}=0,
\ \mbox{for all}\  \nu\in M'\}.
$$  
We now review a result concerning the dual space of a closed subspace of a Banach space. Specifically, let $M$ be a closed subspace of a Banach space $B$. For any $\nu\in B^*$, we denote by $\nu|_{M}$ the restriction of $\nu$ to $M$. It is clear that $\nu|_{M}\in M^*$ and satisfies $\|\nu|_{M}\|_{M^*}\leq\|\nu\|_{B^*}$. Moreover, the dual space $M^*$ can be identified as $B^*/M^\perp$. In fact, by Theorem 10.1 in Chapter III of \cite{conway2019course}, the mapping $\tau:B^*/M^\perp\to M^*$ defined by 
$$
\tau(\nu+M^\perp):=\nu|_{M}, \ \mbox{for}\ \nu\in B^*,
$$ 
is an isometric isomorphism between $B^*/M^\perp$ and $M^*$.

To establish that $\mathcal{B}_{\mathcal{N}}$ is a Banach space, we identify a quotient space that is isometrically isomorphic to $\mathcal{B}_{\mathcal{N}}$. To this end, we introduce the closed subspace $\mathcal{S}$ of  $C_0(\Theta)$, defined as 
\begin{equation}\label{subspace of C0}
\mathcal{S}:=\overline{\mathrm{span}}\{\mathcal{N}_k({x},\cdot)\rho(\cdot): {x}\in\mathbb{R}^s,k\in\mathbb{N}_t\},
\end{equation}
where the closure is taken in the supremum norm. By definition, it is evident that $\mathcal{S}$ is a Banach space consisting of functions defined on the parameter space $\Theta$ and $\mathcal{S}^\perp$ is a closed subspace of $\mathcal{M}(\Theta)$.

\begin{proposition}\label{prop: BN is isometic isomorphic to quotient space}
Let $\Theta$ be the parameter space defined by \eqref{Def:Theta}, and let $\rho:\Theta\to\mathbb{R}^+$ be a continuous function on $\Theta$. If, 
for each $x\in \mathbb{R}^s$ and $k\in\mathbb{N}_t$, the function $\mathcal{N}_k({x},\cdot)\rho(\cdot)\in C_0(\Theta)$, then the space $\mathcal{B}_{\mathcal{N}}$, defined by \eqref{banach space DNN} endowed with the norm \eqref{banach space norm DNN}, is a Banach space with a pre-dual space $\mathcal{S}$ defined by \eqref{subspace of C0}.
\end{proposition}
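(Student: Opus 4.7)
The plan is to identify $\mathcal{B}_{\mathcal{N}}$ with the quotient Banach space $\mathcal{M}(\Theta)/\mathcal{S}^\perp$ via a natural isometric isomorphism, and then invoke the duality $\mathcal{S}^{*}\cong C_0(\Theta)^{*}/\mathcal{S}^\perp=\mathcal{M}(\Theta)/\mathcal{S}^\perp$ recalled in the paragraph immediately preceding the proposition (Theorem 10.1 in Chapter III of \cite{conway2019course}). Since $\mathcal{S}^\perp$ is automatically closed in $\mathcal{M}(\Theta)$, the quotient is a Banach space, and pulling the Banach structure back along the isomorphism will simultaneously yield that $\mathcal{B}_{\mathcal{N}}$ is Banach and that $\mathcal{S}$ serves as a pre-dual.

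First I would define $T:\mathcal{M}(\Theta)/\mathcal{S}^\perp\to\mathcal{B}_{\mathcal{N}}$ by $T(\mu+\mathcal{S}^\perp):=f_\mu$. Well-definedness is immediate: if $\mu-\nu\in\mathcal{S}^\perp$, then since $\mathcal{N}_k(x,\cdot)\rho(\cdot)\in\mathcal{S}$ for every $x\in\mathbb{R}^s$, $k\in\mathbb{N}_t$ by \eqref{subspace of C0} together with the standing assumption \eqref{requirement}, one has $\int_\Theta \mathcal{N}_k(x,\theta)\rho(\theta)\,d(\mu-\nu)(\theta)=0$, so $f_\mu=f_\nu$. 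Surjectivity of $T$ is the definition \eqref{banach space DNN}. For injectivity, suppose $f_\mu=f_\nu$; then $\langle\mu-\nu,g\rangle_{C_0(\Theta)}=0$ for every $g$ in the generating set $\{\mathcal{N}_k(x,\cdot)\rho(\cdot):x\in\mathbb{R}^s,k\in\mathbb{N}_t\}$. By linearity this persists on the span, and since $\mu-\nu\in\mathcal{M}(\Theta)=C_0(\Theta)^{*}$ acts continuously on $C_0(\Theta)$, it persists on the supremum-norm closure $\mathcal{S}$. Hence $\mu-\nu\in\mathcal{S}^\perp$, which gives injectivity.

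Next I would verify that $T$ is norm-preserving. Starting from the definition \eqref{banach space norm DNN} and using that $f_\nu=f_\mu$ is equivalent to $\nu-\mu\in\mathcal{S}^\perp$ (by the same argument as above), I write
\begin{align*}
\|f_\mu\|_{\mathcal{B}_{\mathcal{N}}}
&=\inf\{\|\nu\|_{\mathrm{TV}}:f_\nu=f_\mu,\ \nu\in\mathcal{M}(\Theta)\}\\
&=\inf\{\|\mu+\eta\|_{\mathrm{TV}}:\eta\in\mathcal{S}^\perp\}
=\|\mu+\mathcal{S}^\perp\|_{\mathcal{M}(\Theta)/\mathcal{S}^\perp}.
\end{align*}
Thus $T$ is an isometric isomorphism. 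Since $\mathcal{M}(\Theta)/\mathcal{S}^\perp$ is a Banach space (quotient of a Banach space by a closed subspace), so is $\mathcal{B}_{\mathcal{N}}$ under $\|\cdot\|_{\mathcal{B}_{\mathcal{N}}}$, and \eqref{banach space norm DNN} is therefore a genuine norm (in particular, the infimum behaves as the quotient norm). Composing $T^{-1}$ with the isometric isomorphism $\tau:\mathcal{M}(\Theta)/\mathcal{S}^\perp\to\mathcal{S}^{*}$ finally yields $\mathcal{B}_{\mathcal{N}}\cong\mathcal{S}^{*}$, establishing that $\mathcal{S}$ is a pre-dual of $\mathcal{B}_{\mathcal{N}}$.

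The only nontrivial point is the injectivity step: it uses that the space $\mathcal{S}$ was defined precisely as the closed linear span of the generators $\mathcal{N}_k(x,\cdot)\rho(\cdot)$, so that any finite Radon measure annihilating these generators must, by continuity and linearity, annihilate all of $\mathcal{S}$. Everything else is bookkeeping that follows directly from the duality between $\mathcal{M}(\Theta)$ and $C_0(\Theta)$, the definition of the total variation norm, and the standard identification of the dual of a closed subspace with the quotient by its annihilator.
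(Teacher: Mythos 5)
Your proposal is correct and follows essentially the same route as the paper's proof: identify $\mathcal{B}_{\mathcal{N}}$ isometrically with the quotient $\mathcal{M}(\Theta)/\mathcal{S}^\perp$ (the paper uses the inverse map $\varphi(f_\mu)=\mu+\mathcal{S}^\perp$, but this is only a cosmetic difference), then compose with the standard identification $\mathcal{M}(\Theta)/\mathcal{S}^\perp\cong\mathcal{S}^{*}$ from Theorem 10.1 of Conway. Your explicit continuity argument for why a measure annihilating the generators annihilates all of $\mathcal{S}$ is a point the paper passes over more quickly, but the substance is identical.
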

\begin{proof}
It is clear that $\mathcal{B}_{\mathcal{N}}$ is a normed space endowed with the norm $\|\cdot\|_{\mathcal{B}_{\mathcal{N}}}$. To prove completeness, we show that $\mathcal{B}_{\mathcal{N}}$ is isometrically isomorphic to the quotient space $\mathcal{M}(\Theta)/\mathcal{S}^\perp$, which is a Banach space.

We begin by characterizing the annihilator $\mathcal{S}^\perp$ in $\mathcal{M}(\Theta)$. By the definition \eqref{subspace of C0}, a measure $\mu \in \mathcal{S}^\perp$ satisfies
$\langle\mu,\mathcal{N}_k({x},\cdot)\rho(\cdot)\rangle_{C_0(\Theta)}=0$ for all $x\in \mathbb{R}^s$ and $k\in\mathbb{N}_t$. 
By the definition \eqref{Def:f_mu^k} of $f_{\mu}^k$ and the dual bilinear form \eqref{DualBilinearForm}, this is equivalent to $f_{\mu}^k(x) = 0$ for all $x\in \mathbb{R}^s$, which implies $f_{\mu} = 0$. 
Consequently, we obtain
\begin{equation}\label{Sperp}
    \mathcal{S}^\perp=\{\mu\in\mathcal{M}(\Theta): f_{\mu}=0\}.
\end{equation}
Now, define the mapping $\varphi$ from $\mathcal{B}_{\mathcal{N}}$ to $\mathcal{M}(\Theta)/\mathcal{S}^\perp$ by 
\begin{equation}\label{def: mapping phi}\varphi(f_\mu):=\mu+\mathcal{S}^\perp, \quad\mu\in\mathcal{M}(\Theta).
\end{equation}
We show that $\varphi$ is an isometric isomorphism. 


We first show that $\varphi$ is an isometry. For any $f_{\mu}\in\mathcal{B}_\mathcal{N}$ with $\mu\in\mathcal{M}(\Theta)$, another measure $\nu\in\mathcal{M}(\Theta)$ satisfies $f_{\nu}=f_{\mu}$ if and only if $f_{\nu-\mu}=0$, which by \eqref{Sperp} is equivalent to $\nu-\mu\in \mathcal{S}^\perp$, i.e., $\nu=\mu+\mu'$ for some $\mu'\in \mathcal{S}^\perp$. Hence, from \eqref{banach space norm DNN},
$$
\|f_{\mu}\|_{\mathcal{B}_{\mathcal{N}}}=\inf\left\{\|\mu+\mu'\|_{\mathrm{TV}}:\mu'\in \mathcal{S}^\perp\right\}.
$$
By the definition of the quotient norm, we obtain
$$
\|f_{\mu}\|_{\mathcal{B}_{\mathcal{N}}}=\|\mu+\mathcal{S}^\perp\|_{\mathcal{M}(\Theta)/\mathcal{S}^\perp}.
$$
Thus, from \eqref{def: mapping phi}, we conclude that $\|\varphi(f_{\mu})\|_{\mathcal{M}(\Theta)/\mathcal{S}^\perp}=\|f_{\mu}\|_{\mathcal{B}_{\mathcal{N}}}$. This shows that $\varphi$ is an isometry. 

We further establish the bijectivity of $\varphi$.
Since $\varphi$ is an isometry, it is injective. Clearly,  $\varphi$ is surjective. Hence, it is bijective.
Consequently, $\varphi$ is an isometric isomorphism between $\mathcal{B}_{\mathcal{N}}$ and the Banach space $\mathcal{M}(\Theta)/\mathcal{S}^\perp$,  proving that $\mathcal{B}_{\mathcal{N}}$ is complete.  

Next, we establish that $\mathcal{B}_{\mathcal{N}}$ is isometrically isomorphic to the dual space of $\mathcal{S}$. Since $\mathcal{S}$ is a closed subspace of $C_0(\Theta)$ and $(C_0(\Theta))^*=\mathcal{M}(\Theta)$, by Theorem 10.1 in \cite{conway2019course} with $B:=C_0(\Theta)$ and $M:=\mathcal{S}$, the mapping 
\begin{equation*}\label{pho}
   \tau:\mathcal{M}(\Theta)/\mathcal{S}^\perp\to \mathcal{S}^*, \ \      \tau(\mu+\mathcal{S}^\perp):=\mu|_{\mathcal{S}},\quad\mu\in\mathcal{M}(\Theta)
\end{equation*}
is an isometric isomorphism. 
Because $\varphi$ in 
\eqref{def: mapping phi} is an isometric isomorphism from $\mathcal{B}_{\mathcal{N}}$ to $\mathcal{M}(\Theta)/\mathcal{S}^\perp$, the composition $\tau\circ\varphi$ is an isometric isomorphism from  $\mathcal{B}_{\mathcal{N}}$ to $\mathcal{S}^*$. Thus, $\mathcal{S}$ is a pre-dual space of $\mathcal{B}_{\mathcal{N}}$.
\end{proof}

The space $\mathcal{B}_{\mathcal{N}}$ with the norm $\|\cdot\|_{\mathcal{B}_{\mathcal{N}}}$, guaranteed by Proposition \ref{prop: BN is isometic isomorphic to quotient space}, is a Banach space. We denote by $\mathcal{B}_{\mathcal{N}}^*$ the dual space of $\mathcal{B}_{\mathcal{N}}$ endowed with the norm  
\begin{equation}\label{norm of BN*}
    \|\ell\|_{\mathcal{B}_{\mathcal{N}}^*}=\sup\{|\langle\ell,f_\mu\rangle_{\mathcal{B}_{\mathcal{N}}}|:\|f_\mu\|_{\mathcal{B}_{\mathcal{N}}}=1, f_\mu\in\mathcal{B}_{\mathcal{N}}\}, \ \ \mbox{for}\ \ \ell\in \mathcal{B}_{\mathcal{N}}^*.
\end{equation} 
The dual space  $\mathcal{B}_{\mathcal{N}}^*$ is again a Banach space.
Moreover, it follows from Proposition \ref{prop: BN is isometic isomorphic to quotient space} that the space $\mathcal{S}$ is a pre-dual space of $\mathcal{B}_{\mathcal{N}}$, that is, 
$(\mathcal{B}_{\mathcal{N}})_*=\mathcal{S}.$
We remark that the dual bilinear form on $\mathcal{B}_{\mathcal{N}}\times\mathcal{S}$ is given by
\begin{equation}\label{dual bilinear on BNS}
\langle f_{\mu}, g\rangle_{\mathcal{S}}=\langle\mu, g\rangle_{C_0(\Theta)}, \ \mbox{for}\ f_{\mu}\in\mathcal{B}_{\mathcal{N}},\ g\in \mathcal{S}.
\end{equation}
According to Proposition \ref{prop: BN is isometic isomorphic to quotient space}, the space $\mathcal{S}$ is the pre-dual space of $\mathcal{B}_{\mathcal{N}}$, that is,
$\mathcal{S}^*=\mathcal{B}_{\mathcal{N}}$.
Thus, we obtain that $\mathcal{S}^{**}=\mathcal{B}_{\mathcal{N}}^*$.
It is well-known (for example, see \cite{conway2019course}) that $\mathcal{S}\subseteq \mathcal{S}^{**}$ in the sense of isometric embedding.
Hence,   $\mathcal{S}\subseteq \mathcal{B}_{\mathcal{N}}^*$ and there holds 
\begin{equation}\label{natural-map-predual}
\langle g,f_{\mu}\rangle_{\mathcal{B}_{\mathcal{N}}}=\langle f_{\mu},g\rangle_{\mathcal{S}},
\ \mbox{for all} \ f_{\mu}\in \mathcal{B}_{\mathcal{N}} \ \mbox{and all} \ g\in \mathcal{S}.
\end{equation}

We now turn to establishing that $\mathcal{B}_{\mathcal{N}}$ is a vector-valued RKBS on $\mathbb{R}^s$.

\begin{theorem}\label{theorem: BN vector valued RKBS}
Let $\Theta$ be the parameter space defined by \eqref{Def:Theta}, and $\rho:\Theta\to\mathbb{R}^+$ be a continuous function on $\Theta$. If  
for each $x\in \mathbb{R}^s$ and $k\in\mathbb{N}_t$, the function $\mathcal{N}_k({x},\cdot)\rho(\cdot)$ belongs to $C_0(\Theta)$, then the Banach space $\mathcal{B}_{\mathcal{N}}$ defined by \eqref{banach space DNN} endowed with the norm \eqref{banach space norm DNN} is a vector-valued RKBS on $\mathbb{R}^s$.
\end{theorem}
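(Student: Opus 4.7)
The plan is to invoke Proposition \ref{prop: component wise for vector-valued RKBS}, which reduces the task of verifying that $\mathcal{B}_{\mathcal{N}}$ is a vector-valued RKBS to proving that, for each $x\in\mathbb{R}^s$ and $k\in\mathbb{N}_t$, the scalar-valued component point evaluation functional $\delta_{x,k}$ is continuous on $\mathcal{B}_{\mathcal{N}}$. So first I would check that $\mathcal{B}_{\mathcal{N}}$ is indeed a space of vector-valued functions in the sense required by Definition \ref{def: vector-valued RKBS}, namely that $\|f_\mu\|_{\mathcal{B}_{\mathcal{N}}}=0$ forces $f_\mu(x)=\mathbf{0}$ for every $x$; this follows because, by Proposition \ref{prop: BN is isometic isomorphic to quotient space}, the norm is the quotient norm on $\mathcal{M}(\Theta)/\mathcal{S}^\perp$, and $\mathcal{S}^\perp=\{\mu:f_\mu=0\}$ by equation \eqref{Sperp}.

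Next, I would estimate $|\delta_{x,k}(f_\mu)|=|f_\mu^k(x)|$ directly from the integral representation \eqref{Def:f_mu^k}. For any representative $\nu\in\mathcal{M}(\Theta)$ with $f_\nu=f_\mu$, using the dual bilinear form \eqref{DualBilinearForm}, I have
\begin{equation*}
|f_\mu^k(x)|=\left|\int_\Theta \mathcal{N}_k(x,\theta)\rho(\theta)\,d\nu(\theta)\right|\leq \|\mathcal{N}_k(x,\cdot)\rho(\cdot)\|_{\infty}\,\|\nu\|_{\mathrm{TV}}.
\end{equation*}
The quantity $C_{x,k}:=\|\mathcal{N}_k(x,\cdot)\rho(\cdot)\|_{\infty}$ is finite by the hypothesis \eqref{requirement}. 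Taking the infimum over all $\nu\in\mathcal{M}(\Theta)$ with $f_\nu=f_\mu$, and using the definition \eqref{banach space norm DNN} of the norm on $\mathcal{B}_{\mathcal{N}}$, yields
\begin{equation*}
|\delta_{x,k}(f_\mu)|\leq C_{x,k}\|f_\mu\|_{\mathcal{B}_{\mathcal{N}}},
\end{equation*}
which is precisely the continuity estimate \eqref{elementwise functionals}.

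Finally, applying Proposition \ref{prop: component wise for vector-valued RKBS} in the reverse direction gives that $\delta_x$ itself is continuous (with constant $\max_{k\in\mathbb{N}_t}C_{x,k}$) with respect to, say, the $\ell^\infty$ norm on $\mathbb{R}^t$, and then by equivalence of norms on $\mathbb{R}^t$, with respect to any norm. This completes the verification of the defining property of a vector-valued RKBS. I do not expect any serious obstacle: the argument hinges on the duality $\mathcal{M}(\Theta)=(C_0(\Theta))^*$, the assumption $\mathcal{N}_k(x,\cdot)\rho(\cdot)\in C_0(\Theta)$ which supplies a finite uniform bound, and the passage from the bound for a single measure $\nu$ to a bound in the quotient norm via an infimum.
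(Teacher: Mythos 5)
Your proposal is correct and follows essentially the same route as the paper's proof: reduce to component-wise continuity via Proposition \ref{prop: component wise for vector-valued RKBS}, bound $|f_\mu^k(x)|$ by $\|\mathcal{N}_k(x,\cdot)\rho(\cdot)\|_\infty\|\nu\|_{\mathrm{TV}}$ for each representative $\nu$, and take the infimum to pass to the quotient norm. Your preliminary check that $\|f_\mu\|_{\mathcal{B}_{\mathcal{N}}}=0$ forces $f_\mu=0$ is a small extra verification the paper leaves implicit, but it does not change the argument.
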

\begin{proof}
According to Proposition \ref{prop: component wise for vector-valued RKBS} with $X:=\mathbb{R}^s$, it suffices to prove that for each $x\in\mathbb{R}^s$, $k\in\mathbb{N}_t$, there exists a positive constant $C_{x,k}$ such that
\begin{equation}\label{proof fmuk leq Cxk f norm}
    |f_\mu^k(x)|\leq C_{x,k}\|f_\mu\|_{\mathcal{B}_{\mathcal{N}}}, \ \mbox{for all}\ f_{\mu}\in\mathcal{B}_\mathcal{N}.
\end{equation}
To this end, for any $f_{\mu}\in\mathcal{B}_\mathcal{N}$, we obtain from definition \eqref{Def:f_mu^k} of $f_\mu^k$ that 
\begin{equation}\label{f_mu^k}
|f_\mu^k({x})|\leq \|\mathcal{N}_k({x},\cdot)\rho(\cdot)\|_\infty\|\nu\|_{\mathrm{TV}},
\end{equation}
for any $\nu\in\mathcal{M}(\Theta)$ satisfying $f_\nu=f_\mu$.  
By taking infimum of both sides of inequality \eqref{f_mu^k} over $\nu\in\mathcal{M}(\Theta)$ satisfying $f_\nu=f_\mu$ and employing definition \eqref{banach space norm DNN}, we obtain that 
$$
|f_\mu^k({x})|\leq\|\mathcal{N}_k({x},\cdot)\rho(\cdot)\|_\infty\|f_\mu\|_{\mathcal{B}_{\mathcal{N}}}.
$$
Letting $C_{x,k}:=\|\mathcal{N}_k({x},\cdot)\rho(\cdot)\|_\infty$, we get inequality \eqref{proof fmuk leq Cxk f norm}.
\end{proof}

Next, we identify 
the reproducing kernel of the vector-valued RKBS $\mathcal{B}_{\mathcal{N}}$. According to Proposition \ref{existence of reproducing kernel}, the existence of the reproducing kernel requires to characterize the $\delta$-dual space of $\mathcal{B}_{\mathcal{N}}$.  
We note that the $\delta$-dual space  $\mathcal{B}'_{\mathcal{N}}$ is the closure of  
\begin{equation*}\label{delta dual for BN}
\Delta:=\mathrm{span}\{\delta_{x,k}:x\in\mathbb{R}^s,\ k\in\mathbb{N}_t\},
\end{equation*}
in the norm topology \eqref{norm of BN*} of $\mathcal{B}_{\mathcal{N}}^*$. 
We will show that $\Delta$ is isometrically isomorphic to  
$$
\mathbb{S}:=\mathrm{span}\{\mathcal{N}_k({x},\cdot)\rho(\cdot):{x}\in\mathbb{R}^s,\ k\in\mathbb{N}_t\},
$$
a subspace of $\mathcal{S}$.   To this end, we introduce a mapping $\Psi:\Delta\to \mathbb{S}$ by 
\begin{equation}\label{isometric isomorphism_delta_KX}   \Psi\left(\sum_{j\in\mathbb{N}_m}\alpha_{j}\delta_{{x}_j,k_j}\right):= \sum_{j\in\mathbb{N}_m}\alpha_{j} \mathcal{N}_{k_j}({x}_j,\cdot)\rho(\cdot),
\end{equation}
for all $m\in\mathbb{N}$, $\alpha_{j}\in\mathbb{R}$, ${x}_j\in\mathbb{R}^s$, $k_j\in\mathbb{N}_t$, and $j\in\mathbb{N}_m$. 

\begin{lemma}\label{Lemma:isometric isomorphism_delta_KX}
The mapping $\Psi$ defined by \eqref{isometric isomorphism_delta_KX} is an isometric isomorphism between $\Delta$ and $\mathbb{S}$. 
\end{lemma}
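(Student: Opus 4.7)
The plan is to recognize $\Psi$ as essentially the inverse of the canonical embedding $J : \mathcal{S} \hookrightarrow \mathcal{S}^{**} = \mathcal{B}_{\mathcal{N}}^*$, restricted to $\mathbb{S}$. Since $\mathcal{S}$ is a pre-dual of $\mathcal{B}_{\mathcal{N}}$ by Proposition \ref{prop: BN is isometic isomorphic to quotient space}, $J$ is an isometric embedding; the task then reduces to showing that $J$ maps $\mathbb{S}$ bijectively onto $\Delta$ and that this identification sends each generator $\mathcal{N}_k(x,\cdot)\rho(\cdot)$ to $\delta_{x,k}$.

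First, I would verify the key pointwise identification: for every $x\in\mathbb{R}^s$ and $k\in\mathbb{N}_t$, and every $f_\mu\in\mathcal{B}_{\mathcal{N}}$,
\[
\langle \mathcal{N}_k(x,\cdot)\rho(\cdot),\, f_\mu\rangle_{\mathcal{B}_{\mathcal{N}}}
= \langle f_\mu,\, \mathcal{N}_k(x,\cdot)\rho(\cdot)\rangle_{\mathcal{S}}
= \langle \mu,\, \mathcal{N}_k(x,\cdot)\rho(\cdot)\rangle_{C_0(\Theta)}
= \int_\Theta \mathcal{N}_k(x,\theta)\rho(\theta)\,d\mu(\theta)
= f_\mu^k(x),
\]
where the first equality uses \eqref{natural-map-predual}, the second uses \eqref{dual bilinear on BNS}, and the third uses \eqref{DualBilinearForm} and \eqref{Def:f_mu^k}. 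Since $f_\mu^k(x) = \delta_{x,k}(f_\mu)$, this shows that, under the canonical embedding $J$, the element $\mathcal{N}_k(x,\cdot)\rho(\cdot)\in\mathbb{S}$ coincides with the functional $\delta_{x,k}\in\mathcal{B}_{\mathcal{N}}^*$. Extending by linearity, $J\left(\sum_j \alpha_j\mathcal{N}_{k_j}(x_j,\cdot)\rho(\cdot)\right) = \sum_j \alpha_j\delta_{x_j,k_j}$, so $J$ carries $\mathbb{S}$ onto $\Delta$.

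Next I would establish well-definedness, linearity, and bijectivity of $\Psi$. Linearity is immediate from the defining formula. For well-definedness, suppose two representations give the same element of $\Delta$; then their difference is $0$ in $\mathcal{B}_{\mathcal{N}}^*$, and since $J$ is an isometric embedding hence injective, the corresponding element of $\mathbb{S}$ is also $0$, meaning $\Psi$ assigns the same image to both representations. Surjectivity of $\Psi$ onto $\mathbb{S}$ is obvious from the defining formula, and injectivity will follow from the isometry property established below.

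Finally, the isometry of $\Psi$ follows directly from the isometry of $J$. For any $u:=\sum_j\alpha_j\delta_{x_j,k_j}\in\Delta$, setting $v:=\Psi(u) = \sum_j\alpha_j\mathcal{N}_{k_j}(x_j,\cdot)\rho(\cdot)\in\mathbb{S}\subset\mathcal{S}$, the computation above gives $J(v) = u$ in $\mathcal{B}_{\mathcal{N}}^*$. Because $J$ is an isometric embedding of $\mathcal{S}$ into $\mathcal{S}^{**}=\mathcal{B}_{\mathcal{N}}^*$, we obtain
\[
\|u\|_{\mathcal{B}_{\mathcal{N}}^*} = \|J(v)\|_{\mathcal{B}_{\mathcal{N}}^*} = \|v\|_{\mathcal{S}} = \|v\|_\infty = \|\Psi(u)\|_{\mathbb{S}},
\]
so $\Psi$ is an isometry, hence an isometric isomorphism between $\Delta$ and $\mathbb{S}$. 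The main conceptual obstacle is recognizing that the natural pre-dual structure $(\mathcal{B}_{\mathcal{N}})_*=\mathcal{S}$ already encodes the identification $\mathcal{N}_k(x,\cdot)\rho(\cdot)\leftrightarrow\delta_{x,k}$; once this is seen, everything reduces to the standard isometric embedding of a Banach space into its bidual.
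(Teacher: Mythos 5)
Your proposal is correct and follows essentially the same route as the paper: the paper's proof computes $\|\ell\|_{\mathcal{B}_{\mathcal{N}}^*}$ and $\|\Psi(\ell)\|_{\infty}$ as suprema over the unit ball of $\mathcal{B}_{\mathcal{N}}=\mathcal{S}^*$ and matches them via the identity $\langle\mu,\mathcal{N}_k(x,\cdot)\rho(\cdot)\rangle_{C_0(\Theta)}=f_\mu^k(x)$, which is precisely your identification of $J$ sending $\mathcal{N}_k(x,\cdot)\rho(\cdot)$ to $\delta_{x,k}$ combined with the isometry of the canonical embedding $\mathcal{S}\hookrightarrow\mathcal{S}^{**}$. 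Your packaging of the argument as $\Psi=J^{-1}|_{\Delta}$ is only a cosmetic reorganization of the same computation, with your explicit treatment of well-definedness being a small bonus that the paper leaves implicit.
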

\begin{proof}
We first prove that $\Psi$ is an isometry, that is, $\left\|\ell\right\|_{\mathcal{B}_{\mathcal{N}}^*}=\left\|\Psi(\ell)\right\|_{\infty}$, for all $\ell\in\Delta$. Let  $\ell$ be an arbitrary element of $\Delta$. Then there exist $m\in\mathbb{N}$, $\alpha_{j}\in\mathbb{R}$, ${x}_j\in\mathbb{R}^s$, $k_j\in\mathbb{N}_t$, and $j\in\mathbb{N}_m$ such that $\ell=\sum_{j\in\mathbb{N}_m}\alpha_{j}\delta_{{x}_j,k_j}$. By definition \eqref{norm of BN*} and the definition of the functionals $\delta_{x_j,k_j}$, $j\in\mathbb{N}_m$, we have that 
       \begin{equation}\label{proof norm of ell}
        \|\ell\|_{\mathcal{B}_{\mathcal{N}}^*}=\sup\left\{\left|\sum_{j\in\mathbb{N}_m}\alpha_{j} f_\mu^{k_j}(x_j)\right|:\left\|f_\mu\right\|_{\mathcal{B}_{\mathcal{N}}}=1, f_\mu\in\mathcal{B}_{\mathcal{N}}\right\}.
    \end{equation}
    We next compute $\|\Psi(\ell)\|_{\infty}$.
    By noting that $\Psi(\ell)\in \mathcal{S}$ and $\mathcal{S}^*=\mathcal{B}_{\mathcal{N}}$, we have that 
    $$
    \|\Psi(\ell)\|_\infty=\sup\left\{\left|\langle f_\mu, \Psi(\ell)\rangle_{\mathcal{S}}\right|:\left\|f_\mu\right\|_{\mathcal{B}_{\mathcal{N}}}=1, f_\mu\in\mathcal{B}_{\mathcal{N}}\right\}.
    $$
    Substituting equation \eqref{dual bilinear on BNS} with $g:=\Psi(\ell)$ into the right-hand side of the above equation, we get that 
    \begin{equation}\label{norm of psi ell}
        \|\Psi(\ell)\|_\infty=\sup\left\{\left|\langle \mu, \Psi(\ell)\rangle_{C_0(\Theta)}\right|:\left\|f_\mu\right\|_{\mathcal{B}_{\mathcal{N}}}=1, f_\mu\in\mathcal{B}_{\mathcal{N}}\right\}.
    \end{equation}
    According to definition \eqref{isometric isomorphism_delta_KX} of $\Psi$, there holds for any $f_{\mu}\in\mathcal{B}_{\mathcal{N}}$ that
    $$
    \langle \mu, \Psi(\ell)\rangle_{C_0(\Theta)}=\sum_{j\in\mathbb{N}_m}\alpha_{j} \langle \mu,\mathcal{N}_{k_j}({x}_j,\cdot)\rho(\cdot)\rangle_{C_0(\Theta)}.
    $$
    This together with definition \eqref{Def:f_mu^k} yields that 
    $
    \langle \mu, \Psi(\ell)\rangle_{C_0(\Theta)}=\sum_{j\in\mathbb{N}_m}\alpha_{j} f_{\mu}^{k_j}(x_j).
    $
    Substituting this equation into the right-hand side of \eqref{norm of psi ell} leads to
    \begin{equation}\label{proof: infinity normn of psi(ell) final form}
       \|\Psi(\ell)\|_\infty=\sup\left\{\left|\sum_{j\in\mathbb{N}_m}\alpha_{j} f_{\mu}^{k_j}(x_j)\right|:\left\|f_\mu\right\|_{\mathcal{B}_{\mathcal{N}}}=1, f_\mu\in\mathcal{B}_{\mathcal{N}}\right\}. 
    \end{equation}
    Comparing \eqref{proof norm of ell} and \eqref{proof: infinity normn of psi(ell) final form}, we obtain that 
    $   \|\ell\|_{\mathcal{B}_{\mathcal{N}}^*}=\|\Psi(\ell)\|_\infty$ and hence, $\Psi$ is an isometry between $\Delta$ and $\mathbb{S}$. The isometry of $\Psi$ further implies its injectivity.  Moreover, $\Psi$ is linear and surjective. Thus, $\Psi$ is  bijective. Therefore, $\Psi$ is an isometric isomorphism between $\Delta$ and $\mathbb{S}$. 
\end{proof}

The isometrically isomorphic relation between $\Delta$ and $\mathbb{S}$ is preserved after completing them. We state this result in the following lemma without proof.

\begin{lemma}\label{isometric-isomorphism-after-completion}
    Suppose that $A$ and $B$ are Banach spaces with norms $\|\cdot\|_A$ and $\|\cdot\|_B$, respectively. Let $A_0$ and $B_0$ be dense subsets of $A$ and $B$, respectively. If $A_0$ is isometrically isomorphic to $B_0$, then $A$ is isometrically isomorphic to $B$.
\end{lemma}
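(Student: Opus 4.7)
The plan is to extend the given isometric isomorphism $\phi_0 : A_0 \to B_0$ to an isometric isomorphism $\phi : A \to B$ by a standard density-and-completeness argument. For an arbitrary $a \in A$, density of $A_0$ in $A$ yields a sequence $\{a_n\} \subset A_0$ with $a_n \to a$, so $\{a_n\}$ is Cauchy in $A$. Since $\phi_0$ is an isometry, $\|\phi_0(a_n) - \phi_0(a_m)\|_B = \|a_n - a_m\|_A$, which shows $\{\phi_0(a_n)\}$ is Cauchy in $B$; completeness of $B$ produces a limit $b \in B$. I would define $\phi(a) := b$.

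Next I would check the routine consistency properties in order: well-definedness (any two approximating sequences $\{a_n\}, \{a'_n\} \subset A_0$ satisfy $\|a_n - a'_n\|_A \to 0$, hence $\|\phi_0(a_n) - \phi_0(a'_n)\|_B \to 0$, so both images converge to the same $b$); agreement with $\phi_0$ on $A_0$ (use the constant sequence); linearity of $\phi$ (pass to limits in $\phi_0(\alpha a_n + \beta a'_n) = \alpha \phi_0(a_n) + \beta \phi_0(a'_n)$, using continuity of the vector-space operations in $B$); and the isometry property $\|\phi(a)\|_B = \|a\|_A$ (pass to limits in $\|\phi_0(a_n)\|_B = \|a_n\|_A$, using continuity of the norms).

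For surjectivity, given $b \in B$, density of $B_0$ in $B$ provides $\{b_n\} \subset B_0$ with $b_n \to b$; set $a_n := \phi_0^{-1}(b_n) \in A_0$. Since $\phi_0^{-1}$ is also an isometry, $\{a_n\}$ is Cauchy in $A$, and completeness of $A$ gives $a_n \to a \in A$. The construction of $\phi$ then forces $\phi(a) = \lim_n \phi_0(a_n) = \lim_n b_n = b$. Injectivity is immediate from the isometry property ($\phi(a) = 0$ forces $\|a\|_A = 0$), so $\phi$ is an isometric isomorphism from $A$ onto $B$.

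No step presents a genuine obstacle; the argument is entirely a continuous-extension exercise, and the only point warranting care is confirming that every passage to the limit is justified by the isometry of $\phi_0$ and the continuity of the norm and linear structure. The completeness hypotheses on $A$ and $B$ are used exactly once each: on $B$ to define $\phi$, and on $A$ to establish surjectivity.
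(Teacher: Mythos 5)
Your argument is correct and is precisely the standard continuous-extension argument that the paper itself does not write out, instead deferring to Theorem 1.6-2 of Kreyszig's \emph{Introductory Functional Analysis with Applications}. All the limit passages you flag (well-definedness, linearity, isometry, surjectivity) are justified exactly as you describe, so nothing further is needed.
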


Lemma \ref{isometric-isomorphism-after-completion} may be obtained by applying Theorem 1.6-2 in \cite{kreyszig1991introductory}.
With the help of Lemmas \ref{Lemma:isometric isomorphism_delta_KX} and \ref{isometric-isomorphism-after-completion}, we identify in the following theorem the reproducing kernel for the RKBS $\mathcal{B}_{\mathcal{N}}$.

\begin{theorem}\label{theorem: Kernel of BN}
Let $\Theta$ be the parameter space defined by \eqref{Def:Theta}, and $\rho:\Theta\to\mathbb{R}^+$ be a continuous function on $\Theta$. Suppose that  
for each $x\in \mathbb{R}^s$ and $k\in\mathbb{N}_t$, the function $\mathcal{N}_k({x},\cdot)\rho(\cdot)$ belongs to $C_0(\Theta)$. If the vector-valued RKBS $\mathcal{B}_{\mathcal{N}}$ is defined by \eqref{banach space DNN} with the norm \eqref{banach space norm DNN}, then the vector-valued function 
\begin{equation}\label{Kernel}
\mathcal{K}(x,\theta):=\mathcal{N}(x,\theta)\rho(\theta), \ \ \mbox{for}\ \ (x,\theta)\in \mathbb{R}^s\times\Theta,
\end{equation}
is the reproducing kernel for space $\mathcal{B}_{\mathcal{N}}$.
\end{theorem}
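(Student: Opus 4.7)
The plan is to apply the existence-of-kernel construction in Proposition \ref{existence of reproducing kernel} with the concrete choice $\mathcal{B}^{\sharp}:=\mathcal{S}$, where $\mathcal{S}$ is the closed subspace of $C_0(\Theta)$ defined in \eqref{subspace of C0}. For this to make sense, I first need to verify that $\mathcal{S}$ is isometrically isomorphic to the $\delta$-dual space $\mathcal{B}'_{\mathcal{N}}$. Lemma \ref{Lemma:isometric isomorphism_delta_KX} already provides an isometric isomorphism $\Psi$ between $\Delta$ and $\mathbb{S}$. Since $\mathcal{B}'_{\mathcal{N}}$ is by definition the closure of $\Delta$ in $\mathcal{B}_{\mathcal{N}}^*$, and $\mathcal{S}$ is by definition the closure of $\mathbb{S}$ in $C_0(\Theta)$, an application of Lemma \ref{isometric-isomorphism-after-completion} produces a unique isometric extension $\kappa:\mathcal{B}'_{\mathcal{N}}\to\mathcal{S}$ that agrees with $\Psi$ on $\Delta$.

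Having fixed $\mathcal{B}^{\sharp}=\mathcal{S}$ and the isomorphism $\kappa$, Proposition \ref{existence of reproducing kernel} guarantees a unique reproducing kernel whose $k$-th component is $K_k(x,\cdot):=\kappa(\delta_{x,k})$. Using the explicit formula \eqref{isometric isomorphism_delta_KX} for $\Psi$ on the generators, I get
\begin{equation*}
K_k(x,\cdot)=\Psi(\delta_{x,k})=\mathcal{N}_k(x,\cdot)\rho(\cdot),\quad x\in\mathbb{R}^s,\ k\in\mathbb{N}_t,
\end{equation*}
which matches the candidate $\mathcal{K}(x,\theta)=\mathcal{N}(x,\theta)\rho(\theta)$ componentwise. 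The membership condition $\mathcal{K}_k(x,\cdot)\in\mathcal{S}\subseteq C_0(\Theta)$ follows directly from the standing hypothesis that $\mathcal{N}_k(x,\cdot)\rho(\cdot)\in C_0(\Theta)$.

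It remains to check the reproducing property in the form \eqref{def: reproducing property} with the bilinear form $\langle g,f_\mu\rangle_{\mathcal{S}\times\mathcal{B}_{\mathcal{N}}}$ defined through $\kappa$. I will trace the pairing: by the natural pre-dual identification \eqref{natural-map-predual}, $\langle \mathcal{K}_k(x,\cdot),f_\mu\rangle_{\mathcal{S}\times\mathcal{B}_{\mathcal{N}}}=\langle f_\mu,\mathcal{K}_k(x,\cdot)\rangle_{\mathcal{S}}$; then by \eqref{dual bilinear on BNS} this equals $\langle \mu,\mathcal{N}_k(x,\cdot)\rho(\cdot)\rangle_{C_0(\Theta)}$; finally, the definition \eqref{DualBilinearForm} of the dual bilinear form followed by \eqref{Def:f_mu^k} collapses this to $\int_\Theta \mathcal{N}_k(x,\theta)\rho(\theta)d\mu(\theta)=f_\mu^k(x)$, as required.

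The only genuinely delicate point is confirming that the bilinear form on $\mathcal{B}^{\sharp}\times\mathcal{B}_{\mathcal{N}}$ used in Proposition \ref{existence of reproducing kernel} is the same pairing I substitute from \eqref{natural-map-predual} and \eqref{dual bilinear on BNS}; I expect to verify this by checking agreement on the dense generators $\delta_{x,k}$ (where both sides reduce to $f_\mu^k(x)$) and then extending by continuity, which is routine given that $\kappa$ is an isometric isomorphism onto the pre-dual. Uniqueness of $\mathcal{K}$ then follows from the uniqueness clause of Proposition \ref{existence of reproducing kernel}.
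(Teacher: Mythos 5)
Your proposal is correct and follows essentially the same route as the paper: it combines Lemma \ref{Lemma:isometric isomorphism_delta_KX} and Lemma \ref{isometric-isomorphism-after-completion} to identify $\mathcal{B}'_{\mathcal{N}}$ with $\mathcal{S}$, invokes Proposition \ref{existence of reproducing kernel} for existence and uniqueness, and verifies the reproducing property through the same chain of pairings \eqref{natural-map-predual}, \eqref{dual bilinear on BNS}, \eqref{DualBilinearForm}, and \eqref{Def:f_mu^k}. The only cosmetic difference is that you read off $\mathcal{K}_k(x,\cdot)=\Psi(\delta_{x,k})$ directly from the formula for $\Psi$, whereas the paper verifies the candidate kernel against the two defining properties and appeals to uniqueness; both arguments are equivalent.
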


\begin{proof}
We employ Proposition \ref{existence of reproducing kernel} with $X:=\mathbb{R}^s$ and $X':=\Theta$ to establish that the function $\mathcal{K}$ defined by \eqref{Kernel} is the reproducing kernel of space $\mathcal{B}_{\mathcal{N}}$. According to Lemma \ref{Lemma:isometric isomorphism_delta_KX}, $\Delta$ is isometrically isomorphic to $\mathbb{S}$. Since $\mathcal{B}_{\mathcal{N}}'$ and $\mathcal{S}$ are the completion of $\Delta$ and $\mathbb{S}$, respectively,  by Lemma \ref{isometric-isomorphism-after-completion}, we conclude that 
the $\delta$-dual space $\mathcal{B}'_{\mathcal{N}}$ of $\mathcal{B}_{\mathcal{N}}$ is isometrically isomorphic to $\mathcal{S}$, which is a Banach space of functions from $\Theta$ to $\mathbb{R}$. 
Hence, Proposition \ref{existence of reproducing kernel} ensures that there exists a unique reproducing kernel for $\mathcal{B}_{\mathcal{N}}$. 

We next verify that the vector-valued function $\mathcal{K}$ defined by \eqref{Kernel} is the reproducing kernel for $\mathcal{B}_{\mathcal{N}}$. It follows from definitions \eqref{subspace of C0} and \eqref{Kernel} that for each $x\in\mathbb{R}^s$ and each $k\in\mathbb{N}_t$,  $\mathcal{K}_k(x,\cdot):=\mathcal{N}_k(x,\cdot)\rho(\cdot)\in \mathcal{S}$. The space $\mathcal{S}$, guaranteed by Proposition \ref{prop: BN is isometic isomorphic to quotient space}, is a pre-dual space of $\mathcal{B}_{\mathcal{N}}$. Hence, by equation \eqref{natural-map-predual} with $g:=\mathcal{K}_k(x,\cdot)$, we obtain for each $x\in\mathbb{R}^s$,  $k\in\mathbb{N}_t$ that 
$$
\langle\mathcal{K}_k(x,\cdot), f_{\mu}\rangle_{\mathcal{B}_{\mathcal{N}}}=\langle f_{\mu},\mathcal{K}_k(x,\cdot)\rangle_{\mathcal{S}},\ \mbox{for all}\  f_{\mu}\in\mathcal{B}_{\mathcal{N}}.
$$
Substituting equation \eqref{dual bilinear on BNS} with $g:=\mathcal{K}_k(x,\cdot)$ into the right-hand side of the above equation leads to 
$$
\langle\mathcal{K}_k(x,\cdot), f_{\mu}\rangle_{\mathcal{B}_{\mathcal{N}}}=\langle\mu, \mathcal{K}_k(x,\cdot)\rangle_{C_0(\Theta)},\ \mbox{for all}\  f_{\mu}\in\mathcal{B}_{\mathcal{N}}.
$$
This together with definitions \eqref{DualBilinearForm}, \eqref{Kernel}  and \eqref{Def:f_mu^k} implies the reproducing property 
\begin{equation*}\label{proof reproducing property}
\langle\mathcal{K}_k(x,\cdot), f_{\mu}\rangle_{\mathcal{B}_{\mathcal{N}}}=f_{\mu}^k(x),\ \mbox{for all}\  f_{\mu}\in\mathcal{B}_{\mathcal{N}}.
\end{equation*}
Consequently, $\mathcal{K}$ is the reproducing kernel of $\mathcal{B}_{\mathcal{N}}$. 
\end{proof}

The reproducing kernel defined by \eqref{Kernel} in Theorem \ref{theorem: Kernel of BN} is an asymmetric kernel in contrast to the reproducing kernel in an RKHS, which is always symmetric. This asymmetry allows one variable of the kernel function to serve as the input variable and the other as the parameter variable, making it particularly suitable for encoding relationships in deep learning models. To the best of our knowledge, Theorem \ref{theorem: Kernel of BN}, even when restricted to shallow networks, presents a novel result. This reproducing kernel corresponds to the Banach space $\mathcal{S}$ of functions on $\Theta$, which is isometrically isomorphic to $\mathcal{B}_{\mathcal{N}}'$. In the next section, we will show that solutions to deep learning models can be expressed as kernel expansions based on training data. Therefore, this reproducing kernel represents the essential structure we seek for understanding deep learning representations.

We are ready to prove that the vector space $\mathcal{B}_{\mathbb{W}}$, defined by equation \eqref{space B Delta}, is weakly${}^*$ dense in the vector-valued RKBS $\mathcal{B}_{\mathcal{N}}$. For this purpose, we recall the concept of the weak${}^*$ topology. Let $B$ be a Banach space. The weak${}^*$ topology of the dual space $B^{*}$ is the smallest topology for $B^{*}$ such that, for each $f\in B$, the linear functional $\nu \rightarrow\langle\nu, f\rangle_{B}$ on $B^*$ is continuous with respect to the topology. For a subset $M'$ of $B^*$, we denote by $\overline{M'}^{w^*}$ the closure of $M'$ in the weak$^*$ topology of $B^*$. We remark that the fact that the Banach space $\mathcal{B}_{\mathcal{N}}$ has a pre-dual space $\mathcal{S}$ makes it valid for $\mathcal{B}_{\mathcal{N}}$ to be equipped with the weak$^*$ topology, the topology of $\mathcal{S}^*$.

\begin{theorem}\label{theorem: H is linear subspace}
Let $\Theta$ be the parameter space defined by \eqref{Def:Theta}, and $\rho:\Theta\to\mathbb{R}^+$ be a continuous function on $\Theta$. Suppose that $\mathbb{W}$ is the width set defined by \eqref{width-set} and the set $\mathcal{B}_{\mathbb{W}}$ is defined by \eqref{space B Delta}. If for each $x\in \mathbb{R}^s$ and $k\in\mathbb{N}_t$, the function $\mathcal{N}_k({x},\cdot)\rho(\cdot)$ belongs to $C_0(\Theta)$, then $\mathcal{B}_{\mathbb{W}}$ is a subspace of $\mathcal{B}_{\mathcal{N}}$ and  
\begin{equation}\label{weak*Dense}
\overline{\mathcal{B}_{\mathbb{W}}}^{w^*}=\mathcal{B}_{\mathcal{N}}.
\end{equation}
\end{theorem}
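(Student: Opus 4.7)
My plan is to prove the statement in two stages: first establish the set inclusion $\mathcal{B}_{\mathbb{W}} \subseteq \mathcal{B}_{\mathcal{N}}$, then promote this to weak$^*$ density via a pre-annihilator argument.

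For the inclusion, I would take an arbitrary element $f = \sum_{l=1}^n c_l \mathcal{N}(\cdot,\theta_l) \in \mathcal{B}_{\mathbb{W}}$ and exhibit a measure $\mu \in \mathcal{M}(\Theta)$ satisfying $f = f_\mu$. The natural choice is the weighted discrete measure
\[
\mu := \sum_{l=1}^n \frac{c_l}{\rho(\theta_l)}\,\delta_{\theta_l},
\]
which lies in $\mathcal{M}(\Theta)$ because Dirac masses are Radon measures and $\rho(\theta_l) > 0$. Plugging this into \eqref{Def:f_mu^k} yields $f_\mu^k(x) = \sum_{l=1}^n c_l \mathcal{N}_k(x,\theta_l)$ for each $k \in \mathbb{N}_t$, so $f = f_\mu \in \mathcal{B}_{\mathcal{N}}$. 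In particular, each single DNN $\mathcal{N}(\cdot,\theta_0)$ belongs to $\mathcal{B}_{\mathcal{N}}$ and is represented by $\mu_{\theta_0} := \rho(\theta_0)^{-1}\delta_{\theta_0}$.

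For the weak$^*$ density, my plan is to invoke the standard Hahn--Banach consequence: if $B$ is a Banach space and $M$ is a linear subspace of its dual $B^*$, then $\overline{M}^{w^*} = B^*$ if and only if the pre-annihilator ${}^{\perp}M = \{g \in B : \langle \nu, g\rangle_{B} = 0 \text{ for all } \nu \in M\}$ reduces to $\{0\}$. Proposition \ref{prop: BN is isometic isomorphic to quotient space} identifies $\mathcal{S}$ as a pre-dual of $\mathcal{B}_{\mathcal{N}}$, so it suffices to show that any $g \in \mathcal{S}$ satisfying $\langle f_\mu, g\rangle_{\mathcal{S}} = 0$ for every $f_\mu \in \mathcal{B}_{\mathbb{W}}$ must vanish.

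To settle this, I would use \eqref{dual bilinear on BNS} to rewrite the annihilation condition as $\int_\Theta g\, d\mu = 0$. Testing against the element represented by $\mu_{\theta_0} = \rho(\theta_0)^{-1}\delta_{\theta_0}$, with $\theta_0 \in \Theta$ arbitrary, forces $\rho(\theta_0)^{-1} g(\theta_0) = 0$, and positivity of $\rho$ yields $g(\theta_0) = 0$. Since $\theta_0$ is arbitrary, $g \equiv 0$ on $\Theta$, i.e., $g = 0$ as an element of $\mathcal{S}$. The density criterion then delivers \eqref{weak*Dense}. I do not anticipate a substantial obstacle; the only delicate point is ensuring that the weak$^*$ topology on $\mathcal{B}_{\mathcal{N}}$ is the one induced by the pre-dual $\mathcal{S}$ (so that the Hahn--Banach criterion applies in $\mathcal{S}$ rather than in the bidual), which is unambiguous thanks to Proposition \ref{prop: BN is isometic isomorphic to quotient space}.
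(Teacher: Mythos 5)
Your proposal is correct and follows essentially the same route as the paper: exhibit each element of $\mathcal{B}_{\mathbb{W}}$ as $f_\mu$ for a finite weighted sum of Dirac measures, then apply the bipolar-type identity $(^{\perp}\mathcal{B}_{\mathbb{W}})^{\perp}=\overline{\mathcal{B}_{\mathbb{W}}}^{w^*}$ in the weak$^*$ topology induced by the pre-dual $\mathcal{S}$ and show the pre-annihilator vanishes by testing against Dirac measures. The only cosmetic difference is that you place the factor $\rho(\theta_l)^{-1}$ in the measure, whereas the paper absorbs $\rho(\theta_l)$ into the coefficients of $f$; both yield the same conclusion.
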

\begin{proof}
It has been shown in Proposition \ref{prop: BM is a linear space} that $\mathcal{B}_{\mathbb{W}}$ is a vector space. We now show that $\mathcal{B}_{\mathbb{W}}$ is a subspace of $\mathcal{B}_{\mathcal{N}}$. For any $f\in{{{\mathcal{B}_{\mathbb{W}}}}}$, there exist $n\in\mathbb{N}$, $c_l\in\mathbb{R}$, $\theta_l\in\Theta$, $l\in\mathbb{N}_n$ such that $f=\sum_{l=1}^n c_l\mathcal{N}(\cdot,\theta_l)\rho(\theta_l)$. By choosing $\mu:=\sum_{l=1}^n c_l\delta_{\theta_l}$, we have that 
$\mu\in\mathcal{M}(\Theta)$. We then obtain from definition \eqref{Def:f_mu^k} that  
$$
f_{\mu}^k(x)=\sum_{l=1}^n c_l\mathcal{N}_k(x,\theta_l)\rho(\theta_l),\ \mbox{for all}\ x\in\mathbb{R}^s,\  k\in\mathbb{N}_t.
$$
This together with the representation of $f$ yields that $f=f_{\mu}$ and thus, $f\in\mathcal{B}_{\mathcal{N}}$. Consequently, we have that $\mathcal{B}_{\mathbb{W}}\subseteq\mathcal{B}_{\mathcal{N}}$.

It remains to prove equation \eqref{weak*Dense}. Proposition \ref{prop: BN is isometic isomorphic to quotient space} ensures that $\mathcal{S}^*=\mathcal{B}_{\mathcal{N}}$, in the sense of being isometrically isomorphic. Hence, $\mathcal{B}_{\mathbb{W}}$ is a subspace of the dual space of $\mathcal{S}$. It follows from Proposition 2.6.6 of \cite{megginson2012introduction} that $(^{\perp}\mathcal{B}_{\mathbb{W}})^{\perp}=\overline{\mathcal{B}_{\mathbb{W}}}^{w^*}$. It suffices to verify that $(^{\perp}\mathcal{B}_{\mathbb{W}})^{\perp}=\mathcal{B}_{\mathcal{N}}$. Due to definition \eqref{space B Delta} of $\mathcal{B}_{\mathbb{W}}$, $g\in  ^{\perp}\mathcal{B}_{\mathbb{W}}$ if and only if \begin{equation}\label{suff-ness-1}
    \langle \mathcal{N}(\cdot,\theta)\rho(\theta), g\rangle_{\mathcal{S}}=0,\ \mbox{for all}\ \theta\in\Theta.
\end{equation}
By equation \eqref{dual bilinear on BNS} with $f_{\mu}:=\mathcal{N}(\cdot,\theta)\rho(\theta)$ with $\mu=\delta_{\theta}$, equation \eqref{suff-ness-1} is equivalent to 
$\left<\delta_\theta, g\right>_{C_0(\Theta)}=0$, for all $\theta\in\Theta$,
which leads to $g(\theta)=0$ for all $\theta\in\Theta$. That is, $g=0$. Therefore, $^{\perp}\mathcal{B}_{\mathbb{W}}=\{0\}$. This together with the definition of annihilators leads to $(^{\perp}\mathcal{B}_{\mathbb{W}})^\perp=\mathcal{B}_{\mathcal{N}}$, which completes the proof of this theorem.
\end{proof}


We now return to the discussion on the relationship between the hypothesis space and the network layer width set $\mathbb{W}$, assuming a fixed network depth $D$ and the activation function $\sigma$. From the construction of the linear space $\mathcal{B}_{\mathbb{W}}$, we observe that $\mathcal{B}_{\mathbb{W}}$ corresponds to an infinitely wide block neural network, where each block represents a neural network with the layer width set $\mathbb{W}$.

First, we consider a shallow neural network (a neural network with only one hidden layer).  Clearly, the linear combination of any $n$ neural networks with the layer width set $\mathbb{W}$ forms a fully connected neural network with the layer width set $n\mathbb{W}$. Since the linear space $\mathcal{B}_{\mathbb{W}}$ consists of all linear combinations of an arbitrary $n$ networks with the parameter $\theta_l\in \Theta_\mathbb{W}$, it includes  fully connected neural networks with potentially infinite width. Consequently, as a linear space,  $\mathcal{B}_{\mathbb{W}}$, and thus $\mathcal{B}_{\mathcal{N}}$, is independent of the specific layer width set $\mathbb{W}$. However, the constructed hypothesis space is a Banach space, which not only possesses an algebraic structure but is also equipped with a topological structure induced by a norm. As seen from its definition, the norm of $\mathcal{B}_{\mathcal{N}}$ is determined by the measure on $\Theta_\mathbb{W}$, making it inherently depends on the network layer width set $\mathbb{W}$. This dependence implies that $\mathcal{B}_{\mathcal{N}}$, as a Banach space, varies with the layer width set. It is worth noting that different norms lead to different reproducing kernels for $\mathcal{B}_{\mathcal{N}}$. We have demonstrated that the reproducing kernel of $\mathcal{B}_{\mathcal{N}}$ is given by the product of a neural network with layer width set $\mathbb{W}$ and a certain weight function. In other words, as a Banach space, $\mathcal{B}_{\mathcal{N}}$ has a reproducing kernel that is inherently related to the network layer width set $\mathbb{W}$. In summary, even for a shallow neural network, the hypothesis space  $\mathcal{B}_{\mathcal{N}}$ remains fundamentally dependent on the network layer width set $\mathbb{W}$.

Next, we consider a deep neural network (a neural network with multiple hidden layers). While the linear combination of any $n$ neural networks with the layer width set $\mathbb{W}$ still results in a neural network with the network layer width set $n\mathbb{W}$, the neurons in different network blocks and layers remain unconnected. In other words, the resulting neural network with the layer width set $n\mathbb{W}$ is not fully connected. This occurs because the linearization operation is applied only to the output layer. Again, since the linear space $\mathcal{B}_{\mathbb{W}}$ consists of all linear combinations of an arbitrary $n$ networks with parameters $\theta_l\in \Theta_\mathbb{W}$, it does not contain a fully connected neural network with infinite width. Notably, for the same network layer width, fully connected networks generally exhibit a richer structure than partially connected networks. For example, in the case of  ReLU networks, fully connected networks allow for piecewise linear polynomials with a greater number of partitioning nodes. Based on the above discussion, even as a linear space, $\mathcal{B}_{\mathbb{W}}$ and correspondingly $\mathcal{B}_{\mathcal{N}}$, remains fundamentally dependent on the network layer width set $\mathbb{W}$. Similar to shallow neural networks, the topological structure and the reproducing kernel of the hypothesis space for deep neural networks also depend on the layer width.

To conclude this section, we summarize the key properties of the space $\mathcal{B}_{\mathcal{N}}$ established in 
Theorems \ref{theorem: BN vector valued RKBS}, \ref{theorem: Kernel of BN} and \ref{theorem: H is linear subspace}:

(i) $\mathcal{B}_{\mathcal{N}}$ is a vector-valued RKBS.

(ii) The vector-valued function $\mathcal{K}$ defined in  \eqref{Kernel}
serves as the reproducing kernel for $\mathcal{B}_{\mathcal{N}}$.

(iii) $\mathcal{B}_{\mathcal{N}}$ is the weak* completion of the vector space $\mathcal{B}_{\mathbb{W}}$.

\noindent
These desirable properties of $\mathcal{B}_{\mathcal{N}}$ make it a natural choice as the hypothesis space for deep learning.
Accordingly, we consider the following learning model:  
\begin{equation}\label{LearningMethodinRKBS}
    \inf\{\mathcal{L}(f_{\mu},\mathbb{D}_m): f_{\mu}\in\mathcal{B}_{\mathcal{N}}\}.
\end{equation}
Let $\mathcal{N}_{\mathcal{B}_\mathcal{N}}$ denote the neural network learned from the model \eqref{LearningMethodinRKBS}. Then, according to \eqref{Comparison-of-three-models}, we have
\begin{equation*}
    \mathcal{L}(\mathcal{N}_{\mathcal{B}_\mathcal{N}},\mathbb{D}_m)\leq \mathcal{L}(\mathcal{N}_{\mathcal{B}_\mathbb{W}},\mathbb{D}_m)
    \leq \mathcal{L}(\mathcal{N}_{\mathcal{A}_\mathbb{W}},\mathbb{D}_m).
\end{equation*}
Although the learning model \eqref{LearningMethodinRKBS},  like \eqref{GeneralLearningMethod}, operates in an infinite dimension (unlike \eqref{BasicLearningMethod-equivalent}, which is finite-dimensional), we will demonstrate in the next section that a solution to \eqref{LearningMethodinRKBS} lays within a finite-dimensional manifold determined by the kernel $\mathcal{K}$ and a given dataset.


\section{Representer Theorems for Learning Solutions}\label{section: representer theorems}
In this section, we consider learning a target function in $\mathcal{B}_{\mathcal{N}}$ from the sampled dataset $\mathbb{D}_m$ defined by \eqref{Dataset}. Learning such a function is an ill-posed problem, often prone to overfitting. To address this, instead of solving the learning model \eqref{LearningMethodinRKBS} directly, we focus on a related regularization problem and the MNI problem in the RKBS $\mathcal{B}_{\mathcal{N}}$. The goal of this section is to establish the representer theorems for solutions to these two learning models.


We begin by describing the regularized learning problem in the vector-valued RKBS $\mathcal{B}_{\mathcal{N}}$.  For the dataset $\mathbb{D}_m$ defined by \eqref{Dataset}, we define the set $\mathcal{X}:=\{x_j:j\in\mathbb{N}_m\}$ and the matrix $\mathbf{Y}:=[y_j^k:k\in\mathbb{N}_t,j\in\mathbb{N}_{m}]\in\mathbb{R}^{t\times m}$, where for each $j\in\mathbb{N}_m$, $y_j^k$, $k\in\mathbb{N}_t$, are the components of the vector $y_j$. 
We introduce an operator $\mathbf{I}_\mathcal{X}:{\mathcal{B}_{\mathcal{N}}} \rightarrow \mathbb{R}^{t\times m}$ defined by 
\begin{equation}\label{L YES DC}
\mathbf{I}_\mathcal{X}(f_\mu):=\left[f^k_{\mu}(x_j): k\in\mathbb{N}_t,j\in\mathbb{N}_{m}\right],\ \mbox{for}\ f_\mu\in\mathcal{B}_{\mathcal{N}}.
\end{equation} 
We choose a loss function $\mathcal{Q}: \mathbb{R}^{t\times m} \rightarrow \mathbb{R}_{+}:=[0,+\infty)$ and define the empirical loss as
\begin{equation}\label{loss:An example}
    \mathcal{L}(f_{\mu},\mathbb{D}_m):=\mathcal{Q}(\mathbf{I}_{\mathcal{X}}(f_\mu)-\mathbf{Y}),\ \mbox{for}\ f_{\mu}\in\mathcal{B}_{\mathcal{N}}.
\end{equation}
An example of the loss function $\mathcal{Q}({\mathbf{M}})$ could be a norm of the matrix $\mathbf{M}$.
The proposed regularization problem is formed by adding a regularization
term $\lambda\| f_\mu\|_{{\mathcal{B}_{\mathcal{N}}}}$ to the data fidelity term $\mathcal{Q}(\mathbf{I}_{\mathcal{X}}(f_{\mu})-\mathbf{Y})$. That is, 
\begin{equation}\label{eq: regularization problem RKBS B measure M(X)}
    \inf \left\{\mathcal{Q}(\mathbf{I}_{\mathcal{X}}(f_\mu)-\mathbf{Y})+\lambda\| f_\mu\|_{{\mathcal{B}_{\mathcal{N}}}}:  f_\mu\in {\mathcal{B}_{\mathcal{N}}}\right\},
\end{equation}
where $\lambda$ is a positive regularization parameter. 

Next, we establish the existence of a solution to the regularization problem \eqref{eq: regularization problem RKBS B measure M(X)}, which follows directly from Proposition 40 in \cite{wang2021representer}.


\begin{proposition}\label{Existence-of-Solution}
Suppose that $m$ distinct points $x_j\in\mathbb{R}^s$, $j\in\mathbb{N}_m$, and $\mathbf{Y}\in\mathbb{R}^{t\times m}$ are given.  
If $\lambda>0$ and the loss function $\mathcal{Q}$ is lower semi-continuous on $\mathbb{R}^{t\times m}$, then the regularization problem \eqref{eq: regularization problem RKBS B measure M(X)} has at least one solution. 
\end{proposition}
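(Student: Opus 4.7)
The plan is to apply the direct method of the calculus of variations in the weak$^*$ topology of $\mathcal{B}_{\mathcal{N}}$, leveraging the fact that $\mathcal{B}_{\mathcal{N}}=\mathcal{S}^*$ (Proposition \ref{prop: BN is isometic isomorphic to quotient space}), which makes Banach--Alaoglu available. Let $F(f_\mu):=\mathcal{Q}(\mathbf{I}_{\mathcal{X}}(f_\mu)-\mathbf{Y})+\lambda\|f_\mu\|_{\mathcal{B}_{\mathcal{N}}}$. Since $F\ge 0$, the infimum $F^*:=\inf F$ is finite, and I can pick a minimizing sequence $\{f_{\mu_n}\}\subset\mathcal{B}_{\mathcal{N}}$ with $F(f_{\mu_n})\to F^*$.

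First I would establish boundedness of the minimizing sequence: because $\mathcal{Q}\ge 0$, one has $\lambda\|f_{\mu_n}\|_{\mathcal{B}_{\mathcal{N}}}\le F(f_{\mu_n})$, and since the right-hand side is eventually bounded, $\{\|f_{\mu_n}\|_{\mathcal{B}_{\mathcal{N}}}\}$ is bounded. By the Banach--Alaoglu theorem applied to $\mathcal{B}_{\mathcal{N}}=\mathcal{S}^*$, the closed ball is weak$^*$-compact, so a subnet (or, if $\mathcal{S}$ is separable, a subsequence) converges weak$^*$ to some $f^*\in\mathcal{B}_{\mathcal{N}}$. The next step is weak$^*$ lower semicontinuity of $F$. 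By the reproducing property in Theorem \ref{theorem: Kernel of BN} combined with \eqref{dual bilinear on BNS}, each point evaluation can be written as $f_\mu^k(x_j)=\langle f_\mu,\mathcal{K}_k(x_j,\cdot)\rangle_{\mathcal{S}}$ with $\mathcal{K}_k(x_j,\cdot)\in\mathcal{S}$; hence every coordinate of $\mathbf{I}_{\mathcal{X}}$ is weak$^*$ continuous, so $\mathbf{I}_{\mathcal{X}}:\mathcal{B}_{\mathcal{N}}\to\mathbb{R}^{t\times m}$ is weak$^*$ continuous. Composing with the lower semicontinuous $\mathcal{Q}$ makes $f_\mu\mapsto\mathcal{Q}(\mathbf{I}_{\mathcal{X}}(f_\mu)-\mathbf{Y})$ weak$^*$ lower semicontinuous. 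The dual norm $\|\cdot\|_{\mathcal{B}_{\mathcal{N}}}$ is always weak$^*$ lower semicontinuous on $\mathcal{S}^*$ (as a supremum of weak$^*$-continuous functionals over the unit ball of the pre-dual). Adding these yields weak$^*$ lower semicontinuity of $F$.

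Combining the three pieces, I obtain $F(f^*)\le\liminf_n F(f_{\mu_n})=F^*$, so $f^*$ is a solution of \eqref{eq: regularization problem RKBS B measure M(X)}. As the excerpt remarks, the same conclusion is available by invoking Proposition 40 of \cite{wang2021representer}, whose hypotheses (continuity of point evaluations and lower semicontinuity of $\mathcal{Q}$ together with $\lambda>0$) are exactly what Theorem \ref{theorem: BN vector valued RKBS} and the standing assumption on $\mathcal{Q}$ supply.

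The main obstacle is not any single estimate but the need to work with the weak$^*$ topology rather than the norm topology: the norm-closed unit ball of $\mathcal{B}_{\mathcal{N}}$ is generally not norm-compact, so the extraction of a convergent subsequence must go through Banach--Alaoglu, which in turn forces me to argue weak$^*$ lower semicontinuity of \emph{both} the data-fidelity term and the regularizer rather than mere continuity. The crucial ingredient that makes this work is the identification $\mathcal{B}_{\mathcal{N}}=\mathcal{S}^*$ together with the fact that each $\mathcal{K}_k(x_j,\cdot)$ lies in the pre-dual $\mathcal{S}$, which turns the finitely many point evaluations appearing in $\mathbf{I}_{\mathcal{X}}$ into weak$^*$-continuous maps; without this identification, the lower semicontinuity of the data term would fail in general.
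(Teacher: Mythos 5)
Your proposal is correct. The paper itself gives no written proof of this proposition: it simply asserts that the statement ``follows directly from Proposition 40 in \cite{wang2021representer}.'' What you have written out is precisely the direct-method argument that underlies such a citation, and every step checks out: coercivity of $F$ in the $\mathcal{B}_{\mathcal{N}}$-norm because $\mathcal{Q}\ge 0$ and $\lambda>0$; weak$^*$ compactness of norm-bounded sets via Banach--Alaoglu, which is legitimate exactly because Proposition \ref{prop: BN is isometic isomorphic to quotient space} exhibits $\mathcal{S}$ as a pre-dual of $\mathcal{B}_{\mathcal{N}}$; weak$^*$ continuity of each coordinate of $\mathbf{I}_{\mathcal{X}}$ because $f_\mu^k(x_j)=\langle f_\mu,\mathcal{K}_k(x_j,\cdot)\rangle_{\mathcal{S}}$ with $\mathcal{K}_k(x_j,\cdot)\in\mathbb{S}\subset\mathcal{S}$ by \eqref{dual bilinear on BNS} and \eqref{Def:f_mu^k}; and weak$^*$ lower semicontinuity of the dual norm. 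Your hedge about subnets versus subsequences can in fact be resolved in favor of subsequences: $\Theta$ is a finite-dimensional Euclidean space, so $C_0(\Theta)$ and hence its closed subspace $\mathcal{S}$ are separable, and bounded sets in $\mathcal{S}^*$ are weak$^*$ metrizable. Compared with the paper, your route is self-contained and makes visible exactly where the pre-dual structure and the membership of the kernel sections in $\mathcal{S}$ are used, whereas the paper's one-line citation buys brevity at the cost of hiding these ingredients; the two are otherwise the same argument.
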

%

It is known that regularization problems are closely related to MNI problems (see, for example, \cite{wang2021representer}). The MNI problem seeks a vector-valued function $f_{\mu}$ in $\mathcal{B}_{\mathcal{N}}$ with the smallest norm that
satisfies the interpolation condition $f_{\mu}(x_j)=y_j$, $j\in\mathbb{N}_m$. Formally, the MNI problem is given by 
\begin{equation}\label{MNI-original}
\inf
\{\|f_{\mu}\|_{\mathcal{B}_{\mathcal{N}}}: f_{\mu}(x_j)=y_j, f_{\mu}\in\mathcal{B}_{\mathcal{N}}, j\in\mathbb{N}_m\}.
\end{equation}
To reformulate this problem, we define the subset $\mathcal{M}_{\mathcal{X},\mathbf{Y}}$ of $\mathcal{B}_{\mathcal{N}}$ as
\begin{equation*}\label{hyperplane YES DC}
    \mathcal{M}_{\mathcal{X},\mathbf{Y}}:=\{f_\mu \in {\mathcal{B}_{\mathcal{N}}}: \mathbf{I}_{\mathcal{X}}(f_\mu)=\mathbf{Y}\}.
\end{equation*}
Thus, the MNI problem \eqref{MNI-original} can be equivalently written as
\begin{equation}\label{MNI in RKBS B measure M(X)}
    \inf \left\{\left\|f_\mu\right\|_{{\mathcal{B}_{\mathcal{N}}}}: f_\mu \in  \mathcal{M}_{\mathcal{X},\mathbf{Y}}\right\}.
\end{equation}

A solution to the MNI problem \eqref{MNI in RKBS B measure M(X)} exists if and only if
$\mathcal{M}_{\mathcal{X},\mathbf{Y}}$ is nonempty. 
The non-emptiness of  $\mathcal{M}_{\mathcal{X},\mathbf{Y}}$ concerns whether any function in $\mathcal{B}_{\mathcal{N}}$ can interpolate the given dataset $\mathbb{D}_m$. For the remainder of this paper, we assume $\mathcal{M}_{\mathcal{X},\mathbf{Y}}$ is nonempty.
Since $\mathcal{K}$, defined in \eqref{Kernel}, is the reproducing kernel of $\mathcal{B}_{\mathcal{N}}$, the operator $\mathbf{I}_{\mathcal{X}}$ from \eqref{L YES DC} can be expressed as
$$
\mathbf{I}_{\mathcal{X}}(f_\mu)=\left[\langle \mathcal{K}_k(x_j,\cdot), f_{\mu}\rangle_{\mathcal{B}_{\mathcal{N}}}: k\in\mathbb{N}_t,j\in\mathbb{N}_{m}\right],\ \mbox{for}\ f_\mu\in\mathcal{B}_{\mathcal{N}}.
$$
This shows that the MNI problem \eqref{MNI in RKBS B measure M(X)} imposes $tm$ interpolation constraints induced by the functionals in the set
$
\mathbb{K}_\mathcal{X}:=\{\mathcal{K}_k(x_j,\cdot):  j\in\mathbb{N}_m,\ k\in\mathbb{N}_t\}.
$
The following result regarding the existence of a solution to the MNI problem \eqref{MNI in RKBS B measure M(X)}
is a direct consequence of Proposition 1 of \cite{wang2021representer}.
\begin{proposition}\label{Existence-of-Solution-MNI}
Suppose that $m$ distinct points $x_j\in\mathbb{R}^s$, $j\in\mathbb{N}_m$, and $\mathbf{Y}\in\mathbb{R}^{t\times m}$ are given.  
If the functionals in $\mathbb{K}_\mathcal{X}$ are linearly independent in $\mathcal{S}$, then the MNI problem \eqref{MNI in RKBS B measure M(X)} has at least one solution. 
\end{proposition}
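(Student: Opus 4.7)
The plan is to establish existence via a weak$^*$ compactness argument, exploiting that by Proposition~\ref{prop: BN is isometic isomorphic to quotient space}, $\mathcal{B}_{\mathcal{N}}=\mathcal{S}^{*}$ is a dual space, so the Banach--Alaoglu theorem supplies an abundance of weak$^*$ compact sets. Three ingredients are needed: (a) the feasible set $\mathcal{M}_{\mathcal{X},\mathbf{Y}}$ is nonempty; (b) $\mathcal{M}_{\mathcal{X},\mathbf{Y}}$ is weak$^*$ closed; (c) the norm $\|\cdot\|_{\mathcal{B}_{\mathcal{N}}}$ is weak$^*$ lower semicontinuous.

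For (a), I would invoke the linear independence hypothesis on $\mathbb{K}_{\mathcal{X}}$ inside the pre-dual $\mathcal{S}$. Since $\mathcal{S}$ embeds canonically into $\mathcal{B}_{\mathcal{N}}^{*}=\mathcal{S}^{**}$, these $tm$ continuous linear functionals on $\mathcal{B}_{\mathcal{N}}$ remain linearly independent. A standard biorthogonal construction---equivalently, the observation that the adjoint of $\mathbf{I}_{\mathcal{X}}\colon\mathcal{B}_{\mathcal{N}}\to\mathbb{R}^{t\times m}$ has trivial kernel, so $\mathbf{I}_{\mathcal{X}}$ is surjective---produces elements $f_{j,k}\in\mathcal{B}_{\mathcal{N}}$ such that
\[
\langle \mathcal{K}_{k'}(x_{j'},\cdot),\,f_{j,k}\rangle_{\mathcal{B}_{\mathcal{N}}}=\delta_{jj'}\delta_{kk'}.
\]
Then $\sum_{j,k} y_j^{k} f_{j,k}$ lies in $\mathcal{M}_{\mathcal{X},\mathbf{Y}}$.

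For (b) and (c), Theorem~\ref{theorem: Kernel of BN} gives the reproducing identity $f_\mu^{k}(x_j)=\langle \mathcal{K}_k(x_j,\cdot),f_\mu\rangle_{\mathcal{B}_{\mathcal{N}}}$, and since $\mathcal{K}_k(x_j,\cdot)\in\mathcal{S}$, each interpolation constraint $f_\mu\mapsto f_\mu^{k}(x_j)=y_j^{k}$ is a weak$^*$ continuous affine constraint on $\mathcal{B}_{\mathcal{N}}$. Hence $\mathcal{M}_{\mathcal{X},\mathbf{Y}}$ is an intersection of weak$^*$ closed affine hyperplanes, and is therefore weak$^*$ closed. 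The weak$^*$ lower semicontinuity of the dual norm follows from the representation
\[
\|f_\mu\|_{\mathcal{B}_{\mathcal{N}}}=\sup\bigl\{\,|\langle f_\mu,g\rangle_{\mathcal{S}}|:\,g\in\mathcal{S},\ \|g\|_{\infty}\le 1\,\bigr\},
\]
as a supremum of weak$^*$ continuous functionals.

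Assembling these, set $\alpha:=\inf\{\|f_\mu\|_{\mathcal{B}_{\mathcal{N}}}:f_\mu\in\mathcal{M}_{\mathcal{X},\mathbf{Y}}\}$, pick any $C>\alpha$, and consider $\mathcal{M}_{\mathcal{X},\mathbf{Y}}\cap\{f_\mu\in\mathcal{B}_{\mathcal{N}}:\|f_\mu\|_{\mathcal{B}_{\mathcal{N}}}\le C\}$. This set is nonempty by the choice of $C$, norm-bounded by construction, and weak$^*$ closed by (b) together with the weak$^*$ lsc bound in (c). Banach--Alaoglu then makes it weak$^*$ compact, and the weak$^*$ lower semicontinuous function $\|\cdot\|_{\mathcal{B}_{\mathcal{N}}}$ attains its minimum on a compact set, giving the desired solution. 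The main subtlety I anticipate is step (a): one must use crucially that the evaluation functionals $\mathcal{K}_k(x_j,\cdot)$ live in the pre-dual $\mathcal{S}$ (not merely in $\mathcal{B}_{\mathcal{N}}^{*}$) so that the duality pairing $\mathcal{S}\leftrightarrow\mathcal{B}_{\mathcal{N}}$ can be exploited both for the biorthogonal construction and for the weak$^*$ continuity of the constraints; once this is in place, the remainder is the textbook weak$^*$ compactness proof for minimum-norm problems in dual Banach spaces.
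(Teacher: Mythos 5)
Your argument is correct: the paper itself does not write out a proof but simply invokes Proposition 1 of \cite{wang2021representer}, and that cited result is established by exactly the weak$^*$-compactness argument you give (surjectivity of $\mathbf{I}_{\mathcal{X}}$ from linear independence of the functionals in the pre-dual, weak$^*$ closedness of the feasible set because the constraints come from $\mathcal{S}$, weak$^*$ lower semicontinuity of the dual norm, and Banach--Alaoglu). So your proposal is a correct, fully spelled-out version of the same approach the paper takes by citation.
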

Indeed, the linear independence of the functionals in $\mathbb{K}_\mathcal{X}$ is a sufficient condition to ensure $\mathcal{M}_{\mathcal{X},\mathbf{Y}}$ is nonempty for any given $\mathbf{Y}\in\mathbb{R}^{t\times m}$. 
Therefore, without loss of generality, we assume that the functionals in $\mathbb{K}_\mathcal{X}$ are linearly independent throughout the remainder of this paper. 


Next, we establish a representer theorem for the solution to the MNI problem \eqref{MNI in RKBS B measure M(X)}. This follows from the explicit, data-dependent representer theorem for the MNI problem in a general Banach space setting, established in our recent paper \cite{wang2023sparse}. To apply this result to our current setting, we review the necessary notions of convex analysis in Appendix A and restate the relevant theorem in Appendix B. Specifically, we prepare to apply Lemma \ref{lemma: representer for MNI} from Appendix B to the MNI problem \eqref{MNI in RKBS B measure M(X)}.
To this end, we introduce the subspace 
\begin{equation}\label{V_span_kernel}
\mathcal{V}_{\mathcal{N}}:=\mathrm{span}\ \mathbb{K}_\mathcal{X}
\end{equation}
of $\mathcal{S}$, which has been shown to be a pre-dual space of $\mathcal{B}_{\mathcal{N}}$.
Let $\mathbb{S}_{\mathcal{X},\mathbf{Y}}$ denote the solution set of the MNI problem \eqref{MNI in RKBS B measure M(X)}. 

We now introduce the dual formulation of the MNI problem, given by
\begin{equation}\label{dual problem}
    \sup\left\{ \sum_{k\in\mathbb{N}_t}\sum_{j\in\mathbb{N}_m}c_{kj}y_j^k:\left\|\sum_{k\in\mathbb{N}_t}\sum_{j\in\mathbb{N}_m} c_{kj}\mathcal{K}_k(x_j,\cdot)\right\|_{\infty}=1, c_{kj}\in\mathbb{R},k\in\mathbb{N}_t,j\in\mathbb{N}_m\right\}.
\end{equation}
This is a finite-dimensional optimization problem that shares the same optimal value, denoted by ${C^*}$, as the MNI problem \eqref{MNI in RKBS B measure M(X)}. 
Using arguments similar to those in \cite{cheng2021minimum}, we establish the existence of a solution to the dual problem \eqref{dual problem}. As a convex optimization problem, it can be efficiently solved via linear programming, since it involves maximizing a linear function over a convex polytope \cite{ChengWangXu2023,cheng2021minimum}. Let $\hat{\mathbf{c}}:=[\hat{c}_{kj}:k\in\mathbb{N}_t,j\in\mathbb{N}_m]\in\mathbb{R}^{t\times m}$ be a solution to the dual problem \eqref{dual problem}. We then define
\begin{equation}\label{stage 1 hat g}
    \hat{g}(x):={C^*}\sum_{k\in\mathbb{N}_t}\sum_{j\in\mathbb{N}_m}\hat{c}_{kj}\mathcal{K}_k(x_j,x),\ x\in\mathbb{R}^s.
\end{equation}
As defined in Appendix A, let $\mathrm{ext}(A)$ denote the set of extreme points of a nonempty closed
convex set $A$. Additionally, let $\partial\|\cdot\|_B(f)$ represent the subdifferential of the norm function $\|\cdot\|_B$ in a Banach space at each $f\in B\backslash\{0\}$. 

\begin{theorem}\label{theorem: direct representer theorem for MNI in BN}
Suppose that $m$ distinct points $x_j\in\mathbb{R}^s$, $j\in\mathbb{N}_m$, and $\mathbf{Y}\in\mathbb{R}^{t\times m}\backslash\{\mathbf{0}\}$ are given, and the functionals in $\mathbb{K}_\mathcal{X}$ are linearly independent. 
Let $\hat g$ be  the function defined by \eqref{stage 1 hat g}.
Then for any extreme point $\hat f$ of the solution set $\mathbb{S}_{\mathcal{X},\mathbf{Y}}$ of the MNI problem \eqref{MNI in RKBS B measure M(X)}), there exist $\gamma_\ell\in\mathbb{R}$, $\ell\in\mathbb{N}_{tm}$, with  $\sum_{\ell\in\mathbb{N}_{tm}}\gamma_\ell=\|\hat g\|_{\infty}$ and $h_\ell\in\mathrm{ext}(\partial\|\cdot\|_\infty(\hat g))$, $\ell\in\mathbb{N}_{tm}$, such that 
    \begin{equation}\label{representer solution of finite linear combination of MNI in BN}
        \hat f(x)=\sum\limits_{\ell\in\mathbb{N}_{tm}}\gamma_\ell h_\ell(x),\ x\in\mathbb{R}^s.
    \end{equation}    
\end{theorem}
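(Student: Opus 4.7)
The plan is to apply the general Banach-space representer theorem for the MNI problem (Lemma \ref{lemma: representer for MNI} of Appendix B) to the concrete MNI problem \eqref{MNI in RKBS B measure M(X)} posed in the RKBS $\mathcal{B}_{\mathcal{N}}$. The structural ingredients needed for such an application are already in place: by Proposition \ref{prop: BN is isometic isomorphic to quotient space}, $\mathcal{S}$ is a pre-dual of $\mathcal{B}_{\mathcal{N}}$; by Theorem \ref{theorem: Kernel of BN} the reproducing property gives $f^k_\mu(x_j)=\langle \mathcal{K}_k(x_j,\cdot), f_\mu\rangle_{\mathcal{B}_{\mathcal{N}}}$, so the $tm$ interpolation constraints are represented by functionals $\mathcal{K}_k(x_j,\cdot)$ lying in the pre-dual $\mathcal{S}$; and the linear-independence hypothesis guarantees that $\mathcal{V}_{\mathcal{N}}=\mathrm{span}\,\mathbb{K}_\mathcal{X}$, defined in \eqref{V_span_kernel}, is a $tm$-dimensional subspace of $\mathcal{S}$.

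First, I would recast the problem so that the data of Lemma \ref{lemma: representer for MNI} are identified: the ambient Banach space is $\mathcal{B}_{\mathcal{N}}$, the pre-dual is $\mathcal{S}$, the constraint functionals are the kernel sections $\mathcal{K}_k(x_j,\cdot)\in\mathcal{S}$, and the target values are the entries $y_j^k$ of $\mathbf{Y}$. Under this identification the norm on $\mathcal{V}_{\mathcal{N}}$ inherited from $\mathcal{S}$ is precisely the supremum norm $\|\cdot\|_\infty$, and the problem \eqref{dual problem} is exactly the dual problem that the general lemma attaches to the MNI problem. Because $\mathcal{V}_{\mathcal{N}}$ is finite-dimensional and the objective is linear on a compact convex set, a dual maximizer $\hat{\mathbf{c}}=[\hat c_{kj}]$ exists, with optimal value $C^*$ equal to the infimum in \eqref{MNI in RKBS B measure M(X)}.

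Second, I would use $\hat{\mathbf{c}}$ to construct $\hat g$ via \eqref{stage 1 hat g}; by the scaling, $\hat g\in\mathcal{V}_{\mathcal{N}}\subset\mathcal{S}$ and $\|\hat g\|_\infty=C^*$. Lemma \ref{lemma: representer for MNI} then states that every extreme point $\hat f$ of the (convex) solution set $\mathbb{S}_{\mathcal{X},\mathbf{Y}}$ can be written as a finite sum of at most $tm$ extreme points $h_\ell$ of the subdifferential $\partial\|\cdot\|_\infty(\hat g)\subset\mathcal{B}_{\mathcal{N}}$, with real coefficients $\gamma_\ell$ whose sum equals $\|\hat g\|_\infty$. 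Unpacking the identification gives exactly the representation \eqref{representer solution of finite linear combination of MNI in BN}.

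The main obstacle I anticipate is not conceptual but a careful verification of the abstract-to-concrete translation: one must check that the duality pairing between $\mathcal{B}_{\mathcal{N}}$ and $\mathcal{S}$ coincides with the one used in the general lemma, that $\hat g$ as defined in \eqref{stage 1 hat g} genuinely realizes the norm-one dual optimizer rescaled by $C^*$, and that the extreme points $h_\ell$ of $\partial\|\cdot\|_\infty(\hat g)$ are well-defined elements of $\mathcal{B}_{\mathcal{N}}$ so that \eqref{representer solution of finite linear combination of MNI in BN} is a pointwise identity among functions on $\mathbb{R}^s$. The assumption $\mathbf{Y}\ne\mathbf{0}$ ensures that $C^*>0$ and hence $\hat g\ne 0$, so that the subdifferential $\partial\|\cdot\|_\infty(\hat g)$ is nontrivial. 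Once these bookkeeping items are verified, the theorem follows directly from Lemma \ref{lemma: representer for MNI}.
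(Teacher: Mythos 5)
Your proposal follows exactly the paper's route: identify $\mathcal{S}$ as the pre-dual of $\mathcal{B}_{\mathcal{N}}$ via Proposition \ref{prop: BN is isometic isomorphic to quotient space}, observe that the kernel sections $\mathcal{K}_k(x_j,\cdot)$ are linearly independent elements of $\mathcal{S}$ spanning $\mathcal{V}_{\mathcal{N}}$, and apply Lemma \ref{lemma: representer for MNI} with $\hat\nu:=\hat g$. The one item you should make explicit rather than fold into ``bookkeeping'' is the verification of the lemma's hypothesis \eqref{Non-empty-set in lemma}, namely that $(\|\hat g\|_{\infty}\partial\|\cdot\|_\infty(\hat g))\cap\mathcal{M}_{\mathcal{X},\mathbf{Y}}\neq\emptyset$ for the $\hat g$ built from a dual maximizer---strong duality (equality of optimal values) alone does not yield this intersection property, and the paper supplies it by invoking Proposition 37 of \cite{wang2023sparse}.
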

\begin{proof}
Proposition \ref{prop: BN is isometic isomorphic to quotient space} ensures that the vector-valued RKBS $\mathcal{B}_{\mathcal{N}}$ has the pre-dual space $\mathcal{S}$. Note that the functionals in $\mathbb{K}_\mathcal{X}$ belong to the pre-dual space $\mathcal{S}$ and are linearly independent. Moreover, since $\hat g$ is the function  defined by  \eqref{stage 1 hat g}, we have that $\hat g\in\mathcal{V}_{\mathcal{N}}$, and   according to Proposition 37 of \cite{wang2023sparse}, $\hat g$ satisfies the condition
\begin{equation}\label{non empty set: BN case}
(\|\hat g\|_{\infty}\partial\|\cdot\|_\infty(\hat g))\cap{\mathcal{M}}_{\mathcal{X},\mathbf{Y}}\neq\emptyset.
\end{equation} 
Hence, the hypothesis of Lemma \ref{lemma: representer for MNI} is satisfied. Then by Lemma \ref{lemma: representer for MNI}, we can represent any extreme point $\hat f$ of the solution set $\mathbb{S}_{\mathcal{X},\mathbf{Y}}$ of the MNI problem \eqref{MNI in RKBS B measure M(X)} as in equation \eqref{representer solution of finite linear combination of MNI in BN} for some $\gamma_\ell\in\mathbb{R}$, $\ell\in\mathbb{N}_{tm}$, with  $\sum_{\ell\in\mathbb{N}_{tm}}\gamma_\ell=\|\hat g\|_{\infty}$ and $h_\ell\in\mathrm{ext}(\partial\|\cdot\|_\infty(\hat g))$, $\ell\in\mathbb{N}_{tm}$.  
\end{proof}

Theorem \ref{theorem: direct representer theorem for MNI in BN} provides for each extreme point of the solution set of problem \eqref{MNI in RKBS B measure M(X)} an explicit, data-dependent representation by using the elements in $\mathrm{ext}(\partial\|\cdot\|_\infty(\hat g))$. Even more significantly, the essence of Theorem \ref{theorem: direct representer theorem for MNI in BN} is that although the MNI problem \eqref{MNI in RKBS B measure M(X)} is of infinite dimension, every extreme point of its solution set lays in a {\it finite} dimensional manifold spanned by $tm$ elements  $h_\ell\in\mathrm{ext}(\partial\|\cdot\|_\infty(\hat g))$.

As we have demonstrated earlier, the element $\hat g$ satisfying \eqref{non empty set: BN case} can be obtained by solving the dual problem \eqref{dual problem} of \eqref{MNI in RKBS B measure M(X)}.
Since $\hat{g}$ is an element in $\mathcal{S}$, the subdifferential $\partial\|\cdot\|_\infty(\hat g)$ is a subset of the space $\mathcal{B}_{\mathcal{N}}$, which is the dual space of $\mathcal{S}$. Notice that the subdifferential set $\partial\|\cdot\|_\infty(\hat g)$ may not be included in the space $\mathcal{B}_{\mathbb{W}}$  defined by \eqref{space B Delta} which is spanned by functions $\mathcal{K}(\cdot,\theta)$, $\theta\in\Theta$.
However, a learning solution in the vector-valued RKBS $\mathcal{B}_{\mathcal{N}}$ is expected to be represented by functions $\mathcal{K}(\cdot,\theta)$, $\theta\in\Theta$. For the purpose of obtaining a kernel representation for a solution to problem \eqref{MNI in RKBS B measure M(X)}, alternatively to problem \eqref{MNI in RKBS B measure M(X)}, we consider a closely related MNI problem in the measure space $\mathcal{M}(\Theta)$ and apply Lemma \ref{lemma: representer for MNI} to it. We then translate the resulting representer theorem for the MNI problem in $\mathcal{M}(\Theta)$ to that for problem \eqref{MNI in RKBS B measure M(X)}, by using the relation between the solutions of these two problems.

We now introduce the MNI problem in the measure space $\mathcal{M}(\Theta)$ with respect to the sampled dataset $\mathbb{D}_m$ and show the relation between its solution and a solution to problem \eqref{MNI in RKBS B measure M(X)}. By defining an operator $\widetilde{\mathbf{I}}_{\mathcal{X}}:\mathcal{M}(\Theta) \rightarrow \mathbb{R}^{t\times m}$ by
\begin{equation*}\label{tilde L on measure space}
\widetilde{\mathbf{I}}_{\mathcal{X}}(\mu):=\left[\langle \mathcal{K}_k(x_j,\cdot),\mu\rangle_{\mathcal{M}(\Theta)}: k\in\mathbb{N}_t, j \in \mathbb{N}_{m}\right],\ \mbox{for all}\ \mu\in\mathcal{M}(\Theta), 
\end{equation*} 
and introducing a subset $\widetilde{\mathcal{M}}_{\mathcal{X},\mathbf{Y}}$ of $\mathcal{M}(\Theta)$ as 
\begin{equation*}\label{hyperplane in measure space}
    \widetilde{\mathcal{M}}_{\mathcal{X},\mathbf{Y}}:=\left\{\mu\in\mathcal{M}(\Theta):\widetilde{\mathbf{I}}_{\mathcal{X}}(\mu)=\mathbf{Y}\right\},
\end{equation*}
we formulate the MNI problem in $\mathcal{M}(\Theta)$ as
\begin{equation}\label{MNI in measure space}
    \inf \left\{\left\|\mu\right\|_{\mathrm{TV}}: \mu \in \widetilde{\mathcal{M}}_{\mathcal{X},\mathbf{Y}}\right\}.
\end{equation}

The next proposition reveals the relation between the solutions of \eqref{MNI in RKBS B measure M(X)} and \eqref{MNI in measure space}. 

\begin{proposition}\label{prop: MNI solution for measure space is solution for RKBS}
Suppose that $m$ distinct points $x_j\in\mathbb{R}^s$, $j\in\mathbb{N}_m$, and $\mathbf{Y}\in\mathbb{R}^{t\times m}\backslash\{\mathbf{0}\}$ are given, and the functionals in $\mathbb{K}_\mathcal{X}$ are linearly independent. 
If ${\hat{\mu}}$ is a solution to the MNI problem \eqref{MNI in measure space}, then  
$f_{\hat\mu}(x):=\left[f_{\hat\mu}^k(x): k\in\mathbb{N}_t\right]^\top$, $x\in\mathbb{R}^s$,
with $f_{\hat\mu}^k$, $k\in\mathbb{N}_t$, defined as in  \eqref{Def:f_mu^k} with $\mu$ replaced by $\hat{\mu}$, is a solution to the MNI problem \eqref{MNI in RKBS B measure M(X)} 
and $\|f_{{\hat{\mu}}}\|_{\mathcal{B}_{\mathcal{N}}}=\|{\hat{\mu}}\|_{\mathrm{TV}}$. 
\end{proposition}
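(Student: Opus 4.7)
My plan is to verify three things in order: feasibility of $f_{\hat\mu}$ in problem \eqref{MNI in RKBS B measure M(X)}, the one-sided bound $\|f_{\hat\mu}\|_{\mathcal{B}_{\mathcal{N}}}\leq\|\hat\mu\|_{\mathrm{TV}}$, and the matching lower bound, which I will obtain by showing that the two MNI problems share a common optimal value. The norm equality and the assertion that $f_{\hat\mu}$ is a minimizer will then drop out simultaneously.

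For feasibility, I would unfold the definitions: by \eqref{Def:f_mu^k} and \eqref{DualBilinearForm}, $f_{\hat\mu}^k(x_j)=\int_\Theta\mathcal{N}_k(x_j,\theta)\rho(\theta)\,d\hat\mu(\theta)=\langle\mathcal{K}_k(x_j,\cdot),\hat\mu\rangle_{C_0(\Theta)}$, which is exactly the $(k,j)$ entry of $\widetilde{\mathbf{I}}_{\mathcal{X}}(\hat\mu)$. Since $\hat\mu\in\widetilde{\mathcal{M}}_{\mathcal{X},\mathbf{Y}}$, this equals $y_j^k$, so $\mathbf{I}_{\mathcal{X}}(f_{\hat\mu})=\mathbf{Y}$ and $f_{\hat\mu}\in\mathcal{M}_{\mathcal{X},\mathbf{Y}}$. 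The upper bound $\|f_{\hat\mu}\|_{\mathcal{B}_{\mathcal{N}}}\leq\|\hat\mu\|_{\mathrm{TV}}$ is immediate from the definition \eqref{banach space norm DNN} by taking $\nu=\hat\mu$ as a valid candidate in the infimum.

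The crux is the reverse inequality, for which I plan to show
\[
\inf\{\|f_\mu\|_{\mathcal{B}_{\mathcal{N}}}:f_\mu\in\mathcal{M}_{\mathcal{X},\mathbf{Y}}\}\;=\;\inf\{\|\mu\|_{\mathrm{TV}}:\mu\in\widetilde{\mathcal{M}}_{\mathcal{X},\mathbf{Y}}\}.
\]
The $\leq$ direction follows by choosing any $\mu\in\widetilde{\mathcal{M}}_{\mathcal{X},\mathbf{Y}}$, observing as above that $f_\mu\in\mathcal{M}_{\mathcal{X},\mathbf{Y}}$, and invoking the norm bound from definition \eqref{banach space norm DNN}. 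For the $\geq$ direction, I would take an arbitrary $f_\mu\in\mathcal{M}_{\mathcal{X},\mathbf{Y}}$; every $\nu\in\mathcal{M}(\Theta)$ with $f_\nu=f_\mu$ satisfies $\widetilde{\mathbf{I}}_{\mathcal{X}}(\nu)=\mathbf{I}_{\mathcal{X}}(f_\mu)=\mathbf{Y}$ by the same duality computation, hence $\nu\in\widetilde{\mathcal{M}}_{\mathcal{X},\mathbf{Y}}$ and $\|\nu\|_{\mathrm{TV}}$ is at least the optimal value of \eqref{MNI in measure space}. Taking infimum over $\nu$ (using \eqref{banach space norm DNN}) and then over $f_\mu\in\mathcal{M}_{\mathcal{X},\mathbf{Y}}$ gives the required inequality.

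Once the two optimal values are shown equal to some common value $\alpha$, the proof concludes quickly: since $\hat\mu$ solves \eqref{MNI in measure space}, $\|\hat\mu\|_{\mathrm{TV}}=\alpha$, while feasibility forces $\|f_{\hat\mu}\|_{\mathcal{B}_{\mathcal{N}}}\geq\alpha$; combining with $\|f_{\hat\mu}\|_{\mathcal{B}_{\mathcal{N}}}\leq\|\hat\mu\|_{\mathrm{TV}}=\alpha$ pins everything down. I do not anticipate any serious technical obstacle; the only subtlety is to keep straight that \eqref{banach space norm DNN} is an infimum over representing measures and to pass this infimum coherently between the two problems via the identification $\widetilde{\mathbf{I}}_{\mathcal{X}}(\nu)=\mathbf{I}_{\mathcal{X}}(f_\nu)$. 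Linear independence of $\mathbb{K}_{\mathcal{X}}$ and $\mathbf{Y}\neq\mathbf{0}$ are not needed for the argument beyond guaranteeing that $\widetilde{\mathcal{M}}_{\mathcal{X},\mathbf{Y}}$ is nonempty so that $\hat\mu$ exists.
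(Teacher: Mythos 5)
Your proposal is correct and follows essentially the same route as the paper's proof: both rest on the identification $\widetilde{\mathbf{I}}_{\mathcal{X}}(\nu)=\mathbf{I}_{\mathcal{X}}(f_\nu)$ (so that $\nu\in\widetilde{\mathcal{M}}_{\mathcal{X},\mathbf{Y}}$ iff $f_\nu\in\mathcal{M}_{\mathcal{X},\mathbf{Y}}$), the trivial bound $\|f_{\hat\mu}\|_{\mathcal{B}_{\mathcal{N}}}\leq\|\hat\mu\|_{\mathrm{TV}}$, and passing the infimum over representing measures in \eqref{banach space norm DNN} to get $\|\hat\mu\|_{\mathrm{TV}}\leq\|f_\mu\|_{\mathcal{B}_{\mathcal{N}}}$ for every feasible $f_\mu$. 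Your repackaging of these inequalities as equality of the two optimal values is only a cosmetic difference, and your closing remark about the role of the hypotheses matches the paper's use of them solely for existence of $\hat\mu$.
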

\begin{proof}
Note that the measure space $\mathcal{M}(\Theta)$ has the pre-dual space $C_0(\Theta)$. Hence, it follows from Proposition 1 of \cite{wang2021representer} that the linear independence of  the functionals in $\mathbb{K}_\mathcal{X}$ ensures the existence of a solution to problem \eqref{MNI in measure space}. Assume that ${\hat{\mu}}$ is a solution to problem \eqref{MNI in measure space}. We then obtain that ${\hat{\mu}}\in\widetilde{\mathcal{M}}_{\mathcal{X},\mathbf{Y}}$ and 
\begin{equation}\label{proof mu* is a solution}
        \|{\hat{\mu}}\|_{\mathrm{TV}}\leq\|\mu\|_{\mathrm{TV}},\ \text{for all }\mu\in\widetilde{\mathcal{M}}_{\mathcal{X},\mathbf{Y}}.
\end{equation} 
By equations \eqref{dual bilinear on BNS} and \eqref{natural-map-predual} with $g:=\mathcal{K}_k(x_j,\cdot)$, we have for each $k\in\mathbb{N}_t$ and each $j\in\mathbb{N}_{m}$ that
    \begin{equation}\label{verify interpolation condition}
    \langle \mathcal{K}_k(x_j,\cdot), f_{\mu}\rangle_{\mathcal{B}_{\mathcal{N}}}=\langle \mu,\mathcal{K}_k(x_j,\cdot)\rangle_{C_0(\Theta)}, \ \mbox{for all}\ \mu\in\mathcal{M}(\Theta).
    \end{equation}
    Note that $\mathcal{K}_k(x_j,\cdot)\in C_0(\Theta)$ can be viewed as a bounded linear functional on $\mathcal{M}(\Theta)$ and 
    $$\langle \mu,\mathcal{K}_k(x_j,\cdot)\rangle_{C_0(\Theta)}=\langle \mathcal{K}_k(x_j,\cdot),\mu\rangle_{\mathcal{M}(\Theta)}.
    $$ 
    Substituting the above equation into the right-hand of equation \eqref{verify interpolation condition} leads to 
    $$
    \langle \mathcal{K}_k(x_j,\cdot), f_{\mu}\rangle_{\mathcal{B}_{\mathcal{N}}}=\langle \mathcal{K}_k(x_j,\cdot),\mu\rangle_{\mathcal{M}(\Theta)}, \ \mbox{for all}\ \mu\in\mathcal{M}(\Theta).
    $$
    This implies that $        \mathbf{I}_{\mathcal{X}}(f_{\mu})=\widetilde{\mathbf{I}}_{\mathcal{X}}(\mu)$ for all $\mu\in\mathcal{M}(\Theta)$. As a result,  
    $\mu\in\widetilde{\mathcal{M}}_{\mathcal{X},\mathbf{Y}}$ if and only if 
 $f_{\mu}\in\mathcal{M}_{\mathcal{X},\mathbf{Y}}$. Since ${\hat{\mu}}\in\widetilde{\mathcal{M}}_{\mathcal{X},\mathbf{Y}}$, we get that $f_{\hat\mu}\in\mathcal{M}_{\mathcal{X},\mathbf{Y}}$. It suffices to verify that 
  \begin{equation*}
\|f_{{\hat{\mu}}}\|_{\mathcal{B}_{\mathcal{N}}}\leq\|f_\mu\|_{\mathcal{B}_{\mathcal{N}}},\quad\text{for all }f_{\mu}\in{\mathcal{M}}_{\mathcal{X},\mathbf{Y}}.
    \end{equation*}
    Let $f_{\mu}$ be an arbitrary element in ${\mathcal{M}}_{\mathcal{X},\mathbf{Y}}$. For any $\nu\in\mathcal{M}(\Theta)$ satisfying $f_{\mu}=f_{\nu}$, there holds
    $f_{\nu}\in{\mathcal{M}}_{\mathcal{X},\mathbf{Y}}$. Thus,  $\nu\in\widetilde{\mathcal{M}}_{\mathcal{X},\mathbf{Y}}$. It follows from inequality \eqref{proof mu* is a solution} that
\begin{equation}\label{Relation:TV-Norm}
     \|{\hat{\mu}}\|_{\mathrm{TV}}\leq\|\nu\|_{\mathrm{TV}}. 
\end{equation}
By taking infimum of both sides of the inequality \eqref{Relation:TV-Norm} over $\nu\in\mathcal{M}(\Theta)$ satisfying $f_\mu=f_\nu$ and noting the definition \eqref{banach space norm DNN} of the  norm $\|f_\mu\|_{\mathcal{B}_{\mathcal{N}}}$, we get that  
\begin{equation}\label{proof mu* leq fnu}
        \|{\hat{\mu}}\|_{\mathrm{TV}}\leq\|f_\mu\|_{\mathcal{B}_{\mathcal{N}}}. 
\end{equation}
Again by the definition \eqref{banach space norm DNN} of the  norm $\|f_{\hat\mu}\|_{\mathcal{B}_{\mathcal{N}}}$, we obtain that
\begin{equation}\label{proof fmu* leq mu*}    \|f_{{\hat{\mu}}}\|_{\mathcal{B}_{\mathcal{N}}}\leq\|{\hat{\mu}}\|_{\mathrm{TV}}.
\end{equation}
   Combining inequalities \eqref{proof mu* leq fnu} with \eqref{proof fmu* leq mu*}, we conclude that  $\|f_{{\hat{\mu}}}\|_{\mathcal{B}_{\mathcal{N}}}\leq\|f_{\mu}\|_{\mathcal{B}_{\mathcal{N}}}$. Therefore, $f_{{\hat{\mu}}}$ is a solution to the MNI problem \eqref{MNI in RKBS B measure M(X)}. 
   Moreover, by taking $\mu={\hat{\mu}}$ in \eqref{proof mu* leq fnu}, we get that $\|{\hat{\mu}}\|_{\mathrm{TV}}\leq\|f_{{\hat{\mu}}}\|_{\mathcal{B}_{\mathcal{N}}}$. This together with inequality \eqref{proof fmu* leq mu*} leads to $\|f_{{\hat{\mu}}}\|_{\mathcal{B}_{\mathcal{N}}}=\|{\hat{\mu}}\|_{\mathrm{TV}}$. 
\end{proof}

We next derive a representer theorem for a solution to problem \eqref{MNI in measure space} by employing Lemma \ref{lemma: representer for MNI}. 
Applying Lemma \ref{lemma: representer for MNI} to problem \eqref{MNI in measure space} requires the representation of the extreme points of the subdifferential set $\partial\|\cdot\|_\infty(g)$ for any nonzero $g\in C_0(\Theta)$. Here, the subdifferential set $\partial\|\cdot\|_\infty(g)$ is a subset of the measure space $\mathcal{M}(\Theta)$. 
For each $g\in C_0(\Theta)$, let $\Theta(g)$ denote the subset of $\Theta$ where the function $g$ attains its maximum norm $\|g\|_\infty$, that is, 
\begin{equation}\label{def: infinity set for function new}
    \Theta(g):=\left\{\theta\in \Theta:|g(\theta)|=\|g\|_\infty\right\}.
\end{equation}
For each $g\in C_0(\Theta)$, we introduce a subset of $\mathcal{M}(\Theta)$ by 
\begin{equation}\label{def: Omage f}
    \Omega(g):=\left\{\mathrm{sign}(g(\theta))\delta_\theta:\theta\in\Theta(g)\right\}.
\end{equation}
Lemma $26$ in \cite{wang2023sparse} essentially states that if $g\in C_0(\Theta)\backslash\{0\}$, then
\begin{equation}\label{extreme points of partial infinity norm}
\mathrm{ext}\left(\partial\|\cdot\|_{\infty}(g)\right)=\Omega(g).
\end{equation}
We denote by $\widetilde{\mathbb{S}}_{\mathcal{X},\mathbf{Y}}$ the solution set of the MNI problem \eqref{MNI in measure space}. We note that the MNI problem \eqref{MNI in measure space} shares the same dual problem \eqref{dual problem} with the MNI problem \eqref{MNI in RKBS B measure M(X)}. 

\begin{proposition}\label{prop: representer theorem for MNI  in measure space}
Suppose that $m$ distinct points $x_j\in\mathbb{R}^s$, $j\in\mathbb{N}_m$, and $\mathbf{Y}\in\mathbb{R}^{t\times m}\backslash\{\mathbf{0}\}$ are given, and the functionals in $\mathbb{K}_\mathcal{X}$ are linearly independent. 
Let $\hat g$ be the function defined by \eqref{stage 1 hat g} and $\Theta(\hat g)$ defined by \eqref{def: infinity set for function new} with $g$ replaced by $\hat g$. Then for any extreme point $\hat \mu$ of the solution set $\widetilde{\mathbb{S}}_{\mathcal{X},\mathbf{Y}}$ of the MNI problem \eqref{MNI in measure space}, there exist $\gamma_\ell\in\mathbb{R}$, $\ell\in\mathbb{N}_{tm}$, with  $\sum_{\ell\in\mathbb{N}_{tm}}\gamma_\ell=\|\hat g\|_{\infty}$ and $\theta_\ell\in\Theta(\hat g)$, $\ell\in\mathbb{N}_{tm}$, such that 
    \begin{equation}\label{eq: representer for measure space}
        \hat\mu=\sum\limits_{\ell\in\mathbb{N}_{tm}}\gamma_\ell\mathrm{sign}(\hat g(\theta_\ell))\delta_{\theta_\ell}.
    \end{equation}
\end{proposition}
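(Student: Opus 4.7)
The plan is to apply Lemma \ref{lemma: representer for MNI} from Appendix B to the MNI problem \eqref{MNI in measure space} posed in the measure space $\mathcal{M}(\Theta)$, exploiting the fact that $\mathcal{M}(\Theta)$ has $C_0(\Theta)$ as its pre-dual space and that each functional $\mathcal{K}_k(x_j,\cdot)$ belongs to $C_0(\Theta)$. The pattern mirrors the proof of Theorem \ref{theorem: direct representer theorem for MNI in BN}, but now everything is transported to the $\mathcal{M}(\Theta)$-side of the duality, where the extreme points of the $\|\cdot\|_\infty$-subdifferential admit the explicit description \eqref{extreme points of partial infinity norm}.

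First I would check that the hypotheses of Lemma \ref{lemma: representer for MNI} are met: the linear independence of the $tm$ functionals in $\mathbb{K}_{\mathcal{X}}\subset C_0(\Theta)$ is assumed, and problem \eqref{MNI in measure space} is the MNI problem in $\mathcal{M}(\Theta)$ with finitely many continuous linear constraints supplied by $\widetilde{\mathbf{I}}_{\mathcal{X}}$. Next I would verify that the element $\hat g\in C_0(\Theta)$ defined by \eqref{stage 1 hat g} satisfies the key nonemptiness condition
\begin{equation*}
(\|\hat g\|_{\infty}\,\partial\|\cdot\|_\infty(\hat g))\cap\widetilde{\mathcal{M}}_{\mathcal{X},\mathbf{Y}}\neq\emptyset,
\end{equation*}
which is the analogue of \eqref{non empty set: BN case} in the measure-space setting. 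This follows by invoking Proposition 37 of \cite{wang2023sparse} with the underlying Banach space taken to be $\mathcal{M}(\Theta)$ and its pre-dual $C_0(\Theta)$: $\hat g$ is built from a solution $\hat{\mathbf{c}}$ of the finite-dimensional dual problem \eqref{dual problem}, which, as noted in the excerpt, is the common dual problem of \eqref{MNI in RKBS B measure M(X)} and \eqref{MNI in measure space} and has the same optimal value $C^*$ as both primal problems.

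Once these two prerequisites are in hand, Lemma \ref{lemma: representer for MNI} yields, for every extreme point $\hat\mu$ of $\widetilde{\mathbb{S}}_{\mathcal{X},\mathbf{Y}}$, scalars $\gamma_\ell\in\mathbb{R}$ with $\sum_{\ell\in\mathbb{N}_{tm}}\gamma_\ell=\|\hat g\|_\infty$ and extreme points $\mu_\ell\in\mathrm{ext}(\partial\|\cdot\|_\infty(\hat g))$ such that $\hat\mu=\sum_{\ell\in\mathbb{N}_{tm}}\gamma_\ell\mu_\ell$. I would then substitute the characterization \eqref{extreme points of partial infinity norm}, which identifies
\begin{equation*}
\mathrm{ext}(\partial\|\cdot\|_\infty(\hat g))=\Omega(\hat g)=\{\mathrm{sign}(\hat g(\theta))\delta_\theta:\theta\in\Theta(\hat g)\},
\end{equation*}
so that each $\mu_\ell$ is of the form $\mathrm{sign}(\hat g(\theta_\ell))\delta_{\theta_\ell}$ for some $\theta_\ell\in\Theta(\hat g)$. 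This yields exactly the desired representation \eqref{eq: representer for measure space}.

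The main obstacle I anticipate is the verification that $\hat g$ defined by \eqref{stage 1 hat g} actually satisfies the subdifferential/interpolation compatibility condition, since $\hat g$ is constructed from the dual optimizer rather than directly from a subgradient. Here the argument is essentially a complementary-slackness one: optimality of $\hat{\mathbf{c}}$ in \eqref{dual problem} forces $\|\sum_{k,j}\hat c_{kj}\mathcal{K}_k(x_j,\cdot)\|_\infty=1$ to be attained, so any optimal primal measure $\mu^{\ast}\in\widetilde{\mathbb{S}}_{\mathcal{X},\mathbf{Y}}$ satisfies $\langle \hat g,\mu^{\ast}\rangle_{C_0(\Theta)}=C^{\ast}\cdot C^{\ast}=\|\hat g\|_\infty\|\mu^{\ast}\|_{\mathrm{TV}}$, which is the defining equality for $\mu^{\ast}\in\|\hat g\|_\infty\,\partial\|\cdot\|_\infty(\hat g)$. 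Once this is checked, the remainder of the argument reduces to bookkeeping, and the rest of the proof is essentially mechanical.
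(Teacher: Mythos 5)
Your proposal is correct and follows essentially the same route as the paper's proof: invoke Proposition 37 of \cite{wang2023sparse} to verify the nonemptiness condition for $\hat g$ in the $\mathcal{M}(\Theta)$ setting, apply Lemma \ref{lemma: representer for MNI}, and then substitute the characterization \eqref{extreme points of partial infinity norm} of $\mathrm{ext}(\partial\|\cdot\|_\infty(\hat g))$ as $\Omega(\hat g)$. Your supplementary complementary-slackness sketch of why the condition $(\|\hat g\|_{\infty}\partial\|\cdot\|_\infty(\hat g))\cap\widetilde{\mathcal{M}}_{\mathcal{X},\mathbf{Y}}\neq\emptyset$ holds is also sound, though the paper simply cites the referenced proposition for this step.
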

\begin{proof}
Note that the measure space $\mathcal{M}(\Theta)$ has the pre-dual space $C_0(\Theta)$  and the functionals $\mathcal{K}_k(x_j,\cdot)$, $k\in\mathbb{N}_t$, $j\in\mathbb{N}_m$, which belong to the pre-dual space $C_0(\Theta)$, are linearly independent. By Proposition 37 in \cite{wang2023sparse},  the function $\hat g$ defined by \eqref{stage 1 hat g} satisfies $\hat g\in\mathcal{V}_{\mathcal{N}}$ and
\begin{equation*}\label{non empty set: BN case tilde}
(\|\hat g\|_{\infty}\partial\|\cdot\|_\infty(\hat g))\cap{\widetilde{\mathcal{M}}}_{\mathcal{X},\mathbf{Y}}\neq\emptyset,
\end{equation*} 
in which the subdifferential set $\partial\|\cdot\|_\infty(\hat g)$ is a subset of the measure space $\mathcal{M}(\Theta)$. As a result, the hypothesis of Lemma \ref{lemma: representer for MNI} is satisfied. According to Lemma \ref{lemma: representer for MNI}, any extreme point $\hat \mu$ of the solution set $\widetilde{\mathbb{S}}_{\mathcal{X},\mathbf{Y}}$ of problem \eqref{MNI in measure space}, there exist $\gamma_\ell\in\mathbb{R}$, $\ell\in\mathbb{N}_{tm}$, with  $\sum_{\ell\in\mathbb{N}_{tm}}\gamma_\ell=\|\hat g\|_{\infty}$ and $u_\ell\in\mathrm{ext}(\partial \|\cdot\|_\infty(\hat g))$, $\ell\in\mathbb{N}_{tm}$, such that 
\begin{equation}\label{proof: apply general MNI theorem to C0}
\hat\mu=\sum_{\ell\in\mathbb{N}_{tm}}\gamma_\ell u_\ell.  
\end{equation}
It follows from equation \eqref{extreme points of partial infinity norm} that for each $\ell\in\mathbb{N}_{tm}$, we have that $u_\ell\in \Omega(\hat{g})$. By definition \eqref{def: Omage f} of the set $\Omega(\hat{g})$, for each $\ell\in\mathbb{N}_{tm}$, there exists $\theta_\ell\in\Theta(\hat g)$ such that $u_\ell=\mathrm{sign}(\hat g(\theta_\ell))\delta_{\theta_\ell}$. Therefore, we may rewrite the representation \eqref{proof: apply general MNI theorem to C0} of $\hat{\mu}$ as \eqref{eq: representer for measure space}. 
\end{proof}

Proposition \ref{prop: representer theorem for MNI  in measure space} provides a representation for any extreme point of the solution set of the MNI problem \eqref{MNI in measure space}. This solution can be converted via Proposition \ref{prop: MNI solution for measure space is solution for RKBS} to a solution to the MNI problem \eqref{MNI in RKBS B measure M(X)}. We present this result in the next theorem.

\begin{theorem}\label{theorem: kernel representer theorem for MNI in BN}
Suppose that $m$ distinct points $x_j\in\mathbb{R}^s$, $j\in\mathbb{N}_m$, and $\mathbf{Y}\in\mathbb{R}^{t\times m}\backslash\{\mathbf{0}\}$ are given, and the functionals in $\mathbb{K}_\mathcal{X}$ are linearly independent. Let 
$\hat g$ be the function defined by \eqref{stage 1 hat g} and $\Theta(\hat g)$ defined by \eqref{def: infinity set for function new} with $g$ replaced by $\hat g$. Then the MNI problem \eqref{MNI in RKBS B measure M(X)} has a solution $\hat f$ in the form
\begin{equation}\label{representer solution of finite linear combination of MNI}
\hat f(x)=\sum\limits_{\ell\in\mathbb{N}_{tm}}\gamma_\ell\mathrm{sign}(\hat g(\theta_\ell))\mathcal{K}(x,\theta_\ell),\ x\in\mathbb{R}^s,
\end{equation}  
for some $\gamma_\ell\in\mathbb{R}$, $\ell\in\mathbb{N}_{tm}$, with  $\sum_{\ell\in\mathbb{N}_{tm}}\gamma_\ell=\|\hat g\|_{\infty}$ and $\theta_\ell\in\Theta(\hat g)$, $\ell\in\mathbb{N}_{tm}$.
\end{theorem}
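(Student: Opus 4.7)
The plan is to combine the two preceding results: Proposition \ref{prop: representer theorem for MNI  in measure space}, which gives an explicit representation for any extreme point of the solution set $\widetilde{\mathbb{S}}_{\mathcal{X},\mathbf{Y}}$ of the measure-space MNI problem \eqref{MNI in measure space}, and Proposition \ref{prop: MNI solution for measure space is solution for RKBS}, which converts a solution of \eqref{MNI in measure space} into a solution of the RKBS MNI problem \eqref{MNI in RKBS B measure M(X)}. Pushing the discrete representation obtained in $\mathcal{M}(\Theta)$ through the map $\mu \mapsto f_\mu$ will directly yield the kernel expansion \eqref{representer solution of finite linear combination of MNI}.

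First, I would establish that $\widetilde{\mathbb{S}}_{\mathcal{X},\mathbf{Y}}$ admits at least one extreme point. The feasible set $\widetilde{\mathcal{M}}_{\mathcal{X},\mathbf{Y}}$ is defined by the finitely many linear constraints $\langle \mathcal{K}_k(x_j,\cdot),\mu\rangle_{\mathcal{M}(\Theta)} = y_j^k$; since each $\mathcal{K}_k(x_j,\cdot)\in C_0(\Theta)$, these constraints are weak$^*$ continuous, so $\widetilde{\mathcal{M}}_{\mathcal{X},\mathbf{Y}}$ is weak$^*$ closed. The sublevel set $\{\mu : \|\mu\|_{\mathrm{TV}}\le \|\hat\mu\|_{\mathrm{TV}}\}\cap\widetilde{\mathcal{M}}_{\mathcal{X},\mathbf{Y}}$ (using any minimizer $\hat\mu$ guaranteed by Proposition \ref{Existence-of-Solution-MNI} applied in $\mathcal{M}(\Theta)$) is then a weak$^*$ closed, bounded, convex subset of $\mathcal{M}(\Theta) = (C_0(\Theta))^*$, hence weak$^*$ compact by the Banach--Alaoglu theorem. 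The Krein--Milman theorem then furnishes an extreme point $\hat\mu$ of $\widetilde{\mathbb{S}}_{\mathcal{X},\mathbf{Y}}$.

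Next, I would apply Proposition \ref{prop: representer theorem for MNI  in measure space} to $\hat\mu$, obtaining scalars $\gamma_\ell\in\mathbb{R}$, $\ell\in\mathbb{N}_{tm}$, with $\sum_{\ell}\gamma_\ell = \|\hat g\|_\infty$, and parameter points $\theta_\ell\in\Theta(\hat g)$ such that
\begin{equation*}
\hat\mu = \sum_{\ell\in\mathbb{N}_{tm}} \gamma_\ell\, \mathrm{sign}(\hat g(\theta_\ell))\, \delta_{\theta_\ell}.
\end{equation*}
Proposition \ref{prop: MNI solution for measure space is solution for RKBS} then asserts that the vector-valued function $f_{\hat\mu}$, whose components are defined by equation \eqref{Def:f_mu^k} with $\mu$ replaced by $\hat\mu$, is a solution to the RKBS MNI problem \eqref{MNI in RKBS B measure M(X)}. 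A direct computation, using the fact that each integral against a Dirac mass is a point evaluation, gives for every $k\in\mathbb{N}_t$ and $x\in\mathbb{R}^s$,
\begin{equation*}
f_{\hat\mu}^k(x) = \int_\Theta \mathcal{N}_k(x,\theta)\rho(\theta)\, d\hat\mu(\theta) = \sum_{\ell\in\mathbb{N}_{tm}} \gamma_\ell\, \mathrm{sign}(\hat g(\theta_\ell))\, \mathcal{N}_k(x,\theta_\ell)\rho(\theta_\ell),
\end{equation*}
and invoking the definition \eqref{Kernel} of the reproducing kernel $\mathcal{K}(x,\theta)=\mathcal{N}(x,\theta)\rho(\theta)$, assembling the components into a vector produces exactly the representation \eqref{representer solution of finite linear combination of MNI}, with the required constraints on $\gamma_\ell$ and $\theta_\ell$.

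The main obstacle I anticipate is merely the bookkeeping for the existence of an extreme point of $\widetilde{\mathbb{S}}_{\mathcal{X},\mathbf{Y}}$, since Proposition \ref{prop: representer theorem for MNI  in measure space} is quantified only over extreme points; once the Banach--Alaoglu plus Krein--Milman argument is in place, the remainder of the proof is a mechanical application of the two cited propositions together with the definition of the kernel $\mathcal{K}$. No new analytic machinery is required beyond what has already been developed.
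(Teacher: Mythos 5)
Your proposal is correct and follows essentially the same route as the paper: pick an extreme point of the solution set of the measure-space problem \eqref{MNI in measure space}, represent it via Proposition \ref{prop: representer theorem for MNI  in measure space}, transfer it to the RKBS via Proposition \ref{prop: MNI solution for measure space is solution for RKBS}, and integrate the Dirac masses against $\mathcal{K}_k(x,\cdot)$ to obtain \eqref{representer solution of finite linear combination of MNI}. The only cosmetic difference is that you spell out the Banach--Alaoglu/Krein--Milman argument for the nonemptiness of $\mathrm{ext}\bigl(\widetilde{\mathbb{S}}_{\mathcal{X},\mathbf{Y}}\bigr)$, which the paper instead delegates to the cited results and the remark in Appendix B.
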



\begin{proof}
By Proposition 1 of \cite{wang2021representer}, the MNI problem \eqref{MNI in measure space} has at least one solution. That is, $\widetilde{\mathbb{S}}_{\mathcal{X},\mathbf{Y}}$ is nonempty and moreover, $\mathrm{ext}\left(\widetilde{\mathbb{S}}_{\mathcal{X},\mathbf{Y}}\right)$ is nonempty. We choose $\hat \mu\in\mathrm{ext}\left(\widetilde{\mathbb{S}}_{\mathcal{X},\mathbf{Y}}\right)$. Proposition \ref{prop: representer theorem for MNI  in measure space} ensures that there exist $\gamma_\ell\in\mathbb{R}$, $\ell\in\mathbb{N}_{tm}$, with  $\sum_{\ell\in\mathbb{N}_{tm}}\gamma_\ell=\|\hat g\|_{\infty}$ and $\theta_\ell\in\Theta(\hat g)$, $\ell\in\mathbb{N}_{tm}$, such that $\hat\mu$ may be expressed as in equation \eqref{eq: representer for measure space}.
Since $\hat{\mu}$ is a solution to problem \eqref{MNI in measure space}, we get by Proposition \ref{prop: MNI solution for measure space is solution for RKBS} that $f_{\hat\mu}=[f_{\hat\mu}^k:k\in\mathbb{N}_t]$ is a solution to the MNI problem \eqref{MNI in RKBS B measure M(X)}. By definition \eqref{Def:f_mu^k} of $f_{\hat\mu}^k$, $k\in\mathbb{N}_t$, we have that 
\begin{equation}\label{Solution_Form}
f_{\hat\mu}^k(x)=\int_\Theta \mathcal{K}_k(x,\theta)d\hat{\mu}(\theta), \quad\text{for }x\in\mathbb{R}^s,  k\in\mathbb{N}_t. 
\end{equation}
Substituting representation \eqref{eq: representer for measure space} of $\hat\mu$ into the right-hand side of equation \eqref{Solution_Form} yields that 
\begin{equation*}
f_{\hat\mu}^k(x)=\sum\limits_{\ell\in\mathbb{N}_{tm}}\gamma_\ell\mathrm{sign}(\hat g(\theta_\ell))\mathcal{K}_k(x,\theta_\ell), \quad\text{for }x\in\mathbb{R}^s,  k\in\mathbb{N}_t. 
\end{equation*}
    By letting $\hat f:=f_{\hat\mu}$, we conclude that the MNI problem \eqref{MNI in RKBS B measure M(X)} has a solution $\hat f$ in the form of \eqref{representer solution of finite linear combination of MNI}. 
\end{proof}

We now return to the regularization problem \eqref{eq: regularization problem RKBS B measure M(X)} with the goal of establishing representer theorems for its solutions. To ensure the existence of a solution, we assume that the loss function $\mathcal{Q}$ is lower
semi-continuous on $\mathbb{R}^{t\times m}$. 
Before  deriving the representer theorems, we highlight the relationship between the solutions to this problem and the MNI problem \eqref{MNI in RKBS B measure M(X)}. According to Proposition 41 of \cite{wang2021representer}, if $\hat{f}_{\mu}\in\mathcal{B}_{\mathcal{N}}$ is a solution to the regularization problem \eqref{eq: regularization problem RKBS B measure M(X)}, then it is also a
solution to the MNI problem \eqref{MNI in RKBS B measure M(X)} with $\mathbf{Y}:=\mathbf{I}_{\mathcal{X}}(\hat{f}_{\mu})$. Moreover,  any solution $\hat{f}_{\nu}\in\mathcal{B}_{\mathcal{N}}$ to the MNI problem \eqref{MNI in RKBS B measure M(X)} with $\mathbf{Y}:=\mathbf{I}_{\mathcal{X}}(\hat{f}_{\mu})$ is also  a solution to the regularization
problem \eqref{eq: regularization problem RKBS B measure M(X)}.
We denote by $\mathbb{R}_{\mathcal{X},\mathbf{Y}}$ the solution set of problem \eqref{eq: regularization problem RKBS B measure M(X)}. By Proposition \ref{Existence-of-Solution}, $\mathbb{R}_{\mathcal{X},\mathbf{Y}}$ is nonempty. We then introduce a subset $\mathcal{D}_{\mathcal{X},\mathbf{Y}}$ of $\mathbb{R}^{t\times m}$ by  
\begin{equation}\label{general Dy}
\mathcal{D}_{\mathcal{X},\mathbf{Y}}:=\mathbf{I}_{\mathcal{X}}(\mathbb{R}_{\mathcal{X},\mathbf{Y}}). 
\end{equation}
Now, recalling that $\mathbb{S}_{\mathcal{X},\mathbf{Y}}$ denotes the solution set of the MNI problem \eqref{MNI in RKBS B measure M(X)}, 
the relation between the solutions of these two problems can be represented as 
\begin{equation}\label{Proposition41-in- WangXu2}
\bigcup_{{\mathbf{Z}}\in\mathcal{D}_{\mathcal{X},\mathbf{Y}}}\mathbb{S}_{\mathcal{X},\mathbf{Z}}=\mathbb{R}_{\mathcal{X},\mathbf{Y}}. 
\end{equation}
Moreover, by Lemma 11 of \cite{wang2023sparse},  if the loss function $\mathcal{Q}$ is convex, then  
    \begin{equation}\label{relation_extreme_sets}
        \mathrm{ext}\left(\mathbb{R}_{\mathcal{X},\mathbf{Y}}\right)\subset\bigcup_{\mathbf{Z}\in\mathcal{D}_{\mathcal{X},\mathbf{Y}}}\mathrm{ext}\left(\mathbb{S}_{\mathcal{X},\mathbf{Z}}\right).
    \end{equation}

Below, we convert the representer theorem for a solution to problem \eqref{MNI in RKBS B measure M(X)} stated in Theorem \ref{theorem: direct representer theorem for MNI in BN} to that for the regularization problem \eqref{eq: regularization problem RKBS B measure M(X)} by making use of the relation between the solutions of these two problems.

\begin{theorem}\label{theorem: representer for regularization}
    Suppose that $m$ distinct points $x_j\in\mathbb{R}^s$, $j\in\mathbb{N}_m$, and $\mathbf{Y}\in\mathbb{R}^{t\times m}$ are given, $\lambda>0$. Let $\mathcal{V}_{\mathcal{N}}$ be defined by \eqref{V_span_kernel} and $\mathcal{D}_{\mathcal{X},\mathbf{Y}}$ be defined by \eqref{general Dy}.  
    \begin{enumerate}
    \item If $\mathcal{D}_{\mathcal{X},\mathbf{Y}}\neq\{\mathbf{0}\}$, then there exists a solution $\hat f$ of problem \eqref{eq: regularization problem RKBS B measure M(X)} such that 
    \begin{equation}\label{eq: regularization problem increasing case in BN}
        \hat f(x)=\sum\limits_{\ell\in\mathbb{N}_{tm}} \gamma_\ell h_\ell(x),\ x\in\mathbb{R}^s,
    \end{equation}
   for some $\hat g\in\mathcal{V}_{\mathcal{N}}$, $\gamma_\ell\in\mathbb{R}$, $\ell\in\mathbb{N}_{tm}$, with $\sum_{\ell\in\mathbb{N}_{tm}}\gamma_\ell=\|\hat g\|_{\infty}$ and $h_\ell\in\mathrm{ext}(\partial\|\cdot\|_\infty(\hat g))$, $\ell\in\mathbb{N}_{tm}$.
    \item If the loss function $\mathcal{Q}$ is convex, then every nonzero extreme point $\hat f$ of the solution set $\mathbb{R}_{\mathcal{X},\mathbf{Y}}$ of problem \eqref{eq: regularization problem RKBS B measure M(X)} has the form of
    \eqref{eq: regularization problem increasing case in BN} for some $\hat g\in\mathcal{V}_{\mathcal{N}}$, $\gamma_\ell\in\mathbb{R}$, $\ell\in\mathbb{N}_{tm}$, with  $\sum_{\ell\in\mathbb{N}_{tm}}\gamma_\ell=\|\hat g\|_{\infty}$ and $h_\ell\in\mathrm{ext}(\partial\|\cdot\|_\infty(\hat g))$, $\ell\in\mathbb{N}_{tm}$.
    \end{enumerate}
\end{theorem}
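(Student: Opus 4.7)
The plan is to reduce both parts of the theorem to Theorem \ref{theorem: direct representer theorem for MNI in BN} via the bridge provided by equations \eqref{Proposition41-in- WangXu2} and \eqref{relation_extreme_sets}, which link the solution sets of the regularization and MNI problems. In both parts the decisive point is to locate an extreme point of some MNI solution set $\mathbb{S}_{\mathcal{X},\mathbf{Z}}$ with $\mathbf{Z}\neq\mathbf{0}$; once that is achieved, the representation \eqref{eq: regularization problem increasing case in BN} is immediate from Theorem \ref{theorem: direct representer theorem for MNI in BN}, with the auxiliary function $\hat g\in\mathcal{V}_{\mathcal{N}}$ obtained from the dual problem \eqref{dual problem} associated with $\mathbf{Z}$.

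For Part (1), since $\mathcal{D}_{\mathcal{X},\mathbf{Y}}\neq\{\mathbf{0}\}$, I would first select some $\mathbf{Z}_0\in\mathcal{D}_{\mathcal{X},\mathbf{Y}}$ with $\mathbf{Z}_0\neq\mathbf{0}$. By the identity \eqref{Proposition41-in- WangXu2}, every element of the MNI solution set $\mathbb{S}_{\mathcal{X},\mathbf{Z}_0}$ lies in $\mathbb{R}_{\mathcal{X},\mathbf{Y}}$. Next I would argue that $\mathbb{S}_{\mathcal{X},\mathbf{Z}_0}$ admits an extreme point: since each $\mathcal{K}_k(x_j,\cdot)$ belongs to the pre-dual $\mathcal{S}$ of $\mathcal{B}_{\mathcal{N}}$ furnished by Proposition \ref{prop: BN is isometic isomorphic to quotient space}, the interpolation constraints $\mathbf{I}_{\mathcal{X}}(f)=\mathbf{Z}_0$ cut out a weak${}^*$-closed affine subspace, whose intersection with the weak${}^*$-compact ball of radius equal to the common MNI optimal value is weak${}^*$-compact and convex; the Krein--Milman theorem then produces an extreme point $\hat f$. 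Because $\mathbf{Z}_0\neq\mathbf{0}$, Theorem \ref{theorem: direct representer theorem for MNI in BN} applies to $\hat f$ and yields the representation \eqref{eq: regularization problem increasing case in BN}.

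For Part (2), let $\hat f$ be any nonzero extreme point of $\mathbb{R}_{\mathcal{X},\mathbf{Y}}$. The convexity of $\mathcal{Q}$ together with the inclusion \eqref{relation_extreme_sets} places $\hat f\in\mathrm{ext}(\mathbb{S}_{\mathcal{X},\mathbf{Z}})$ for some $\mathbf{Z}\in\mathcal{D}_{\mathcal{X},\mathbf{Y}}$, with necessarily $\mathbf{Z}=\mathbf{I}_{\mathcal{X}}(\hat f)$. Were $\mathbf{Z}=\mathbf{0}$, the zero function would already achieve zero norm in $\mathcal{M}_{\mathcal{X},\mathbf{0}}$, forcing $\mathbb{S}_{\mathcal{X},\mathbf{0}}=\{0\}$ and hence $\hat f=0$, contradicting nonzeroness. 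Thus $\mathbf{Z}\neq\mathbf{0}$, and Theorem \ref{theorem: direct representer theorem for MNI in BN} applied with this $\mathbf{Z}$ directly supplies the required form of $\hat f$.

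The principal obstacle is the extreme-point existence argument in Part (1); it hinges on the dual-space structure $\mathcal{B}_{\mathcal{N}}=\mathcal{S}^*$ and on the weak${}^*$-continuity of each component point-evaluation functional, both of which are in hand from the preceding sections. The remaining work is essentially bookkeeping: translating between the regularization and MNI formulations, ensuring $\mathbf{Z}\neq\mathbf{0}$ so that Theorem \ref{theorem: direct representer theorem for MNI in BN} is available, and invoking that theorem to produce \eqref{eq: regularization problem increasing case in BN}.
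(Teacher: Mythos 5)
Your proposal is correct and follows essentially the same route as the paper's proof: in both parts you pass to an MNI solution set $\mathbb{S}_{\mathcal{X},\mathbf{Z}}$ with $\mathbf{Z}\neq\mathbf{0}$ via \eqref{Proposition41-in- WangXu2} (resp.\ \eqref{relation_extreme_sets}), rule out $\mathbf{Z}=\mathbf{0}$ exactly as the paper does, and invoke Theorem \ref{theorem: direct representer theorem for MNI in BN} with $\hat g$ coming from the dual problem \eqref{dual problem} for that $\mathbf{Z}$. The only difference is that you spell out the Krein--Milman argument for the nonemptiness of $\mathrm{ext}(\mathbb{S}_{\mathcal{X},\mathbf{Z}_0})$, which the paper leaves implicit by appealing to the general remarks accompanying Lemma \ref{lemma: representer for MNI} in Appendix B.
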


\begin{proof}
We first prove Item 1. Note that $\mathbb{R}_{\mathcal{X},\mathbf{Y}}$ is an nonempty set. It follows from the hypothesis $\mathcal{D}_{\mathcal{X},\mathbf{Y}}\neq\{\mathbf{0}\}$ that there exists $\hat{h}\in\mathbb{R}_{\mathcal{X},\mathbf{Y}}$ such that $\hat{\mathbf{Z}}:=\mathbf{I}_{\mathcal{X}}(\hat{h})\neq 0$. According to equation \eqref{Proposition41-in- WangXu2} with noting that $\hat{\mathbf{Z}}\in\mathcal{D}_{\mathcal{X},\mathbf{Y}}$, we have that $\mathbb{S}_{\mathcal{X},\hat{\mathbf{Z}}}\subset\mathbb{R}_{\mathcal{X},\mathbf{Y}}$ and thus, $\mathrm{ext}\left(\mathbb{S}_{\mathcal{X},\hat{\mathbf{Z}}}\right)\subset\mathbb{R}_{\mathcal{X},\mathbf{Y}}$. We choose $\hat f\in\mathrm{ext}\left(\mathbb{S}_{\mathcal{X},\hat{\mathbf{Z}}}\right)$ and verify that $\hat f$ can be represented as in  \eqref{eq: regularization problem increasing case in BN}. To this end, we choose $\hat{\mathbf{c}}:=[\hat{c}_{kj}:k\in\mathbb{N}_t,j\in\mathbb{N}_m]\in\mathbb{R}^{t\times m}$ to be a solution to the dual problem \eqref{dual problem} with $\mathbf{Y}$ replaced by $\hat{\mathbf{Z}}$. Let $\hat g$ be the function defined by \eqref{stage 1 hat g} with $\hat{\mathbf{c}}$. Theorem \ref{theorem: direct representer theorem for MNI in BN} ensures that $\hat f$, as an extreme point of the solution set $\mathbb{S}_{\mathcal{X},\hat{\mathbf{Z}}}$, can be represented as in  \eqref{eq: regularization problem increasing case in BN} for some $\gamma_\ell\in\mathbb{R}$, $\ell\in\mathbb{N}_{tm}$, with  $\sum_{\ell\in\mathbb{N}_{tm}}\gamma_\ell=\|\hat g\|_{\infty}$ and $h_\ell\in\mathrm{ext}(\partial\|\cdot\|_\infty(\hat g))$, $\ell\in\mathbb{N}_{tm}$.

We next show Item 2. Assume that $\hat f$ is an arbitrary nonzero extreme point of $\mathbb{R}_{\mathcal{X},\mathbf{Y}}$, that is $\hat f\in\mathrm{ext}(\mathbb{R}_{\mathcal{X},\mathbf{Y}})\backslash\{0\}$. Because the loss function $\mathcal{Q}$ is convex,  the inclusion relation  \eqref{relation_extreme_sets} is satisfied. By \eqref{relation_extreme_sets}, there exists $\hat{\mathbf{Z}}\in\mathcal{D}_{\mathcal{X},\mathbf{Y}}$ such that $\hat f\in\mathrm{ext}\left(\mathbb{S}_{\mathcal{X},\hat{\mathbf{Z}}}\right)$. Clearly, $\hat{\mathbf{Z}}\neq 0.$ Assume to the contrary that $\hat{\mathbf{Z}}= 0.$ We then must have that $\mathbb{S}_{\mathcal{X},\hat{\mathbf{Z}}}=\{0\}.$ As a result, $\hat f=0,$ which is a contradiction.
Again, let $\hat g$ be defined by \eqref{stage 1 hat g} with $\hat{\mathbf{c}}$ being a solution to problem \eqref{dual problem} with $\mathbf{Y}$ being replaced by $\hat{\mathbf{Z}}$. By Theorem \ref{theorem: direct representer theorem for MNI in BN}, we can represent $\hat f$ as in  \eqref{eq: regularization problem increasing case in BN} for some $\gamma_\ell\in\mathbb{R}$, $\ell\in\mathbb{N}_{tm}$, with  $\sum_{\ell\in\mathbb{N}_{tm}}\gamma_\ell=\|\hat g\|_{\infty}$ and $h_\ell\in\mathrm{ext}(\partial\|\cdot\|_\infty(\hat g))$, $\ell\in\mathbb{N}_{tm}$.
\end{proof}

Similarly to Theorem \ref{theorem: direct representer theorem for MNI in BN} for a solution to the MNI problem,
Theorem \ref{theorem: representer for regularization} ensures that 
if the loss function $\mathcal{Q}$ is convex, each extreme point of the solution set of the regularization problem \eqref{eq: regularization problem RKBS B measure M(X)} lays in a {\it finite} dimensional manifold spanned by $tm$ elements  $h_\ell\in\mathrm{ext}(\partial\|\cdot\|_\infty(\hat g))$ and it has an explicit, data-dependent representation.

We further show that there exists a solution to the regularization problem \eqref{eq: regularization problem RKBS B measure M(X)} that can be represented as kernel expansions in terms of the network parameter features determined by the
training data.

\begin{theorem}\label{theorem: kernel representation for regularization problems}
Suppose that $m$ distinct points $x_j\in\mathbb{R}^s$, $j\in\mathbb{N}_m$, and $\mathbf{Y}\in\mathbb{R}^{t\times m}$ are given, $\lambda>0$. Let $\mathcal{V}_{\mathcal{N}}$ be defined by \eqref{V_span_kernel}, $\mathcal{D}_{\mathcal{X},\mathbf{Y}}$ be defined by \eqref{general Dy} and $\mathcal{D}_{\mathcal{X},\mathbf{Y}}\neq\{\mathbf{0}\}$.  Then there exists a solution $\hat f$ of problem \eqref{eq: regularization problem RKBS B measure M(X)} such that 
\begin{equation}\label{eq: kernel representation regularization problem increasing case in BN}
\hat f(x)=\sum\limits_{\ell\in\mathbb{N}_{tm}} \gamma_\ell\mathrm{sign}(\hat g(\theta_\ell)) \mathcal{K}(x,\theta_\ell),\ x\in\mathbb{R}^s,
\end{equation}
for some $\hat g\in\mathcal{V}_{\mathcal{N}}$,  $\gamma_\ell\in\mathbb{R}$, $\ell\in\mathbb{N}_{tm}$, with  $\sum_{\ell\in\mathbb{N}_{tm}}\gamma_\ell=\|\hat g\|_{\infty}$ and $\theta_\ell\in\Theta(\hat g)$, $\ell\in\mathbb{N}_{tm}$.
\end{theorem}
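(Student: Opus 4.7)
The plan is to reduce this to Theorem \ref{theorem: kernel representer theorem for MNI in BN} by exploiting the relationship \eqref{Proposition41-in- WangXu2} between the regularization problem \eqref{eq: regularization problem RKBS B measure M(X)} and the MNI problem \eqref{MNI in RKBS B measure M(X)}. The overall strategy mirrors the argument of Item 1 of Theorem \ref{theorem: representer for regularization}, except that instead of invoking Theorem \ref{theorem: direct representer theorem for MNI in BN} (which produces a representation in terms of extreme points of a subdifferential) we invoke Theorem \ref{theorem: kernel representer theorem for MNI in BN} (which produces an explicit kernel expansion in the form we want).

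First, since $\mathcal{D}_{\mathcal{X},\mathbf{Y}}\neq\{\mathbf{0}\}$, there exists some element $\hat h\in\mathbb{R}_{\mathcal{X},\mathbf{Y}}$ such that $\hat{\mathbf{Z}}:=\mathbf{I}_{\mathcal{X}}(\hat h)\neq\mathbf{0}$. By the relation \eqref{Proposition41-in- WangXu2}, we have $\mathbb{S}_{\mathcal{X},\hat{\mathbf{Z}}}\subset\mathbb{R}_{\mathcal{X},\mathbf{Y}}$; in words, every solution of the MNI problem with target data $\hat{\mathbf{Z}}$ is automatically a solution of the original regularization problem. It therefore suffices to produce a solution of the MNI problem with data $\hat{\mathbf{Z}}$ that has the desired kernel form.

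Next, because $\hat{\mathbf{Z}}\in\mathbb{R}^{t\times m}\setminus\{\mathbf{0}\}$ and the functionals in $\mathbb{K}_{\mathcal{X}}$ are linearly independent (our standing assumption), the hypothesis of Theorem \ref{theorem: kernel representer theorem for MNI in BN} is met with $\mathbf{Y}$ replaced by $\hat{\mathbf{Z}}$. That theorem then furnishes a solution $\hat f\in\mathbb{S}_{\mathcal{X},\hat{\mathbf{Z}}}$ of the form
\begin{equation*}
    \hat f(x)=\sum_{\ell\in\mathbb{N}_{tm}}\gamma_\ell\,\mathrm{sign}(\hat g(\theta_\ell))\,\mathcal{K}(x,\theta_\ell),\qquad x\in\mathbb{R}^s,
\end{equation*}
for some $\hat g\in\mathcal{V}_{\mathcal{N}}$ (built from the dual problem \eqref{dual problem} associated with $\hat{\mathbf{Z}}$), $\gamma_\ell\in\mathbb{R}$ with $\sum_{\ell}\gamma_\ell=\|\hat g\|_\infty$, and $\theta_\ell\in\Theta(\hat g)$. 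Combined with the inclusion $\mathbb{S}_{\mathcal{X},\hat{\mathbf{Z}}}\subset\mathbb{R}_{\mathcal{X},\mathbf{Y}}$ from the previous step, this exhibits $\hat f$ as a solution of \eqref{eq: regularization problem RKBS B measure M(X)} in exactly the form \eqref{eq: kernel representation regularization problem increasing case in BN}, completing the proof.

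The argument is essentially a two-line application of previously established results, and the only point that requires any care is verifying the hypothesis $\hat{\mathbf{Z}}\neq\mathbf{0}$ needed to invoke Theorem \ref{theorem: kernel representer theorem for MNI in BN}; this is precisely what the assumption $\mathcal{D}_{\mathcal{X},\mathbf{Y}}\neq\{\mathbf{0}\}$ is there to guarantee. No convexity of the loss $\mathcal{Q}$ is needed here, in contrast to Item 2 of Theorem \ref{theorem: representer for regularization}, because we are asserting the existence of one such solution rather than characterizing every extreme point.
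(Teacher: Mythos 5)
Your proposal is correct and follows essentially the same route as the paper's own proof: pick $\hat h\in\mathbb{R}_{\mathcal{X},\mathbf{Y}}$ with $\hat{\mathbf{Z}}:=\mathbf{I}_{\mathcal{X}}(\hat h)\neq\mathbf{0}$, apply Theorem \ref{theorem: kernel representer theorem for MNI in BN} to the MNI problem with data $\hat{\mathbf{Z}}$, and use the inclusion $\mathbb{S}_{\mathcal{X},\hat{\mathbf{Z}}}\subset\mathbb{R}_{\mathcal{X},\mathbf{Y}}$ from \eqref{Proposition41-in- WangXu2}. Your closing remarks on why $\hat{\mathbf{Z}}\neq\mathbf{0}$ is needed and why convexity of $\mathcal{Q}$ is not are accurate.
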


\begin{proof}
Since $\mathbb{R}_{\mathcal{X},\mathbf{Y}}$ is nonempty and $\mathcal{D}_{\mathcal{X},\mathbf{Y}}\neq\{\mathbf{0}\}$, there exists $\hat h \in \mathbb{R}_{\mathcal{X},\mathbf{Y}}$ such that $\hat{\mathbf{Z}}:=\mathbf{I}_{\mathcal{X}}(\hat h)\neq \mathbf{0}$. We choose $\hat g$ in the form of \eqref{stage 1 hat g}, where $\hat{\mathbf{c}}$ is a solution to problem \eqref{dual problem} with $\mathbf{Y}$ being replaced by $\hat{\mathbf{Z}}$. According to Theorem \ref{theorem: kernel representer theorem for MNI in BN}, the MNI problem \eqref{MNI in RKBS B measure M(X)} with $\mathbf{Y}:=\hat{\mathbf{Z}}$ has a solution  $\hat f$ in the form of \eqref{eq: kernel representation regularization problem increasing case in BN}, for some $\gamma_\ell\in\mathbb{R}$, $\ell\in\mathbb{N}_{tm}$, with  $\sum_{\ell\in\mathbb{N}_{tm}}\gamma_\ell=\|\hat g\|_{\infty}$ and $\theta_\ell\in\Theta(\hat g)$, $\ell\in\mathbb{N}_{tm}$. In other words, $\hat f\in \mathbb{S}_{\mathcal{X},\hat{\mathbf{Z}}}$ and it has the form of \eqref{eq: kernel representation regularization problem increasing case in BN}. It follows from relation \eqref{Proposition41-in- WangXu2} that $\mathbb{S}_{\mathcal{X},\hat{\mathbf{Z}}}\subset\mathbb{R}_{\mathcal{X},\mathbf{Y}}$, which implies that $\hat f\in\mathbb{R}_{\mathcal{X},\mathbf{Y}}$.
\end{proof}

The representer theorems—Theorem \ref{theorem: kernel representer theorem for MNI in BN} for the MNI problem \eqref{MNI in RKBS B measure M(X)} and Theorem \ref{theorem: kernel representation for regularization problems} for the regularization problem \eqref{eq: regularization problem RKBS B measure M(X)}—establish that solutions to these  problems can be expressed as a finite sum of kernel expansions in terms of the network parameter features determined by the  training data. In other words, these representer theorems are explicit and data-dependent. 
A framework for constructing integral scalar-valued RKBS and proving a representer theorem (Theorem 3.9) for the regularization problem in the RKBS was proposed in \cite{bartolucci2023understanding}. This result was derived using a data-independent representer theorem for a variational problem in a Banach space established in \cite{boyer2019representer,bredies2020sparsity}, making Theorem 3.9 of \cite{bartolucci2023understanding} inherently data-independent. 
While one could apply a generalized version of this framework to the regularization problem \eqref{eq: regularization problem RKBS B measure M(X)} to obtain a data-independent representer theorem, data-dependent representer theorems are highly desirable. Theorems \ref{theorem: kernel representer theorem for MNI in BN} and  \ref{theorem: kernel representation for regularization problems} established using the framework proposed in \cite{wang2021representer, wang2023sparse}, achieve this by explicitly incorporating data features.
More precisely,  
the representer theorems presented here strengthen the results that could be obtained from \cite{bartolucci2023understanding} by specifying that the network parameters $\theta_k$ belong to the data-dependent set $\Theta(\hat{g})$, where $\hat{g}$ is a kernel expansion in terms of given data. 
Beyond offering insight into the structure of learning solutions, these results pave the way for new approaches to constructing such solutions, which will be discussed next.

To conclude this section, we elaborate on ideas of leveraging the representer theorems for deep learning. Recall that in a reproducing kernel Hilbert space, representer theorems \cite{scholkopf2001generalized} significantly simplify learning problems by reducing an infinite-dimensional optimization problem to a finite-dimensional one. More importantly, they reveal that the solutions to these problems lie within the space spanned by the sampled features--specifically, the reproducing
kernel evaluated at the given data points-- thereby enabling the development of practical and implementable learning algorithms. 

This fundamental result has been extended to learning problems in reproducing kernel Banach spaces \cite{bartolucci2023understanding,parhi2021banach,xu2019generalized,zhang2009reproducing,wang2021representer,wang2023sparse}. While the representer theorem in Banach spaces also reduces an infinite-dimensional optimization problem to a finite-dimensional one, it differs from its counterpart in Hilbert spaces. Specifically, by the duality mapping \cite{Cioranescu1990geometry}, every element $f$ in a Banach space corresponds to a subset of its dual space, whose elements can be referred to as {\it dual functionals} associated with $f$. The essence of the representer theorem in Banach spaces is that it represents a dual functional of the learning solution as a finite linear combination of the sampled features. In the special case of Hilbert spaces, where the dual space coincides with the space itself, the dual functionals of the learning solution are identical to the solution itself, naturally leading to a direct representation of the solution as a finite linear combination of the sampled features. However, in a general Banach space, the learning solution and its dual functionals are not identical, in fact, they belong to different spaces, making it impossible to express the solution directly in this form. Several research efforts have demonstrated how to design implementable algorithms for solving specific learning problems in particular Banach spaces using the representer theorem. 
For example, the MNI problem in $\ell_1(\mathbb{N})$ was solved in \cite{cheng2021minimum} by first determining  a dual functional in the pre-dual space via linear programming,  then recovering the MNI solution from the dual functional by solving a related linear system. This duality-based approach was extended in \cite{ChengWangXu2023} to a broader class of regularization problems by reformulating them as MNI problems on related quotient spaces and solving them using the methodology developed in  \cite{cheng2021minimum}. An alternative approach was introduced in \cite{wang2021representer}, where the solutions to these problems were characterized as fixed-points of finite-dimensional nonlinear mappings, known as representer theorems in fixed-point equation form, enabling the development of iterative schemes to solve them.


The hypothesis spaces introduced with RKBSs in the previous section, along with the kernel-based, data-dependent representer theorems established earlier in this section for deep learning, enable the development of practical algorithms for deep learning problems by leveraging the methodology for learning in RKBSs. We illustrate this approach using the MNI problem as an example.

First, we determine the dual functional $\hat g$, defined as in \eqref{stage 1 hat g}, by solving the dual problem \eqref{dual problem} associated with the MNI problem. Notably, $\hat g$ takes the form of a finite linear combination of the sampled features. Second, we obtain the solution $\hat f$ in the form \eqref{representer solution of finite linear combination of MNI} by determining the network parameter features $\theta_l$, $\ell\in\mathbb{N}_{tm}$, and the coefficients $\beta_\ell:=\gamma_\ell{\rm sign}(\hat{g}(\theta_\ell))$, $\ell\in\mathbb{N}_{tm}$. According to representer theorem \ref{theorem: kernel representer theorem for MNI in BN}, the network parameter features $\theta_l$, $\ell\in\mathbb{N}_{tm}$, can be selected in the set  $\Theta(\hat g)$. Since the solution satisfies the the interpolation condition $f_{\mu}(x_j)=y_j$, $j\in\mathbb{N}_m$, and the sequence $\gamma_\ell\in\mathbb{R}$, $\ell\in\mathbb{N}_{tm}$ satisfies $\sum_{\ell\in\mathbb{N}_{tm}}\gamma_\ell=\|\hat g\|_{\infty}$, these quantities can be determined by solving a linear programming problem. Developing implementable algorithms for solving deep learning problems in the  hypothesis space $\mathcal{B}_{\mathcal{N}}$ based on the representer theorems remains an open area of research, which we leave for future work.

Furthermore, for both MNI and regularization problems, our data-dependent representer theorem ensures the existence of a solution in the form of kernel expansions with data-dependent coefficients and parameters. As an alternative approach, one can directly treat these coefficients and parameters as trainable variables and numerically solve the following optimization problem
\begin{equation}\label{multi linear DNN model}
    \min\left\{\mathcal{L}\left(\sum_{\ell\in\mathbb{N}_{tm}}\beta_\ell\mathcal{N}(\cdot,\theta_\ell),\mathbb{D}_m\right):\beta_\ell\in\mathbb{R},\theta_\ell\in\Theta_{\mathbb{W}},\ell\in\mathbb{N}_{tm}\right\}. 
\end{equation}
Standard deep learning training techniques, such as stochastic gradient descent, are well-suited for numerically solving \eqref{multi linear DNN model}. The development of feasible algorithms for implementing these ideas is left for future research.



\section{Concluding Remarks}\label{section: concluding remarks}
In this paper, we introduced the hypothesis
space $\mathcal{B}_{\mathcal{N}}$ for deep learning. This space is a vector-valued RKBS with a unique reproducing kernel $\mathcal{K}$, formed as the weak* completion of the vector space $\mathcal{B}_{\mathbb{W}}$, which is the linear span of the original set $\mathcal{A}_{\mathbb{W}}$ in deep learning. The introduction of this hypothesis space provides a mathematical framework that offers deeper insights into deep learning. Specifically, by leveraging $\mathcal{B}_{\mathcal{N}}$, we have developed representer theorems for solutions to two deep learning models: the MNI and regularization problems involving deep neural networks.

In the remainder of this section, we discuss the relations among different learning models studied in this paper. 
%
Suppose that the loss function $\mathcal{L}(f_\mu,\mathbb{D}_m)$ takes the form of \eqref{loss:An example}, where  $\mathcal{Q}:\mathbb{R}^{t\times m}\to \mathbb{R}_+$ satisfies $\mathcal{Q}(0)=0$. If $\hat{f}_\mu\in \mathcal{B}_\mathcal{N}$ is a solution to the MNI problem \eqref{MNI-original}, then $\mathcal{L}(\hat{f}_\mu, \mathbb{D}_m)=0$, implying that $\hat{f}_\mu$ also solves the learning model \eqref{LearningMethodinRKBS}. Furthermore, let $f_{\mu,\lambda}$ be a solution to the regularized learning problem 
\eqref{eq: regularization problem RKBS B measure M(X)}, and  define $f_{\mu,0}$ as the limit of $f_{\mu,\lambda}$ as $\lambda\to 0$ provided the limit exists. It follows that $f_{\mu,0}$ is a solution to the learning model \eqref{LearningMethodinRKBS}. Thus, the models \eqref{eq: regularization problem RKBS B measure M(X)} and 
\eqref{MNI-original} serve as stable alternatives to \eqref{LearningMethodinRKBS}, which may suffer from instability.

Learning within the RKBS space $\mathcal{B}_{\mathcal{N}}$ offers several advantages.
\begin{description}
\item[1. Well-Defined Learning Framework:] 
The standard learning model \eqref{BasicLearningMethod-equivalent}, widely used in machine learning, may lack solutions due to the absence of algebraic or topological structure in the original set $\mathcal{A}_\mathbb{W}$. However,   $\mathcal{B}_\mathcal{N}$ is the weak* completion of the linear span $\mathcal{B}_\mathbb{W}$ and possesses well-defined algebraic and topological properties, making it a natural hypothesis space for deep learning problems.
\item[2. Guaranteed Existence of Solutions:] Unlike the original learning model \eqref{BasicLearningMethod-equivalent}, the regularized learning model \eqref{eq: regularization problem RKBS B measure M(X)} is guaranteed to have a solution under mild conditions, thanks to the completeness and structure of $\mathcal{B}_\mathcal{N}$.
\item[3. Kernel-Based and Data-Dependent Representer Theorems:] 
The reproducing kernel of the RKBS $\mathcal{B}_{\mathcal{N}}$ enables solutions in this RKBS to be expressed in terms of kernel expansions, leading to representer theorems for deep learning. These theorems reveal that while the learning models in $\mathcal{B}_{\mathcal{N}}$ are formulated in an infinite-dimensional space, their solutions reside in finite-dimensional manifolds and can be expressed as finite sums of kernel expansions based on training data. 
\end{description}
In conclusion, introducing the hypothesis space $\mathcal{B}_\mathcal{N}$ to deep learning not only enhances our understanding of its mathematical foundations but also provides a robust framework for analyzing and solving deep learning problems.

To conclude this section, we summarize the key contribution of this paper as follows:
\begin{description}
   \item[1. Introduction of a Hypothesis Space for Deep Learning.] We introduce a novel hypothesis space, denoted as $\mathcal{B}_{\mathcal{N}}$, for deep learning, with a fixed depth and layer widths. This space is constructed by taking the linear span of the set of the corresponding neural networks and completing it in a weak* topology. The introduction of the hypothesis space enables a more rigorous mathematical analysis of deep learning and facilitates the development of more interpretable training algorithms.
   Our construction extends existing results, such as the linear space for shallow learning introducing in \cite{bartolucci2023understanding}, while differing from the {\it nonlinear spaces} proposed in \cite{shenouda2024variation,bartolucci2024neural} for deep learning, which are formed through compositions of linear spaces for shallow networks. 
\item[2. Identification of the Hypothesis Space as an  RKBS.] We identify  $\mathcal{B}_{\mathcal{N}}$ as a vector-valued RKBS with an asymmetric kernel $\mathcal{K}(x, \theta)$, which is given by the product of the network $\mathcal{N}(x,\theta)$ with an appropriate weight function $\rho(\theta)$. The identification enables the representation of learning solutions in terms of the kernel, providing a functional-analytic framework for understanding deep learning.
\item[3. Representer Theorems of Learning Solutions.]  We establish  data-dependent, kernel-based representer theorems  for two types of learning models:  minimum norm interpolation and regularized learning within the introduced  hypothesis space $\mathcal{B}_{\mathcal{N}}$. These theorems demonstrate that the solutions to these models can be expressed as a finite sum of the kernel expansions based on training data. This result contrasts with existing work \cite{bartolucci2023understanding, parhi2021banach, shenouda2024variation,bartolucci2024neural,parhi2022what,unser2019representer}, which is either restricted to shallow learning or data-independent. Our theorems yield more structured and interpretable learning solutions, offering deeper insights into how data influences the structure of learning solutions within the hypothesis space.

\end{description}

\appendix
\section{Necessary Notions of Convex Analysis}
In this appendix, we review key concepts from convex analysis that are essential for establishing the representer theorems in Section \ref{section: representer theorems}.

\noindent{\bf Convex Sets and Extreme Points.} Let $\mathbb{X}$ be a Hausdorff locally convex topological vector space. A subset $A$ of $\mathbb{X}$ is called convex if for any $x,y\in A$ and any $t\in[0,1]$, the point $t x+(1-t)y$ also belongs to $A$. The convex hull of $A$ of  $\mathbb{X}$, denoted by $\mathrm{co}(A)$, is the smallest convex set containing $A$. The closed convex hull of $A$, denoted by $\overline{\mathrm{co}}(A)$, is the smallest closed convex set containing $A$, where the closure is taken with respect to the topology of $\mathbb{X}$.

Now, suppose that $A$ is a nonempty closed convex subset of $\mathbb{X}$. An element $z\in A$ is called an extreme point of $A$ if whenever $x,y\in A$ satisfy $tx+(1-t)y=z$ for some $t\in(0,1)$, it follows that $x=y=z$. We denote the set of extreme points of $A$ by $\mathrm{ext}(A)$. A fundamental result in convex analysis, the Krein-Milman theorem \cite{megginson2012introduction}, states that if 
$A$ is a nonempty compact convex subset of $\mathbb{X}$, then $A$ is the closed convex hull of its set of extreme points, that is, 
$$
A=\overline{\mathrm{co}}\left(\mathrm{ext}(A)\right).
$$


\noindent{\bf Subdifferentials.}
Let $B$ be a Banach space with norm $\|\cdot\|_B$. The norm function $\|\cdot\|_B$ is a convex function on $B$. The subdifferential of $\|\cdot\|_B$ at any nonzero element $f\in B\backslash\{0\}$ is defined as  \begin{equation*}\label{subdifferential = norming functional}
  \partial \|\cdot\|_B(f):=\left\{\nu\in B^*:\|\nu\|_{B^*}=1,\langle \nu,f\rangle_{B}=\|f\|_{B}\right\}.
\end{equation*}
Here, $B^*$ denotes the dual space  of $B$, which when equipped with the weak${}^*$ topology, forms a Hausdorff locally convex topological vector space. For any $f\in B\backslash\{0\}$, the subdifferential set ${\partial\|\cdot\|_B(f)}$ is a convex and weakly${}^*$ compact in $B^*$. By the Krein-Milman theorem, it follows that 
\begin{equation*}\label{representation subdifferential set}
    \partial \|\cdot\|_B(f)=\overline{\mathrm{co}}(\mathrm{ext}(\partial \|\cdot\|_{B}(f))),
\end{equation*}
where the closure is taken with respect to the weak$^*$ topology of $B^*$.

\section{A Representer Theorem for the MNI Problem in a Banach Space}
In this appendix, we recall the explicit, data-dependent representer theorem for the MNI problem in a general Banach space setting, as established in \cite{wang2023sparse}. This result provides a foundational tool for deriving the representer theorems presented in Section \ref{section: representer theorems}.

\noindent{\bf Formulation of the MNI Problem.}
We consider the MNI problem in a general Banach space that has a pre-dual space. Let $B$ be a Banach space with a pre-dual space $B_*$. Suppose that $\nu_j$, $j\in\mathbb{N}_n$, are linearly independent elements in $B_*$, and let $\mathbf{z}:=[z_j:j\in\mathbb{N}_n]\in\mathbb{R}^n$ be a given vector. Define 
$$
\mathcal{V}:=\mathrm{span}\{\nu_j:j\in\mathbb{N}_n\}.
$$
Next, we introduce the linear operator $\mathcal{L}: B \rightarrow \mathbb{R}^{n}$ given by  
\begin{equation*}\label{general operator L in lemma}
    \mathcal{L}(f):=\left[\left\langle\nu_{j}, f\right\rangle_{B}: j \in \mathbb{N}_{n}\right], \text { for all } f \in B.
\end{equation*}
The feasible set of the MNI problem is
$$
{M_{\mathbf{z}}}:=\{f \in B: \mathcal{L}(f)=\mathbf{z}\}.
$$
The MNI problem with the given data $\{(\nu_j,y_j):j\in\mathbb{N}_n\}$, as studied in \cite{wang2023sparse}, is formulated as   
\begin{equation}\label{general MNI in lemma}
    \inf \left\{\|f\|_{B}: f \in {M_{\mathbf{z}}}\right\}.
\end{equation}

 \noindent{\bf A Representer Theorem for the MNI Problem.}
A key result from Proposition 7 in \cite{wang2023sparse} characterizes the extreme points of the solution set of problem \eqref{general MNI in lemma}. This result is formally stated in the following lemma.

\begin{lemma}\label{lemma: representer for MNI}
Let $B$ be a Banach space with a pre-dual $B_{*}$. Suppose that $\nu_{j} \in B_{*}$, $j \in \mathbb{N}_{n}$, are linearly independent, and let $\mathbf{z} \in \mathbb{R}^{n}\backslash\{\mathbf{0}\}$. If  $\mathcal{V}$ and   ${M_{\mathbf{z}}}$ are defined as above, and there exists $\hat\nu\in\mathcal{V}$ satisfying 
\begin{equation}\label{Non-empty-set in lemma}
(\|\hat\nu\|_{B_*}\partial\|\cdot\|_{B_*}(\hat\nu))\cap{M_{\mathbf{z}}}\neq\emptyset,
\end{equation}
then for any extreme point $\hat f$ of the solution set of \eqref{general MNI in lemma}, there exist coefficients $\gamma_j\in\mathbb{R}$, $j\in\mathbb{N}_{n}$, satisfying  $$
\sum_{j\in\mathbb{N}_{n}}\gamma_j=\|\hat\nu\|_{B_*}
$$ 
and elements $u_j\in\mathrm{ext}\left(\partial\|\cdot\|_{B_*}(\hat\nu)\right)$, $j\in\mathbb{N}_{n}$, such that
\begin{equation*}\label{eq: expansion of p in lemma}
\hat f=\sum\limits_{j\in\mathbb{N}_{n}} \gamma_j u_j.
\end{equation*}
\end{lemma}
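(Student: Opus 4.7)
The approach is to characterize the solution set of \eqref{general MNI in lemma} as the intersection of a weak${}^*$-compact convex set with an affine subspace of finite codimension, and then to apply a Carath\'eodory--Dubins type theorem in order to express its extreme points as finite convex combinations of extreme points of the convex set.

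First, I would identify the solution set explicitly. Set $A := \|\hat\nu\|_{B_*}\,\partial\|\cdot\|_{B_*}(\hat\nu)$, a convex subset of $B = (B_*)^*$. By hypothesis \eqref{Non-empty-set in lemma} there exists $f_{0} \in A \cap M_{\mathbf{z}}$; writing $f_{0} = \|\hat\nu\|_{B_*} u_{0}$ with $u_{0} \in \partial\|\cdot\|_{B_*}(\hat\nu)$ yields $\|f_{0}\|_{B} = \|\hat\nu\|_{B_*}$ and $\langle \hat\nu, f_{0}\rangle_{B} = \|\hat\nu\|_{B_*}^{2}$. Because $\hat\nu = \sum_{j=1}^{n} c_{j} \nu_{j}$ for some $c_{j}\in\mathbb{R}$, the value $\langle \hat\nu, f\rangle_{B} = \sum_{j} c_{j} z_{j}$ is constant on $M_{\mathbf{z}}$ and hence equals $\|\hat\nu\|_{B_*}^{2}$ for every feasible $f$. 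Combined with the duality bound $|\langle \hat\nu, f\rangle_{B}| \le \|\hat\nu\|_{B_*}\|f\|_{B}$, this implies $\|f\|_{B} \ge \|\hat\nu\|_{B_*}$ for all $f \in M_{\mathbf{z}}$, with equality exactly when $f \in A$. Therefore the solution set of \eqref{general MNI in lemma} coincides with $A \cap M_{\mathbf{z}}$.

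Next, I would exploit the compactness and extremal structure of $A$. By the Banach--Alaoglu theorem the closed ball of $B$ of radius $\|\hat\nu\|_{B_*}$ is weak${}^*$-compact, so $A$---being its intersection with the weak${}^*$-closed hyperplane $\{f\in B : \langle \hat\nu, f\rangle_{B} = \|\hat\nu\|_{B_*}^{2}\}$---is weak${}^*$-compact and convex. The Krein--Milman theorem then gives $A = \overline{\mathrm{co}}^{w^{*}}(\mathrm{ext}(A))$, and scaling shows $\mathrm{ext}(A) = \|\hat\nu\|_{B_*}\,\mathrm{ext}(\partial\|\cdot\|_{B_*}(\hat\nu))$. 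Moreover, the $n$ linear constraints defining $M_{\mathbf{z}}$ collapse to at most $n-1$ independent constraints on the affine hull of $A$, since the equality $\langle \hat\nu, f\rangle_{B} = \|\hat\nu\|_{B_*}^{2}$ already holds identically on $A$. The decisive step is then a Carath\'eodory--Dubins type theorem: if $A$ is a compact convex subset of a Hausdorff locally convex space and $L$ is an affine subspace of codimension $k$ in $\mathrm{aff}(A)$, then every extreme point of $A \cap L$ is a convex combination of at most $k+1$ extreme points of $A$. Applied with $k = n-1$ to an arbitrary extreme point $\hat f$ of $A \cap M_{\mathbf{z}}$, this produces nonnegative coefficients $\alpha_{1},\dots,\alpha_{n}$ with $\sum_{j=1}^{n} \alpha_{j} = 1$ and extreme points $u_{1},\dots,u_{n} \in \mathrm{ext}(\partial\|\cdot\|_{B_*}(\hat\nu))$ satisfying $\hat f = \|\hat\nu\|_{B_*}\sum_{j=1}^{n}\alpha_{j} u_{j}$. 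Setting $\gamma_{j} := \|\hat\nu\|_{B_*}\alpha_{j}$ yields the desired representation with $\sum_{j=1}^{n} \gamma_{j} = \|\hat\nu\|_{B_*}$.

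The main obstacle is the Carath\'eodory--Dubins step, since such results are delicate in an infinite-dimensional locally convex setting. I would handle it by pushing the problem into $\mathbb{R}^{n}$: the map $\mathcal{L}$ is weak${}^*$-to-norm continuous with $n$-dimensional range, so $\mathcal{L}(A)$ is a compact convex subset of $\mathbb{R}^{n}$. A face-chasing argument combining the finite-dimensional Carath\'eodory theorem applied to $\mathcal{L}(A)$ with a careful lifting back to $A$---exactly the construction used in \cite{wang2023sparse}---produces the required $n$ extreme points of $A$ from the given extreme point of the fiber $A \cap \mathcal{L}^{-1}(\mathbf{z})$, completing the proof.
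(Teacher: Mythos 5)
The paper does not actually prove this lemma: it is imported verbatim as Proposition~7 of \cite{wang2023sparse}, so there is no in-paper argument to compare against. Judged on its own, your plan is correct and reconstructs the strategy of the cited source. The two load-bearing steps are both right: (i) the identification of the solution set of \eqref{general MNI in lemma} with $\bigl(\|\hat\nu\|_{B_*}\partial\|\cdot\|_{B_*}(\hat\nu)\bigr)\cap M_{\mathbf{z}}$ via the duality estimate $\|\hat\nu\|_{B_*}^2=\langle\hat\nu,f\rangle_B\le\|\hat\nu\|_{B_*}\|f\|_B$ on $M_{\mathbf{z}}$ (this is exactly Proposition~33 of \cite{wang2023sparse}); and (ii) the observation that the constraint $\langle\hat\nu,\cdot\rangle_B=\|\hat\nu\|_{B_*}^2$ holds identically on $A:=\|\hat\nu\|_{B_*}\partial\|\cdot\|_{B_*}(\hat\nu)$, which drops the effective codimension to $n-1$ and is precisely what yields $n$ rather than $n+1$ terms in the expansion. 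The only place where your write-up leans on an assertion rather than an argument is the Carath\'eodory--Dubins step: Dubins' theorem in its usual form delivers that an extreme point of $A\cap L$ is a $(n-1)$-extreme point of $A$, and passing from this to ``convex combination of at most $n$ extreme points of $A$'' uses the weak$^*$ compactness of $A$ (the minimal face containing the point is a compact convex set of dimension at most $n-1$, so Minkowski--Carath\'eodory applies and its extreme points are extreme points of $A$). You flag this as the main obstacle and your proposed remedy (projecting by $\mathcal{L}$ into $\mathbb{R}^n$ and lifting) is the route taken in \cite{wang2023sparse}, so I regard the proof as correct modulo an explicit citation or write-out of that one classical ingredient.
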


\noindent{\bf Remarks.}
As noted in \cite{wang2023sparse}, the element $\hat\nu$ satisfying \eqref{Non-empty-set in lemma} can be obtained by solving a dual problem associated with \eqref{general MNI in lemma}. Furthermore, the solution set is a nonempty, convex and weakly$^*$ compact subset of $B$. By the Krein-Milman theorem, this implies that the set of extreme points of the solution set is nonempty. Additionally, any solution to problem \eqref{general MNI in lemma} can be expressed as the weak$^*$ limit of a sequence in the convex hull of the set of extreme points. 

\section*{Acknowledgement}

The authors would like to express their gratitude to Professor Raymond Cheng for helpful discussion on the existence of a Banach space of functions that is isometrically isomorphic to a given Banach space. R. Wang is supported in part by the Natural Science Foundation of China under grant 12171202 and by the National Key Research and Development Program of China under grants 2020YFA0714101; Y. Xu is supported in part by the US National Science Foundation under grant DMS-2208386, and by the US National Institutes of Health under grant R21CA263876. M. Yan is supported in part by Cheng Fund from College of Science at Old Dominion University. All correspondence should be addressed to Y. Xu.









\end{document}